\DeclareMathOperator{\sym}{sym}
\definecolor{jz}{rgb}{0.1,0.45,0.1}
\newcommand{\jz}[1]{{\color{jz}{#1}}}
\definecolor{rr}{rgb}{0,0,1}
\newcommand{\rrtd}[2][]{\todo[color=rr!40,#1]{#2}}
\definecolor{ymm}{rgb}{1,0.53,0.0}
\newcommand{\dkl}{\mathcal{D}_{\rm KL}}
\newcommand{\R}{\mathbb{R}}
\newcommand{\N}{\mathbb{N}}
\newcommand{\bsv}{{\boldsymbol v}}
\newcommand{\bsw}{{\boldsymbol w}}
\newcommand{\bse}{{\boldsymbol e}}
\newcommand{\bsalpha}{{\boldsymbol \alpha}}
\renewcommand{\set}[2]{\{#1\,:\,#2\}}
\newcommand{\setl}[2]{ \{#1\, :\,#2 \}}
\newcommand{\setc}[2]{\left\{#1\, :\,#2\right\}}
\DeclareMathOperator{\tr}{tr}
\newcommand{\ii}{\mathrm{i}}
\newcommand{\dd}{\;\mathrm{d}}
\newcommand{\e}{\mathrm{e}}
\numberwithin{equation}{section}
\definecolor{plum}{rgb}{.4,0,.4}
\definecolor{BrickRed}{rgb}{0.6,0,0}
\def\ddefloop#1{\ifx\ddefloop#1\else\ddef{#1}\expandafter\ddefloop\fi}
\def\ddef#1{\expandafter\def\csname c#1\endcsname{\ensuremath{\mathcal{#1}}}}
\def\ddef#1{\expandafter\def\csname s#1\endcsname{\ensuremath{\mathsf{#1}}}}
\def\argmin{\operatornamewithlimits{arg\,min}}
	\def\tr{{\mathrm{tr}}}
\def\1{{\mathbf 1}}
\def\eps{\varepsilon}
\newtheorem{theorem}{Theorem}[section]
\newtheorem{lemma}[theorem]{Lemma}
\newtheorem{proposition}[theorem]{Proposition}
\newtheorem{corollary}[theorem]{Corollary}
\newtheorem{assumption}[theorem]{Assumption}
\newtheorem{remark}[theorem]{Remark}
\newtheorem{example}[theorem]{Example}
\newtheorem{definition}[theorem]{Definition}
\begin{document}

\title{Distribution learning via neural differential equations: minimal energy regularization and approximation theory}

\author{Youssef Marzouk\thanks{Massachusetts Institute of Technology, Cambridge, MA 02139, USA (\href{mailto:ymarz@mit.edu}{ymarz@mit.edu}, \href{mailto:zren@mit.edu}{zren@mit.edu})} \and Zhi Ren\footnotemark[1]
\and Jakob Zech\thanks{Heidelberg University, 69120 Heidelberg, Germany (\href{mailto:jakob.zech@uni-heidelberg.de}{jakob.zech@uni-heidelberg.de})} 
}

\date{\today}

\maketitle
	
\begin{abstract} 
Neural ordinary differential equations (ODEs) provide expressive representations of invertible transport maps that can be used to approximate complex probability distributions, e.g., for generative modeling, density estimation, and Bayesian inference. 
We show that for a large class of transport maps $T$, there exists a time-dependent ODE velocity field realizing a straight-line interpolation $(1-t)x + tT(x)$, $t \in [0,1]$, of the displacement induced by the map. Moreover, we show that such velocity fields are minimizers of a training objective containing a specific minimum-energy regularization. 
We then derive explicit upper bounds for the $C^k$ norm of the velocity field that are polynomial in the $C^k$ norm of the corresponding transport map $T$; in the case of triangular (Knothe--Rosenblatt) maps, we also show that these bounds are polynomial in the $C^k$ norms of the associated source and target densities. 
Combining these results with stability arguments for distribution approximation via ODEs, we show that Wasserstein or Kullback--Leibler approximation of the target distribution to any desired accuracy $\epsilon > 0$ can be achieved by a deep neural network representation of the velocity field whose size 
is bounded 
in terms of $\epsilon$, the dimension, and the smoothness of the source and target densities. 
The same neural network ansatz yields guarantees on the value of the regularized training objective.

\end{abstract}

\newcommand{\co}{[0,1]}
\newcommand{\ci}{[0,1]^i}
\newcommand{\cd}{[0,1]^d}
\newcommand{\essinf}{{\rm ess\,inf}}
\newcommand{\supp}{{\rm supp}}
\tableofcontents

\section{Introduction}\label{sec:intro}

Sampling from an arbitrary probability distribution is a central problem in computational statistics and machine learning. Transportation of measure \citep{OptimalOldAndNew} offers a useful approach to this problem: the idea is to construct a measurable map that \emph{pushes forward} a relatively simple source distribution (usually chosen to be uniform or standard Gaussian) to the target probability distribution. One can then simulate from the target distribution by drawing samples from the source distribution and evaluating the transport map. This construction is useful for both generative modeling \citep{ffjord,glow} and variational inference \citep{moselhy2012,PlanarFlow}. When the map is invertible, one can also estimate the \emph{density} of the target measure by evaluating the density of the pushforward of the source distribution under the transport map.


Many parameterizations of such transports, ranging from monotone polynomial-based approximations 
\citep{measure-transport,ZM1,ZM2,baptista23}
to input-convex neural networks \citep{huang2021convex,2310.16975} have been proposed. In the machine learning literature, normalizing flows \citep{cms/1266935020,NormalizingFlowIntro} represent transport maps by composing a sequence of relatively simple invertible functions. It is typically required that the Jacobian determinant of the resulting map be easily computable, as it appears in expressions for the pushforward or pullback densities via the change-of-variables formula. Common constructions for normalizing flows including planar and radial flows \citep{PlanarFlow}, affine coupling flows \citep{CouplingFlows}, autoregressive flows \citep{autoregressiveflow}, and neural autoregressive flows \citep{NeuralAutoFlow}.

More recent work in deep learning has elucidated connections between deep neural networks and differential equations \citep{TransportAnalysisofDL,NumericalSchemeODE,PDEmotivatedNN}. In particular, neural ordinary differential equations (neural ODEs) \citep{NeuralODE} are ODEs whose velocity fields are represented by neural networks. A neural ODE can be understood intuitively as the ``continuous-time limit'' of a normalizing flow, insofar as the flow map of the ODE is given by the composition of infinitely many incremental transformations. Neural ODEs can be used to represent distributions \citep{ffjord}---again for the purposes of generative modeling, inference, or density estimation---as follows. Let $\pi$ denote the target distribution from which we wish to sample and let $\rho$ denote the source distribution. Solving the initial value problem
\begin{equation}\label{eq:ODE}
    \begin{cases}
        \frac{d }{dt} X(x,t) &= f \left (X(x,t), t \right )\\
                    X(x, 0) &= x
    \end{cases}
\end{equation}
up to time $t=1$, for some initial condition $x \in \Omega_0$, yields a flow map $x \mapsto X^f(x, 1)$. The goal is to learn a Lipschitz continuous velocity field $f$, parameterized as a neural network, such that $x\sim \rho$ implies $X^f(x,1)\sim \pi$; in other words, the flow map pushes forward the source distribution to the target distribution.
    
Such dynamical representations of transport enjoy several desirable properties. Invertibility of $x\mapsto X^f(x,1)$ is guaranteed for any Lipschitz $f$ satisfying suitable boundary conditions, as one can solve \eqref{eq:ODE} backward in time. Therefore, and in contrast to other methods that directly parameterize the displacement (such as invertible neural networks \citep{InvNN}, normalizing flows \citep{NormalizingFlowIntro}, or other transport maps \citep{baptista23}), no further restrictions need to be imposed on the vector field $f$ that is to be learned. The density $\eta$ of the ODE state at time $t$, 
i.e., the density of $X(\cdot,t)_\sharp \rho$, 
can also easily be computed as it obeys the dynamics
$$\frac{d\log \eta(x,t)}{d t} = -\tr \left (\nabla_X f(X(x,t), t) \right ),$$ 
which is known as the \emph{instantaneous change of variables formula}; see \citet{NeuralODE}.



To understand the usefulness of neural ODEs in learning distributions, there are at least three natural questions to ask.
{First}, it is known that when the source and target measures are well-behaved (for example, both absolutely continuous with respect to the Lebesgue measure), there are in general infinitely many transport maps that push forward one measure onto the other. Moreover, even if we require the time-one flow map $x\mapsto X^f(x,1)$ to be a particular transport map $T$, there are in general still infinitely many velocity fields $f$ that realize $T$. It has thus been observed \citep{OTFlow,HowToTrain} that without any form of regularization, learned ODE trajectories connecting $x$ to $T(x)$ may be very irregular. It is therefore natural to ask how we can regularize the training objective to improve the training process.
{Second}, given a regularized training objective, we would like to characterize the structure of its minimizers, and in particular to quantify how well a neural network of a given size (e.g., width, depth, sparsity) can approximate the velocity field corresponding to an optimal solution. 
{Third}, we would like to know how to bound the distance between the target measure and the pushforward of the source measure under the ODE-induced flow map, when a neural network is used to approximate the velocity field $f$. 

Our work is the first attempt to address these questions in a unified way. One of our key goals is to obtain explicit rates for distribution approximation using regularized neural ODEs, linking properties of the source and target measures to bounds on the size of a deep neural network representation of the velocity field that achieves a given distributional approximation error. Furthermore, we aim to connect this distribution approximation error to the value of the regularized training objective.



\subsection{Contributions}

We summarize the main contributions of this paper as follows: 
\begin{itemize}

\item \textit{Realizing straight-line trajectories.} For a large class of transport maps $T$, we show that there exists a corresponding time-dependent velocity field $f(x,t)$ achieving the interpolation $(1-t)x + tT(x)$ and hence producing straight-line trajectories. In a Lagrangian frame, these trajectories have constant velocity and hence zero acceleration.

\item \textit{Regularity of the space-time domain.} We show that, under certain conditions, the space-time domain covered by such ODE trajectories is a Lipschitz domain, and thus admits suitable extensions that enable the application of existing neural network approximation results.

\item \textit{Minimum energy regularization.} We propose a new regularization scheme for ODE velocity fields, based on penalizing the average kinetic energy of trajectories. We characterize the minimizers of the resulting optimization problem (whose objective is the sum of a divergence and this penalty), and show that these time-dependent velocity fields take the straight-line interpolation form above.

\item \textit{Regularity of the ODE velocity field.} We derive explicit upper bounds for the $C^k$ norm of a straight-line velocity field that are polynomial in the $C^k$ norm of the corresponding transport map $T$. This development involves applying a multivariate Fa\`{a} di Bruno formula in Banach spaces, a procedure that may be of independent interest. 

\item \textit{Explicit links to $C^k$-smooth densities.} 
For specific transports $T$, in particular \emph{triangular} (Knothe--Rosenblatt) transport maps, we then construct upper bounds on the $C^k$ norm of the corresponding {straight-line} velocity field that 
have polynomial dependence on
the $C^k$ norm of
the source and target densities. Along the way, we obtain an explicit upper bound for the $C^k$ norm of the Knothe--Rosenblatt map that depends polynomially on the $C^k$ norms 
of the source and target densities.


\item \textit{Distributional stability.} We relate the approximation error in the velocity field to the error in the distribution induced by the time-one flow map of the resulting ODE, in both Wasserstein distance and Kullback--Leibler (KL) divergence.

\item \textit{Neural network approximation and optimization.} Combining our analysis of the regularity of the velocity field with the preceding stability results, we show that approximation of the target distribution to any desired accuracy $\eps > 0$, in Wasserstein distance or KL divergence, can be achieved by a deep neural network representation of the velocity field whose depth, width, and sparsity are bounded explicitly in terms of $\eps$ and 
the smoothness and dimension of the source and target densities.
%
We then provide guarantees for minimizers of the regularized optimization problem over such neural network classes, showing that there exist velocity fields that render the objective---and hence both the distribution approximation error and the departure from minimum kinetic energy---arbitrarily small.




\end{itemize}

\subsection{Related work}


Several papers have studied the approximation power of discrete normalizing flows. \citet{DisAppro2} investigate basic flow structures (e.g., planar flows, radial flows, Sylvester flows, Householder flows) for $L^1$ approximation of a target density on $\mathbb{R}^d$. The authors establish a universal approximation result for $d=1$, but show partially negative results for $d>1$: there exist distributions that cannot be exactly coupled by such flows, and there are other distributions for which accurate approximation requires composing together a prohibitive number of layers. \citet{DisAppro1}, on the other hand, show that normalizing flows based on the so-called ``affine coupling'' construction are universal approximators of diffeomorphisms, and consequently that their pushforward distributions converge weakly to any desired target as the complexity of the flow increases in suitable way. Additional universal approximation results have been developed for more specific flow architectures \citep{NeuralAutoFlow}. However, none of these works characterizes the \emph{rate} of convergence of the approximation---i.e., how the distribution approximation error that can be achieved scales with the size of the model.

For triangular (Knothe--Rosenblatt) transport maps on $[0,1]^d$, \citet{ZM1} develop a complete approximation theory under the assumption of analytic source and target densities, for neural network or sparse polynomial representations of the map. These results encompass approximation of the maps themselves, but also distribution approximation via the pushforward of a given source distribution by the approximate map. \citet{ZM2} extend this analysis to triangular transport maps in infinite dimensions, i.e., on $[0,1]^\infty$. \citet{baptista2023approximation} provide a general framework for analyzing the error of variational approximations of distributions realized using transport maps, and shows how this framework can be applied in specific situations to derive approximation rates.

The preceding works considered approximation of the transport map itself, i.e., direct representations of the `displacement' of points from the support of the source to the support of the target. In contrast, other recent efforts have addressed approximation issues in \textit{dynamic} representations of transport. Most of these have focused on neural ODEs. 
\citet{SupApproximation} show that neural ODEs are universal approximators of smooth diffeomorphisms on $\mathbb{R}^d$ in appropriate Sobolev norms. \citet{DynamicalSystem} adapt ideas from dynamical systems to show that neural ODEs are universal approximators of continuous functions from $\mathbb{R}^d$ to $\mathbb{R}^m$ (hence, not only diffeomorphisms) in a $L^2$ sense, for $d \geq 2$. Yet these universal approximation results do not characterize approximation \emph{rates}, e.g., relate bounds on function approximation error to the size of the network representing the velocity field. Moreover, we note that universal {function} approximation results are not necessarily relevant to {distribution} learning: that is, universal approximation of distributions does not require universal function approximation.


    

\citet{NeuralODE-Control} study distribution approximation with neural ODEs, and prove universal approximation for certain target distributions in Wasserstein-1 distance, using an approach that is discrete and constructive. Specifically, they analyze neural ODE-type models from a controllability perspective, explicitly constructing piecewie constant approximations of the target density using a neural network velocity field with ReLU activations. Their analysis does not consider higher-order smoothness, however, and their velocity construction is different from that considered here. In subsequent work, \citet{alvarez2024constructive} show that ReLU velocity fields chosen to be piecewise constant in time can approximate a target distribution arbitrarily closely in relative entropy. 
 

As mentioned earlier, several other works identify neural ODE velocity fields via a \textit{regularized} training objective, i.e., by minimizing a linear combination of a statistical divergence (or negative log-likelihood) and some regularization term.
\citet{HowToTrain} argue that a good way of measuring the regularity of
the velocity field is through the ``acceleration'' experienced by a ``particle'' $X^f(x,t)$ starting at some point $x$. This acceleration is the total derivative of $f$ in time: 
\begin{equation}\label{eq:straightlinereg}
  \frac{ D f(X,t)}{D t} = \nabla_X f(X,t) \cdot \frac{ \partial X}{ \partial t} + \frac{\partial f(X,t)}{ \partial t} = \nabla_X f(X,t) \cdot f(X,t) + \frac{\partial f(X,t)}{\partial t}.
\end{equation}
When this term is zero, particle trajectories will be straight lines. Since the Jacobian matrix $\nabla_X f(X,t)$ is
in general not easily accessible, \citet{HowToTrain} choose to
implicitly penalize this term by penalizing $|f(X,t)|^2$ and the
Frobenius norm $\|\nabla_X f(X,t)\|_F^2$ instead. Similar to
\citet{ffjord}, stochastic methods are used  to
estimate $\|\nabla_X f\|_F^2$ and $\tr(\nabla_X f(X,t))$ in training, where the latter is used in the change of variables formula for the log-likelihood. In more recent work, \citet{OTFlow} propose a discretize-then-optimize
approach to training neural ODEs, where a ResNet structure is used to
implement the underlying neural network in the ODE. This approach
enables exact computation of the Jacobian matrix as well as its trace
from the recursive structure of the ResNet. Then, automatic
differentiation is used to update the parameters in the neural
network, instead of solving the adjoint equation as in \citet{ffjord}
and \citet{HowToTrain}. By adopting this discretize-then-optimize approach, we propose to penalize \eqref{eq:straightlinereg} directly. We will show that the velocity field $f$ making this term zero is the unique velocity field that yields the minimal kinetic energy among all velocity fields $f$ that produce the same transport map $T$ at time one; hence it is termed the \textit{minimal energy regularization}.


While the neural ODE framework learns the velocity field by minimizing the KL divergence from the target distribution to the pushforward distribution of the source under ODE flow maps, another recent line of research  aims to specify the velocity field \textit{a priori} using conditional expectations and to learn the velocity field directly via least-squares regression \citep{StochasticInterpolant1, StochasticInterpoalnt2, FlowMatching, RectifiedFlow}. However, the same velocity field approximation questions and distributional stability questions are present in that setting as well. We note that the approximation results we develop in this work are independent of training scheme, and in particular, the straight-line velocity fields we analyze here are central to the rectified flows proposed in \citet{RectifiedFlow}.




        
        

Indeed, there are a variety of  related \textit{distributional stability} results in recent literature (cf.\ Section~\ref{sec:DistApproximation}), addressing the question of how error in the pushforward distribution under an ODE flow map (in some distance or divergence) is controlled by error in the velocity field (in some norm). \citet{BentonErrorBoundsFlowMatching} show that $W_2$ error in distribution is controlled by $L^2$ approximation error of the true velocity field and the time-averaged spatial Lipschitz constant of the approximate flow. In a study of the convergence of continuous normalizing flows, \citet{GaoConvergenceofContinusNormalizingFlows} obtain a stability result almost exactly the same as that in \citet{BentonErrorBoundsFlowMatching}, where $W_2$ error in distribution is controlled by $L^2$ error in velocity field times the exponential of spatial Lipschitz constant of the velocity field.
    \citet{LiProbabilisticODEConvergence} analyze a discrete-time version of the probability flow ODE, where TV error in distribution is bounded by terms involving the $L^2$ error in the score function and in its Jacobian. Again in the setting of probability flow ODEs, \citet{HuangProbabilisticODEConvergence} start in continuous time and then considers discretization using a Runge--Kutta scheme. At the continuous level, it is shown that TV error in distribution is controlled by 
    $L^2$ error in the approximation of the score function and
    the first and second
derivatives of the estimated score; at the discrete level, it is shown that the $p$-th order integrator also converges under an additional assumption that the estimated score function’s first $p + 1$ derivatives are bounded. In our work, we show that $W^p$ error in the distribution, for $p \in [1, \infty]$, is controlled by the space-time $L^\infty$ (i.e., $C^0$) error and spatial Lipschitz constant of $f$, on compact domains; these are further related to properties of the densities. In addition, we obtain that distribution approximation error in KL is controlled by the $C^1$ norm of $f$, again on compact domains.

        
        


There are also some results linking properties of the velocity field (e.g., Lipschitz constant in space or time) to properties of the underlying densities. In \citet{BentonErrorBoundsFlowMatching}, the time-averaged spatial Lipschitz constant is related to assumptions on the regularity of all the intermediate distributions between $t=0$ and $t=1$, along with some Gaussian smoothing; an upper bound is obtained that depends on properties of the chosen interpolant. We note that their regularity assumption is rather different than the $C^k$ smoothness we assume here, as they do not consider higher-order smoothness of the velocity field. \citet{GaoConvergenceofContinusNormalizingFlows} focus on flow matching with linear interpolation. It is shown that the Lipschitz constant of the target velocity field in both the space and time variables is bounded under assumptions of log-concavity/convexity of distributions and boundedness of the domain. In addition, they show that the velocity field itself grows at most linearly with respect to the space variable. No Gaussian smoothing is used in their setting, but they require an early stopping before reaching time $t = 1$. Both \citet{BentonErrorBoundsFlowMatching} and \citet{GaoConvergenceofContinusNormalizingFlows} are specific to certain stochastic interpolants, which are different than the straight-line ansatz we analyze here. Also, they do not consider higher-order smoothness of the velocity field or derive upper bounds that are explicit in the densities.


With regard to neural network approximation results, \citet{GaoConvergenceofContinusNormalizingFlows}
    construct neural network classes that capture the  Lipschitz properties of the velocity field, and derive rates of approximation in the $L^\infty$ sense. Our companion paper \citep{StatisticalNODE} derives explicit neural network approximation rates for general $C^k$ velocity fields, but its main focus is on statistical finite sample guarantees for neural ODEs trained through likelihood maximization, different from our focus here.


There are also stochastic differential equation (SDE) and specifically neural SDE methods for distribution learning \citep{NeuralSDE, SongDiffusionModels}. However, they are rather different than the deterministic ODE approach, and again are not the focus of this work.

\section{Preliminaries}
\subsection{Notation and definitions}
 {\bf ODEs and flow maps.}
 We write $X(x,t)$ or $X_f(x,t)$ for the solution of \eqref{eq:ODE}
 with initial condition $x$ at time $t=0$, i.e.,
  \begin{equation}\label{eq:flowmap}
    X_f(x,t) = x + \int_0^tf(X_f(x,s),s)ds.
  \end{equation}
  Given an initial distribution $\pi_0$, we write $\pi_t$
  or $\pi_{f,t}$ for the 
  pushforward measure $X_f(\cdot,t)_\sharp \pi_0$ and $\pi(x,t)$ or $\pi_{f}(x,t)$ for the corresponding density. 

\medskip

  \noindent {\bf Vectors and multiindices.} 
  For $x=(x_1,\dots,x_d)^\top\in\R^d$, $|x|$ is the Euclidean norm.
  With $\N=\{0,1,2,\dots\}$, we denote multiindices by bold letters
  such as $\bsv = (v_1,v_2,\dots,v_d) \in \mathbb{N}^d$, and we use the
  standard multiindex notations $|\bsv| = \sum_{i = 1}^d v_i$ and
  $\bsv! = \prod_{i=1}^d(v_i!)$.
  Additionally, $x^\bsv = \prod_{i=1}^d x_i^{v_i}$ and 
  $x_{[k]} \coloneqq (x_1,\dots,x_k)\in\mathbb{R}^k$ for all $k\leq d$.
  For two multiindices $\bsv$, $\bsw\in\N^d$, $\bsw \prec \bsv$ if and
  only if one of the following holds: \textit{(i)} $|\bsw| < |\bsv|$,
  \textit{(ii)} $|\bsw| = |\bsv|$ and there exists a $k < d$ such
  that $w_1 = v_1$, \dots, $w_k = v_k$, but $w_{k+1} < v_{k+1}$.

\medskip

  \noindent {\bf Derivatives.} For $f\in C^1(\R^d,\R^m)$,
  $\nabla f:\R^d\to\R^{d\times m}$ is the gradient. In case $f$
  depends on multiple variables, we write, for example,
  $\nabla_xf(x,t)$. For a multiindex
  $\bsv = (v_1,v_2,\dots,v_d) \in \mathbb{N}^d$, where
  $\N=\{0,1,2,\dots\}$, we write
  $D^\bsv f(x)= \frac{\partial^{|\bsv|}}{\partial x_1^{v_1} \dots 
    \partial x_d^{v_d}} f(x)$ for the partial derivative and similarly to the notation above, $D_x^\bsv f(x,t)$.

\medskip

  \noindent {\bf Function spaces.}
  For two Banach spaces $X$, $Y$ 
  and $n\in\mathbb{N}$ we denote by
  $\cL^n(X;Y)$
  the space of all $n$-linear maps from $X^n\to Y$, and by
    $\cL^n_{\sym}(X;Y)$ the space of all symmetric $n$-linear maps
    from $X^n\to Y$ (i.e., $A\in\cL^n_{\sym}$ iff
    $A(x_{\sigma(1)},\dots,x_{\sigma(n)})$ is independent of the
    permutation $\sigma$ of $\{1,\dots,n\}$). The norms
  on these spaces are defined as
  \begin{equation*}
    \|A\|_{\cL^n(X;Y)} \coloneqq \sup_{\|x_i\|_X\le 1} \|A(x_1,\dots,x_n)\|_Y,\qquad
    \|A\|_{\cL^n_{\sym}(X;Y)} \coloneqq \sup_{\|x\|_X\le 1} \|A(x,\dots,x)\|_Y.
  \end{equation*}
  We recall that if $A\in\cL^n_{\sym}(X;Y)$ and $B\in\cL^n(X;Y)$
    such that $A(x^n)=B(x^n)$ for all $x\in X$, then, see e.g.,
    \cite[14.13]{chae}, 
  \begin{equation}\label{eq:symnorm}
    \|A\|_{\cL^n_{\sym}(X;Y)}
    \le \|B\|_{\cL^n(X;Y)}\le\exp(n)\|A\|_{\cL^n_{\sym}(X;Y)}.
  \end{equation}

  Recall that for $f\in C^k(X,Y)$, the $k$-th Fr\'echet derivative
  $D^kf(x)$ of $f$ at $x\in X$ belongs to $\cL^k_{\sym} (X;Y)$. For
  $S\subseteq X$ open and $f\in C^k(S,Y)$ we write
  \begin{equation}\label{eq:Ck}
    \|f\|_{C^k(S)} \coloneqq \sup_{n\le k}\sup_{x\in S}\|D^n f(x)\|_{\cL^n(X;Y)}.
  \end{equation}

%

  For $X=\R^d$ and $S\subseteq X$, we use the usual notation
  $W^{k,p}(S)$, $k\in\N$, $p\in [1,\infty]$, to denote functions with
  weak derivative up to order $k$ belonging to $L^p(S)$. As a norm,
  we will use
  \begin{equation*}
    \|f\|_{W^{k, p}(S)} =
    \begin{cases}
      (\sum_{|\bsalpha|\leq k} \|D^\bsalpha f\|^p_{L^p(S)})^\frac{1}{p} & \text{if $ 1\leq p < \infty$}\\
      \max_{|\bsalpha|\leq k} \|D^\bsalpha f\|_{L^\infty(S)}& \text{if
        $ p = \infty$}.
    \end{cases}
  \end{equation*}

\noindent{\bf Divergences between distributions.} 
Let $(\Omega, \mathcal{F}, \mu)$ be a probability space. For two
probability measures $\rho$ and $\pi$ such that $\rho\ll\mu$, $\pi\ll\mu$, the information divergences we consider are the following:
\begin{itemize}
\item KL (Kullback--Leibler) divergence: Assuming also $\rho\ll\pi$, we define
  $\text{KL}(\rho, \pi) = \int_\Omega
  \log \frac{d\rho}{d\pi}(x) \rho(dx)$.
    
\item If $(\Omega, m)$ is also a metric space, then
  the Wasserstein distance of order $p$ is defined as:
  $$W_p(\rho, \pi) = \left(\inf_{\gamma\in\Gamma(\rho, \pi)}\int_{\Omega\times\Omega} m(x,
    y)^p\gamma(dxdy)\right)^\frac{1}{p},$$ where $\Gamma(\rho, \pi)$ is
  the set of all measures on $\Omega \times \Omega$ with marginals
  $\rho$ and $\pi$. In $\mathbb{R}^d$, this is simply
  $W_p(\rho, \pi) = \left(\inf_{\gamma\in\Gamma(\rho,
      \pi)}\int_{\mathbb{R}^d\times\mathbb{R}^d}|x-y|^p\gamma(dxdy)\right)^\frac{1}{p}$.
    
\end{itemize}

\subsection{Problem setup}\label{sec:setup}
In the following we denote by $\pi$ a \emph{target measure} and
  by $\rho$ a \emph{source measure} on $\R^d$. Our general goal is to
  sample from the target. The source measure is an auxiliary measure
  that is easy to sample from, and may be chosen at will. Throughout we
  work under the following assumptions:


\begin{assumption}[compact support]\label{ass:dens1}
  With $\Omega_0 \coloneqq {\rm supp}(\pi)$ and
  $\Omega_1 \coloneqq {\rm supp}(\rho)$, it holds that $\Omega_0$,
  $\Omega_1\subseteq\R^d$ are compact and convex sets. Both $\rho$
  and $\pi$ are absolutely continuous with respect to the Lebesgue
  measure.
\end{assumption}

By abuse of notation, we denote the (Lebesgue-) densities of $\rho$
and $\pi$ by $\rho(x)$ and $\pi(x)$, respectively.
  
\begin{assumption}[regularity]\label{ass:dens2}
  There exist constants $L_1 > 1$, $L_2>0$ and $k\in\N, k \geq 2$ such that
  $\|\pi\|_{C^k(\Omega_0)} \leq L_1$, $\|\rho\|_{C^k(\Omega_1)} \leq L_1$ and  $\inf_{x\in\Omega_0}\pi(x) \geq L_2$, $\inf_{x\in\Omega_1}\rho(x) \geq L_2$. 
\end{assumption}



We consider two types of problems:
\begin{itemize}
\item[P1] The target measure $\pi$ is known through a collection of
  iid samples. This is the problem considered in, e.g., \citet{ffjord,HowToTrain,OTFlow}. The goal is
  to 
  learn 
  a velocity field $f$ in \eqref{eq:ODE}
  such that with initial distribution chosen to be the target, $\pi_{0} = \pi$, the time-one distribution satisfies $\pi_{f,1}(\cdot) = X_f(\cdot,1)_\sharp \pi \approx \rho$. Since the flow map $x \mapsto X_f(x,1)$ is by construction invertible (and its inverse can be evaluated by solving \eqref{eq:ODE} backwards in time), one can draw new samples from the source measure and apply the inverse of the flow map to generate (approximate) samples from $\pi$. The learned flow map can also be used, without inversion, to estimate the density of $\pi$.
  

\item[P2] The target measure is known up to a normalizing constant;
  that is, we can evaluate the unnormalized target density $\Tilde{\pi}$. This setting is ubiquitous in Bayesian statistics, since the posterior normalizing constant of a Bayesian model is usually unavailable. This problem is in the reverse direction of the previous problem \citep{measure-transport,PlanarFlow,moselhy2012}; that is, we choose the initial distribution to be the source distribution, $\pi_0 = \rho$, and learn a velocity field $f$ such that $\pi_{f,1}(\cdot) = X_f(\cdot,1)_\sharp \rho \approx \pi$. 
    
\end{itemize}


From the approximation and algorithmic perspectives, there is no essential difference between problems P1 and P2 above. In both cases, algorithms for learning the velocity field $f$ require: \textit{(i)} a sample from the chosen initial distribution and \textit{(ii)} the ability to evaluate the \emph{desired} time-one density up to a normalizing constant. For the rest of the paper, we will thus adopt the setting of P1 (with initial distribution for the ODE system chosen to be the target $\pi$). Our results can be translated to P2 simply by exchanging $\pi$ and $\rho$. 

The objective functional considered in this work takes the following form: 
\begin{equation}\label{Jobjective}
J(f) = \mathcal{D}(X_f(\cdot, 1)_\sharp\pi, \rho) + R(f).
\end{equation}
The first part of the objective is an information divergence between two probability distributions (for example, KL, Wasserstein, etc.). The second part is a regularization term that follows from the discussion in Section \ref{sec:intro}: we would like the trajectory of the ODE, starting from any initial condition, to be a straight line with constant velocity. In other words, we would like the acceleration in a Lagrangian frame, $d f (X(x,t), t) / d t$, to be zero for all $x \in \Omega_0$ and $t \in [0,1]$. To this end, we integrate the squared acceleration from \eqref{eq:straightlinereg} along the trajectory of a particle $x\in\Omega_0$:
\begin{equation}
R(x,t) = \int_0^t \left \vert \left ( \nabla_X f(X(x,s),s) \right) f(X(x,s),s) +
\partial_s f(X,s) \right \vert^2 ds,    
\end{equation}

and 
\begin{equation}\label{eq:R}
R(f) = \int_{\Omega_0}\int_0^1 \left \vert \left ( \nabla_X f(X(x,s),s) \right) f(X(x,s),s) +
\partial_s f(X,s) \right \vert^2 dsdx      
\end{equation}

We comment here that while our theoretical analysis works for general divergence $\mathcal{D}$, KL-divergence is the most common objective used in practice. For this purpose, we derived the training algorithm for it in Appendix \ref{app:training}.

By the Picard-–Lindel\"{o}f theorem \citep{ArnoldODE}, existence and uniqueness of solutions to the ODE \eqref{eq:ODE} requires that the velocity field $f(x,t)$ be continuous in $t$ and Lipschitz continuous in $x$. Therefore, searching over the space of functions 
  \begin{equation}\label{eq:V}
    \mathcal{V} = \setc{f: \mathbb{R}^d\times[0,1]\rightarrow\mathbb{R}^d}{f \ \text{is Lipschitz continuous in } x, \text{continuous in }t }, \end{equation}
we have the following optimization problem:
  \begin{equation*} \label{eq:OP} \tag{OP}
    \begin{aligned}
      & \underset{f\in\mathcal{V}}{\text{minimize}}
      & & J(f)\\
      & \text{with}
      & & R(f) \text{ defined in } \eqref{eq:R}
    \end{aligned}
  \end{equation*}

\begin{remark}
  In practice, the conditions of the Picard–Lindel\"{o}f theorem will
  always be satisfied for a neural network of finite size with Lipschitz activation functions. In particular, these conditions hold true for ReLU networks, which is what we consider in our theoretical analysis.
\end{remark}

\section{Existence and structure of minimizers}\label{sec:minimizers}


The objective $J(f)$ is nonnegative, since it is the sum of an information divergence and a nonnegative regularizer.  Moreover, as we will see, the optimal solution will make both terms in the objective $J(f)$ zero under our assumptions. 
First, we state necessary and sufficient conditions on a transport map $T$ such that there exists a velocity field $f$ whose time-one flow map yields straight-line trajectories from $x$ to $T(x)$.

\subsection{Existence}\label{subsec:existence}
\begin{lemma}\label{lemma:Tinjective}
  Let $\Omega_0\subseteq\R^d$ be convex and
    $T\in C^1(\Omega_0,\R^d)$ such that $\det \nabla_x T(x)\neq 0$ for
    all $x\in\Omega_0$.  Then $T$ is injective.  
\end{lemma}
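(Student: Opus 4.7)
The plan is to prove injectivity by contradiction, using the convexity of $\Omega_0$ to reduce the question to an analysis along a single line segment. I would begin by assuming, toward a contradiction, that $T(x_0) = T(x_1)$ for some distinct $x_0, x_1 \in \Omega_0$. Convexity guarantees that $\gamma(s) := x_0 + s(x_1 - x_0) \in \Omega_0$ for every $s \in [0,1]$, so $\phi(s) := T(\gamma(s))$ is a $C^1$ closed loop in $\R^d$ with $\phi(0) = \phi(1)$.

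Next, I would apply the fundamental theorem of calculus componentwise together with the chain rule to obtain
\begin{equation*}
  0 \;=\; \phi(1) - \phi(0) \;=\; \int_0^1 \nabla T(\gamma(s))(x_1 - x_0)\,ds \;=\; M(x_1 - x_0),
\end{equation*}
where $M := \int_0^1 \nabla T(\gamma(s))\,ds$. Since $x_1 - x_0 \neq 0$, it suffices to show $M$ is invertible to conclude a contradiction. As a preliminary observation, continuity and nonvanishing of $\det \nabla T$ on the connected set $\Omega_0$ imply that $\det \nabla T$ has constant sign along $\gamma$, so the path $\{\nabla T(\gamma(s))\}_{s \in [0,1]}$ lies in a single connected component of $GL_d(\R)$, which we may assume to be $GL_d^+(\R)$ without loss of generality.

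The main obstacle is the invertibility of the averaged Jacobian $M$: a convex combination of invertible matrices need not itself be invertible, so the pointwise hypothesis does not immediately close the argument. My plan here is twofold. First, try to exploit additional structure of $\nabla T$ along $\gamma$; for instance, if the symmetric part of $\nabla T(\gamma(s))$ can be shown to be sign-definite, then $M$ inherits that definiteness and is automatically nonsingular. Failing that, a more topological route would combine the inverse function theorem, which yields local injectivity of $T$ at every point, with the simple connectedness of the convex set $\Omega_0$, using a Brouwer-degree or covering-space argument to rule out a nontrivial closed loop $\phi$. Bridging the pointwise invertibility of $\nabla T$ to the genuinely global injectivity of $T$ on $\Omega_0$ is the delicate step in which convexity must be used in an essential way.
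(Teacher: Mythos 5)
Your attempt does not close, and the gap you flag is genuine. After the (correct) identity $0=\phi(1)-\phi(0)=M(x_1-x_0)$ with $M:=\int_0^1\nabla T(\gamma(s))\,ds$, everything hinges on $M$ being nonsingular, and pointwise invertibility of $\nabla T$ along $\gamma$ does not deliver this. Neither of your fallback plans repairs the argument from the stated hypotheses: sign-definiteness of the symmetric part of $\nabla T$ is an extra assumption, not a consequence of $\det\nabla T\neq 0$; and the topological route fails because a local diffeomorphism on a simply connected --- even compact and convex --- domain need not be injective. Concretely, $T(x,y)=(e^{x}\cos y,\ e^{x}\sin y)$ on $[-1,1]\times[0,2\pi]$ has $\det\nabla T=e^{2x}>0$ everywhere, yet $T(0,0)=T(0,2\pi)$; so no Brouwer-degree or covering-space argument can succeed using only convexity and a nonvanishing Jacobian determinant, and in fact this example shows the statement cannot be derived from those hypotheses alone.

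For comparison, the paper's proof makes the same reduction to the segment but then asserts that, since $f(s):=T((1-s)x+sy)$ satisfies $f(0)=f(1)$, the mean value theorem yields a single $s\in(0,1)$ with $f'(s)=0$, i.e.\ $\nabla T(\gamma(s))(y-x)=0$, contradicting the nonvanishing determinant. That is Rolle's theorem, a scalar statement: applied componentwise it produces a different parameter for each component, and for $d\ge 2$ there need not be a common $s$ (take $f(s)=(\cos 2\pi s,\sin 2\pi s)$). Your integral identity is the correct vector-valued substitute, and it exposes exactly why this step does not follow; so rather than missing an idea that the paper supplies, you have isolated a real obstruction in the paper's own argument. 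To make the conclusion true one needs additional structure on $\nabla T$ --- for instance positive definiteness of its symmetric part, as in your first suggestion, which does force $M$ to be nonsingular --- and it is worth noting that the counterexample above has negative real eigenvalues at $y=\pi$, hence violates the spectral condition \eqref{eq:spectrum} under which the lemma is invoked later in the paper; some such strengthening of the hypotheses is what a correct proof must exploit.
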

\begin{proof}[Proof of Lemma \ref{lemma:Tinjective}]
  Assume not. Then there exist $x$, $y\in\Omega_0$ such that
    $x \neq y$ and $T(x) = T(y)$. For $s\in [0,1]$ set
    $f(s)  \coloneqq  T((1-s)x + sy)$. Since $f(0) = f(1)$, by the mean
    value theorem there exists $s\in (0,1)$ such that
  $f'(s) = 0$. Then $f'(s) = \nabla_x T((1-s)x + sy)(y-x) =
  0$. Since $v = y-x \neq 0$, we have
    $\det(\nabla_x T((1-s)x + sy))=0$, which is a contradiction.
\end{proof}	

Denote in the following, for $x\in\Omega_0\subset\R^d$,
  $t\in [0,1]$, and a map $T:\Omega_0\to\R^d$,
  \begin{equation}\label{eq:Tt}
    T_t(x)  \coloneqq  (1-t)x + tT(x),
  \end{equation}
  i.e., $[0,1]\ni t\mapsto (T_t(x),t)$ parameterizes the straight line of constant velocity
  between the points $(x,0)$ and $(T(x),1)$ in $\R^d\times [0,1]$. 
  We
  refer to $t\mapsto T_t(x)$ as the \emph{displacement interpolation}
  of $T$. We now investigate under which conditions these lines do not
  cross for different $x$, which is necessary for $T_t(x)$ to
  be expressible as a flow $X(x,t)$ solving \eqref{eq:ODE} for a
  certain $f$. In other words, we check under what conditions the map
  $\Omega_0\ni x\mapsto T_t(x)$ is injective for all $t\in [0,1]$. To
  state the following lemma, for $A\in\R^{d\times d}$ we let $\sigma(A) \coloneqq \setc{\lambda\in\R}{\det(A-\lambda I)=0}$ denote its spectrum.

  \begin{assumption}\label{ass:T}
    It holds that $T\in C^1(\Omega_0,\R^d)$ and
    \begin{equation}\label{eq:spectrum}
      (-\infty, 0]\cap \sigma(\nabla_x T(x)) = \emptyset\qquad
      \forall\,x\in\Omega_0.
    \end{equation}
  \end{assumption}



\begin{lemma}\label{lemma:spectrum}
  Let $\Omega_0\subseteq\R^d$ be convex and
    $T\in C^1(\Omega_0,\R^d)$. Then $\det(\nabla_x T_t(x))>0$ for all
    $x\in\Omega_0$ and all $t\in[0,1]$, iff \eqref{eq:spectrum} holds.
  
\end{lemma}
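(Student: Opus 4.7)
My plan is to reduce the determinant positivity of $\nabla_x T_t(x)$ on $[0,1]$ to a spectral condition on $A := \nabla_x T(x)$ via an explicit reparameterization in $t$, and then conclude with a continuity/IVT argument. Computing $\nabla_x T_t(x) = (1-t) I + t A$, I note that at $t = 0$ the Jacobian is $I$ and has determinant $1 > 0$ regardless of $A$, so only $t \in (0, 1]$ is at issue. For such $t$, pulling $t$ out of each column produces the identity
\[
\det\!\bigl((1-t) I + t A\bigr) \;=\; t^d \, \det\!\bigl(A - \lambda(t) I\bigr), \qquad \lambda(t) := -\frac{1-t}{t},
\]
and $t \mapsto \lambda(t)$ is a continuous bijection from $(0,1]$ onto $(-\infty, 0]$. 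Up to the positive prefactor $t^d$, the values of $\det(\nabla_x T_t(x))$ for $t \in (0,1]$ are thus exactly the values of the characteristic polynomial $\lambda \mapsto \det(A - \lambda I)$ on the non-positive real axis, which is precisely the spectral region singled out in \eqref{eq:spectrum}.

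For the ``if'' direction, assuming \eqref{eq:spectrum}, I have $\det(A - \lambda I) \neq 0$ for every $\lambda \in (-\infty, 0]$ and every $x \in \Omega_0$. Combined with the identity above and the starting value $1 > 0$ at $t = 0$, continuity of $t \mapsto \det\bigl((1-t) I + t A\bigr)$ together with the intermediate value theorem forces this determinant to remain strictly positive on all of $[0, 1]$, for every $x \in \Omega_0$. For the ``only if'' direction, I would argue by contrapositive: if some $x_0 \in \Omega_0$ admits a real eigenvalue $\lambda_0 \leq 0$ of $A_0 := \nabla_x T(x_0)$, then choosing $t_0 := 1/(1 - \lambda_0) \in (0, 1]$ gives $\lambda(t_0) = \lambda_0$, and the displayed identity yields $\det(\nabla_x T_{t_0}(x_0)) = t_0^d \det(A_0 - \lambda_0 I) = 0$, contradicting the assumed strict positivity at $(x_0, t_0)$.

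The only mildly technical point, and the main obstacle to record, is the reparameterization step: one must check that $t \in (0,1]$ maps continuously and bijectively onto $\lambda \in (-\infty, 0]$, so that \emph{avoiding the non-positive real spectrum of $A$} and \emph{keeping the determinant nonzero throughout $[0,1]$} truly coincide. Once that bookkeeping is in hand, the rest is essentially a one-line continuity observation together with a direct substitution in the contrapositive, and I do not anticipate any further obstacle.
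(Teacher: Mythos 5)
Your proposal is correct and follows essentially the same route as the paper's proof: both hinge on the reparameterization $t\mapsto -\frac{1-t}{t}$, which maps $(0,1]$ bijectively onto $(-\infty,0]$, together with continuity of $t\mapsto\det(\nabla_x T_t(x))$ and the value $\det(\nabla_x T_0(x))=1$ to upgrade nonvanishing to strict positivity. The only cosmetic difference is that you phrase the equivalence via the determinant identity $\det((1-t)I+tA)=t^d\det(A-\lambda(t)I)$, whereas the paper argues directly with eigenvectors in the kernel of $\nabla_x T_t(x)$; these are interchangeable.
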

\begin{proof}[Proof of Lemma \ref{lemma:spectrum}]
  Since $\nabla_x T_t(x) = (1-t)I + t\nabla_x T(x)$, the map
    $t\mapsto\det(\nabla_x T_t(x))\in\mathbb{R}$ is continuous.
    Because of $\nabla_x T_0(x) = I$, to prove the lemma it is
  sufficient to show that for every $x\in\Omega_0$,
  $\sigma(\nabla_x T(x)) \cap (-\infty, 0] = \emptyset$ iff
  $\det(\nabla_x T_t(x))\neq 0$ for all $t\in[0,1]$.

  Fix $x\in\Omega_0$.  Assume for contradiction that for some
  $t\in[0,1]$, we have $\det(\nabla_x T_t(x)) = 0$. Then there
    exists $v\neq 0$ such that $\nabla_x T_t(x)v = 0\in\R^d$. Thus
  $\nabla_x T(x)v = -\frac{1-t}{t}v$ and hence
  $-\frac{1-t}{t}\in (-\infty,0]$ is an eigenvalue of
  $\nabla_x T(x)$. The reverse implication follows similarly.
    Assume that $s\in \sigma(\nabla_x T(x))\cap (-\infty,0]$.  Then
    there exists $v\neq 0$ such that $\nabla_x T(x)v=sv$. Since
    $t\mapsto -\frac{1-t}{t}:(0,1]\to (-\infty,0]$ is bijective, we
    can find $t\in (0,1]$ such that $\nabla_x T(x)v=-\frac{1-t}{t}v$,
    implying $v\in \ker(\nabla_xT_t(x))$ and thus
    $\det(\nabla_x T_t(x))=0$.
\end{proof}

Combining the previous two statements
establishes the existence of a velocity field such that the time-one
flow map of the ODE \eqref{eq:ODE} realizes the map
$x\mapsto T(x)$, and the ODE dynamics produce straight-line
trajectories of constant speed.  

\begin{theorem}\label{thm:f}
  Let $k\in\N$ and let $\Omega_0\subseteq\R^d$ be convex and
    compact. Assume that $T\in C^k(\Omega_0,\R^d)$ for some $k\geq 2$
    satisfies \eqref{eq:spectrum}.
    With $T_t$ in \eqref{eq:Tt} set
    \begin{equation}\label{eq:Omega}
      \Omega_{[0,1]}  \coloneqq \setc{(T_t(x),t)}{x\in\Omega_0,~t\in [0,1]}\subset\R^{d+1}.
    \end{equation}
    Then there exists a unique $f:\Omega_{[0,1]}\to\R^d$ such that the
    solution $X:\Omega_0\times [0,1]\to\R^d$ of the ODE \eqref{eq:ODE}
    satisfies $X(x,t)=T_t(x)$ for all $x\in\Omega_0$, $t\in [0,1]$.
    Moreover $\Omega_{[0,1]}$ is simply connected and
    $f\in C^k(\Omega_{[0,1]}^\circ)$.
\end{theorem}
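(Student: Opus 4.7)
The plan is to reverse-engineer $f$ from the prescription $X(x,t)=T_t(x)$. Differentiating in $t$ and matching with the ODE \eqref{eq:ODE} gives the defining relation
\begin{equation*}
 f(T_t(x),t) \coloneqq T(x)-x \qquad \forall\, (x,t)\in\Omega_0\times[0,1].
\end{equation*}
For this to yield a well-defined function on $\Omega_{[0,1]}$, the map $F\colon\Omega_0\times[0,1]\to\Omega_{[0,1]}$ defined by $F(x,t) \coloneqq (T_t(x),t)$ must be injective. By Assumption~\ref{ass:T} together with Lemma~\ref{lemma:spectrum}, $\det\nabla_x T_t(x)>0$ for all $(x,t)\in\Omega_0\times[0,1]$, so Lemma~\ref{lemma:Tinjective} shows that each $T_t$ is injective on the convex set $\Omega_0$. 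Since $F$ preserves the time coordinate, $F$ is injective, and hence a bijection onto $\Omega_{[0,1]}$. This simultaneously gives existence and uniqueness of $f$.

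For simple connectivity of $\Omega_{[0,1]}$, I would observe that $F$ is a continuous bijection from the compact space $\Omega_0\times[0,1]$ onto the Hausdorff subset $\Omega_{[0,1]}\subset\R^{d+1}$, hence a homeomorphism. Since $\Omega_0\times[0,1]$ is convex and therefore simply connected, $\Omega_{[0,1]}$ inherits this property.

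The main challenge is showing $f\in C^k(\Omega_{[0,1]}^\circ)$. The natural idea is to write $f(y,t)=T(x(y,t))-x(y,t)$ with $x(y,t) := F^{-1}(y,t)_{[d]}$ and apply the inverse function theorem to $F$, but $F$ is only defined on the closed set $\Omega_0\times[0,1]$. I would therefore first extend $T$ to a $C^k$ function $\tilde T$ on an open neighborhood of $\Omega_0$ in $\R^d$ (via Whitney's extension theorem, using that $\Omega_0$ is compact and convex) and consider $\tilde F(x,t) \coloneqq ((1-t)x+t\tilde T(x),t)$ on an open set $W\supseteq\Omega_0\times[0,1]$. By continuity of $\det \nabla_x\tilde F$ and Lemma~\ref{lemma:spectrum}, $W$ can be shrunk so that this determinant remains strictly positive throughout. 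Then for any $(y_0,t_0)\in\Omega_{[0,1]}^\circ$ with preimage $(x_0,t_0) := F^{-1}(y_0,t_0)\in W$, the inverse function theorem furnishes a $C^k$ local inverse of $\tilde F$ on an open $\R^{d+1}$-neighborhood $V$ of $(y_0,t_0)$, yielding a $C^k$ function $\tilde f(y,t) \coloneqq \tilde T(\tilde F^{-1}(y,t)_{[d]})-\tilde F^{-1}(y,t)_{[d]}$ on $V$. Using continuity of $F^{-1}$ (from the homeomorphism property) and local injectivity of $\tilde F$, I would verify that $\tilde f=f$ on $V\cap\Omega_{[0,1]}$; intersecting $V$ with the $\R^{d+1}$-open neighborhood of $(y_0,t_0)$ contained in $\Omega_{[0,1]}$ (which exists precisely because $(y_0,t_0)\in\Omega_{[0,1]}^\circ$) then gives the $C^k$ regularity of $f$ at $(y_0,t_0)$. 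The most delicate point is this reconciliation of the extended local inverse with $F^{-1}$ near boundary preimages $(x_0,t_0)\in\partial(\Omega_0\times[0,1])$ that nevertheless map to interior points of $\Omega_{[0,1]}$, which is handled by shrinking neighborhoods so that $\tilde F$ is injective on them and appealing to uniqueness of the local inverse.
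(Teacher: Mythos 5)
Your proposal is correct and follows the same skeleton as the paper's proof: define $f$ implicitly by $f(T_t(x),t)=T(x)-x$, get well-definedness and uniqueness from injectivity of $(x,t)\mapsto(T_t(x),t)$ via Lemmas~\ref{lemma:Tinjective} and~\ref{lemma:spectrum}, obtain simple connectivity from the compact-to-Hausdorff homeomorphism argument, and get $C^k$ regularity from the inverse function theorem applied to this map together with the composition formula $f = T\circ F - F$.

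The only place you genuinely diverge is the regularity step. The paper fixes a point of $\Omega_0^\circ\times(0,1)$, applies the inverse function theorem to $G$ there (where $G$ is defined on an open set and is $C^k$), and relies on the identification of $\Omega_{[0,1]}^\circ$ with $G(\Omega_0^\circ\times(0,1))$ -- an interior-maps-to-interior fact which the paper asserts and which rests on invariance of domain. You instead extend $T$ to a $C^k$ map $\tilde T$ on an open neighborhood (Whitney extension, legitimate here since $\Omega_0$ is compact and convex, and consistent with the extension arguments the paper itself uses in Theorem~\ref{thm:LipDomain}), apply the inverse function theorem to the extended map, and reconcile the local inverse with $F^{-1}$ by continuity of $F^{-1}$ and local injectivity of $\tilde F$. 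This buys you a self-contained argument that does not invoke invariance of domain, and in fact shows slightly more (local $C^k$ extendability of $f$ near every point of $\Omega_{[0,1]}$, not just interior ones), at the cost of the extension machinery and the reconciliation step. Note, though, that the ``delicate'' case you single out -- a boundary point of $\Omega_0\times[0,1]$ whose image is interior to $\Omega_{[0,1]}$ -- cannot actually occur: since $G$ is a homeomorphism between subsets of $\R^{d+1}$, invariance of domain forces interior to map onto interior, which is exactly the fact the paper's shorter route exploits.
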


A key element of the proof of Theorem ~\ref{thm:f} is that the velocity field $f$ corresponding to the displacement interpolation $T_t$ can be defined implicitly in terms of $T$: $f\left ( (1-t)x + t T(x), t \right ) = T(x) - x$.

\begin{proof}[Proof of Theorem \ref{thm:f}]
      By \eqref{eq:ODE} and because $X(x,t)=T_t(x)=(1-t)x+tT(x)$,
        we have for $x\in\Omega_0$ and $t\in [0,1]$
        \begin{equation}\label{eq:Tx-x}
          T(x)-x = \frac{d}{dt}X(x,t)=f(X(x,t),t)=f(T_t(x),t).
        \end{equation}
        By Lemma \ref{lemma:Tinjective} and Lemma \ref{lemma:spectrum},
        the map
        \begin{equation*}
          (x,t)\mapsto G(x,t) \coloneqq (T_t(x),t)\in\Omega
        \end{equation*}
        is injective on $\Omega_0\times [0,1]$.  Thus \eqref{eq:Tx-x}
        uniquely defines $f$ at each point $G(x,t)\in\Omega$. By
        construction, this $f$ yields a flow map $X$ as in
        \eqref{eq:ODE} satisfying $X(x,t)=T_t(x)$.

      The map $G:\Omega_0\times [0,1]\to\Omega$ is a continuous
        bijection, and since $\Omega_0\times [0,1]\subseteq\R^{d+1}$
        is compact, $G^{-1}:\Omega\to \Omega_0\times [0,1]$ is also a
        continuous bijection. Since $\Omega_0\times [0,1]$ is a convex
        set, it is simply connected. Hence, the homotopy equivalent
        set $\Omega_{[0,1]}$ must also be simply connected. Moreover, the
        interior $\Omega^\circ$ of $\Omega$ is the image of
        $\Omega_0^\circ\times (0,1)$ under $G$.
        
        It remains to show $f\in C^k(\Omega_{[0,1]}^\circ)$. Fix
        $x\in \Omega_0^\circ$ and $t\in (0,1)$. Then
        \begin{equation*}
          \nabla_{(x,t)}G(x,t)=\begin{pmatrix}
            \nabla_x T_t(x) & T(x) - x \\
            0 & 1
          \end{pmatrix}\in\R^{(d+1)\times (d+1)},
        \end{equation*}
        and this matrix is regular by Lemma
        \ref{lemma:spectrum}. Since $G\in C^k(\Omega_0\times [0,1])$,
        the inverse function theorem (see, e.g., \cite{CalculusOnManifolds}[{Theorem~2.11}]) implies
        that $G^{-1}$ locally belongs to $C^k$ in a neighbourhood of
        $G(x,t)$. Since $x\in\Omega_0^\circ$ and $t\in (0,1)$ were
        arbitrary, we have $G^{-1}\in
        C^k(\Omega_{[0,1]}^\circ,\R^{d+1})$. Denote $G^{-1}=(F,E)$ such that
        $F:\Omega_{[0,1]}\to\R^d$ corresponds to the first $d$ components of
        $G^{-1}$.  By \eqref{eq:Tx-x}, for all $(y,s)\in\Omega_{[0,1]}^\circ$
        it holds that $f(y,s)=T(F(y,s))-F(y,s)$, so that $f$ belongs to
        $C^k(\Omega_{[0,1]}^\circ)$ as a composition of two $C^k$ functions.
    \end{proof}

      \begin{remark}
        Note that $f\in C^k(\Omega_{[0,1]}^\circ)$ means only that $f$ is
        $C^k$ on the interior of $\Omega_{[0,1]}$. To show that the
        derivatives are well-defined on the boundary of $\Omega_{[0,1]}$ and that $f$ can be extended to a $C^k$ function outside
        of $\Omega_{[0,1]}$, certain regularity conditions of the domain are required, which will be discussed in later parts of this section.
      \end{remark}

    \subsection{Properties of $\Omega_{[0,1]}$}\label{subsection:propertiesofOmega}

   The set $\Omega_{[0,1]}$ is simply connected, but unlike
      $\Omega_0\times [0,1]$, it need not be convex:

\begin{example}[Rotation]
  Let $\Omega_0=\setl{x\in\R^2}{|x|\le 1}$ be the unit disc and let
  $T:\R^2\to\R^2$ be the rotation by $\bsalpha\in [0,2\pi)$ around
  $0\in\R^2$. Then
  \begin{equation}
    \nabla_x T(x) = \begin{pmatrix}
      \cos(\bsalpha) & -\sin(\bsalpha)\\
      \sin(\bsalpha) & \cos(\bsalpha)
    \end{pmatrix}.
  \end{equation}
  The spectrum of this matrix consists of the two values
  $\exp(\pm\ii\bsalpha)$, where $\ii$ denotes the imaginary root of
  $-1$. Thus \eqref{eq:spectrum} holds iff $\bsalpha\neq \pi$. If
  $\bsalpha=\pi$, then $T$ is the negative identity, and thus
  $T_{1/2}(x)=\frac{1}{2}x - \frac{1}{2}x=0$ for all $x\in\Omega_0$,
  so that the all straight lines connecting $x$ and $T(x)$ for
  $x\in\Omega_0$, meet at $t=\frac{1}{2}$ in the midpoint $0$ of the
  disc.  For all $\bsalpha\in [0,2\pi)\backslash\{\pi\}$, by
  Theorem~\ref{thm:f} and with $\Omega_{[0,1]}$ as in \eqref{eq:Omega}, there
  exists a vector field $f\in C^\infty(\Omega_{[0,1]})$ such that
  $T_t(x)=X(x,t)$ for $X$ as in \eqref{eq:ODE}. One can check that
  \begin{equation*}
    \Omega_{[0,1]} = \setc{(x,t)\in\R^2\times [0,1]}{|x|\le
      \sqrt{\sin\left(\frac{\pi}{2}-\frac{\bsalpha}{2}\right)^2+\left[t\cos\left(\frac{\pi}{2}-\frac{\bsalpha}{2}\right)+(1-t)\cos\left(\frac{\pi}{2}+\frac{\bsalpha}{2}\right)\right]^2}
    },
  \end{equation*}
  which is convex
  if $\bsalpha=0$ and nonconvex for all $\bsalpha\in (0,2\pi)$.
\end{example}

To approximate the velocity field $f$ from Theorem ~\ref{thm:f} with a
  neural network, we also need to understand the regularity of the
domain $\Omega_{[0,1]}$
on which $f$ is defined. 
As we will see, $\Omega_{[0,1]}$ is a Lipschitz domain.

\begin{definition}
  A bounded domain
  $\Omega$ is called a Lipschitz domain if there exist numbers $\delta
  > 0$, $M > 0$,
  $J\in\N$, and a finite cover of open sets
  $\{U_j\}_{j=1}^J$ of $\partial\Omega$ such that:
  \begin{itemize}
  \item For every pair of points $x_1$, $x_2
    \in\Omega$ such that $|x_1 - x_2| < \delta$ and $\text{dist}(x_i,
    \partial\Omega) < \delta$, $i = 1$, $2$, there exists an index
    $j$ such that $x_i \in U_j$ , $i = 1$, $2$, and $\text{dist}(x_i,
    \partial U_j) > \delta$, $i = 1$, $2$.
  \item For each
    $j$ there exists some coordinate system
    $\{\zeta_{j,1},\dots ,\zeta_{j,d}\}$ in
    $U_j$ such that the set $\Omega \cap
    U_j$ consists of all points satisfying $\zeta_{j,d} \leq
    f_j(\zeta_{j,2},\dots ,\zeta_{j,d-1})$, where
    $f_j:\mathbb{R}^{d-1}\rightarrow\mathbb{R}$ is a Lipschitz
    function with Lipschitz constant $M$.
  \end{itemize}
\end{definition}


To show that $\Omega_{[0,1]}$ is a Lipschitz domain, we first need an auxiliary result, Theorem ~\ref{thm:LipTransformation} in Appendix~\ref{app:AuxResults}, establishing that the image of a Lipschitz domain under a sufficiently regular map remains a Lipschitz domain. We can then show the following:
\begin{theorem}\label{thm:LipDomain}
  Consider the setting of Theorem ~\ref{thm:f}.  Then
    $\Omega_{[0,1]}\subset\R^{d+1}$ in \eqref{eq:Omega} is a Lipschitz
    domain.
\end{theorem}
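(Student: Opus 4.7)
The plan is to reduce the statement to the auxiliary result (Theorem~\ref{thm:LipTransformation}) that the image of a Lipschitz domain under a sufficiently regular map is again a Lipschitz domain. Concretely, I would identify a natural Lipschitz source domain together with a regular bijection onto $\Omega_{[0,1]}$, and then invoke the auxiliary theorem.

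For the source domain, the natural choice is $\Omega_0 \times [0,1] \subset \R^{d+1}$. Since $\Omega_0\subseteq\R^d$ is compact and convex (with nonempty interior, under the standing assumptions on the target measure), it is itself a Lipschitz domain, and taking the Cartesian product with $[0,1]$ preserves this property (one can cover the boundary by charts coming from those of $\Omega_0$, plus the top and bottom faces $\Omega_0\times\{0\}$ and $\Omega_0\times\{1\}$, which are graphs of constant functions). Thus $\Omega_0\times[0,1]$ qualifies as a Lipschitz domain in the sense of the definition recalled above.

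For the map, I would use
\[
G:\Omega_0\times[0,1]\to\Omega_{[0,1]},\qquad G(x,t) = (T_t(x),t) = ((1-t)x + tT(x),\,t),
\]
which already played a central role in the proof of Theorem~\ref{thm:f}. The argument there showed \emph{(i)} $G\in C^k(\Omega_0\times[0,1],\R^{d+1})$; \emph{(ii)} $G$ is a continuous bijection onto $\Omega_{[0,1]}$, and by compactness of $\Omega_0\times[0,1]$ the inverse $G^{-1}$ is continuous; \emph{(iii)} at every $(x,t)\in\Omega_0\times[0,1]$, the Jacobian
\[
\nabla_{(x,t)}G(x,t) = \begin{pmatrix}\nabla_x T_t(x) & T(x)-x \\ 0 & 1\end{pmatrix}
\]
is non-singular, because Lemma~\ref{lemma:spectrum} yields $\det(\nabla_x T_t(x))>0$ for all $t\in[0,1]$. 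Because $\nabla G$ is continuous on the compact set $\Omega_0\times[0,1]$ with nonvanishing determinant, $\|\nabla G\|$ and $\|(\nabla G)^{-1}\|$ are uniformly bounded, so $G$ and $G^{-1}$ are globally Lipschitz on their respective domains. This is exactly the level of regularity needed to feed $G$ into Theorem~\ref{thm:LipTransformation}, which then directly yields that $\Omega_{[0,1]}=G(\Omega_0\times[0,1])$ is a Lipschitz domain.

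The main obstacle I anticipate is a bookkeeping issue rather than a deep one: matching the precise hypotheses of Theorem~\ref{thm:LipTransformation} (which we have not seen stated) and the chart-wise verification for $\Omega_0\times[0,1]$, in particular being careful at the "edges" where the lateral boundary $\partial\Omega_0\times[0,1]$ meets the top and bottom faces. Both issues are standard: at the edges one stitches together the chart for $\partial\Omega_0$ with a chart for $\{0\}$ or $\{1\}$ in the $t$-variable, using that the graph representation of $\Omega_0$ near a boundary point is Lipschitz. Once the source is confirmed to be a Lipschitz domain and $G$ is confirmed to be a bi-Lipschitz $C^k$ diffeomorphism (which follows as above), the conclusion is immediate from the auxiliary theorem.
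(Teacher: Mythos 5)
Your proposal follows the same overall route as the paper: take $\Omega_0\times[0,1]$ as the source domain, use the map $G(x,t)=(T_t(x),t)$, and invoke Theorem~\ref{thm:LipTransformation}. (For the source domain the paper is even simpler than your chart-stitching: $\Omega_0\times[0,1]$ is convex, so Lemma~\ref{lemma:convexLip} applies directly.) However, there is a genuine gap at the step you dismiss as bookkeeping. Theorem~\ref{thm:LipTransformation} requires a $C^1$-diffeomorphism defined on an \emph{open neighborhood} $\mathcal{O}$ of $\overline{\Omega_0\times[0,1]}$, not merely a bi-Lipschitz bijection of the closed cylinder onto its image. This hypothesis is not cosmetic: bi-Lipschitz images of Lipschitz domains need not be Lipschitz domains (the Lipschitz-graph property is not invariant under bi-Lipschitz maps), so your claim that global bi-Lipschitzness of $G$ and $G^{-1}$ is ``exactly the level of regularity needed'' does not hold. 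Moreover, $C^1$-regularity of $G^{-1}$ at boundary points of the cylinder cannot be obtained from the inverse function theorem unless $G$ is first defined on a neighborhood of those points, and $T$ is a priori only given on $\Omega_0$.

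This extension is precisely where the paper spends most of its effort: $T\in C^k(\Omega_0)\subset W^{k,\infty}(\Omega_0)$ is extended to $\tilde T\in W^{k,\infty}(\R^d)$ via the Stein extension theorem (Theorem~\ref{thm:functionExtension}), Sobolev embedding (using $k\ge 2$) gives $\tilde T\in C^1(\R^d)$, and then compactness of $\Omega_0\times[0,1]$ together with $\inf_{(x,t)\in\Omega_0\times[0,1]}\det(\nabla_x T_t(x))>0$ and continuity of the determinant yields an open (and, after shrinking, convex) neighborhood $\mathcal{O}$ on which $\det(\nabla_x\tilde T_t(x))>0$. On $\mathcal{O}$ the extended map $\tilde G(x,t)=(t\tilde T(x)+(1-t)x,t)$ has a nonsingular Jacobian, and the injectivity argument of Lemma~\ref{lemma:Tinjective} (applied on the convex set $\mathcal{O}$) together with the inverse function theorem shows $\tilde G$ is a global $C^1$-diffeomorphism onto its image; only then can Theorem~\ref{thm:LipTransformation} be applied. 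To repair your argument you need to supply this extension-and-nondegeneracy step, or otherwise verify the auxiliary theorem's neighborhood hypothesis.
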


\begin{proof}[Proof of Theorem \ref{thm:LipDomain}]
  To show that $\Omega_{[0,1]}$ is a Lipschitz domain, first we observe that $\Omega_{[0,1]}$ is the image of $\Omega_0\times [0,1]$ under the map
    $(x,t)\rightarrow G(x,t)  \coloneqq  (tT(x) + (1-t)x, t)$ for
    $x\in\Omega_0,t\in[0,1]$. Since $\Omega_0\times [0,1]$ is a
    product of two convex sets, which is convex, Lemma
    \ref{lemma:convexLip} shows that the cylinder
    $\Omega_0\times [0,1]$ is a Lipschitz domain. To apply Theorem
    \ref{thm:LipTransformation}, we need to find a
    $C^1$-diffeomorphism from an open neighborhood $\mathcal{O}$ of
    $\Omega_0\times [0,1]$ onto its image. In the context of Theorem
    \ref{thm:f}, we have $\det(\nabla_x T_t(x)) > 0$ for all
    $(x,t)\in\Omega_0\times[0,1]$. Since $\Omega_0\times[0,1]$ is a
    compact set, the infimum of the continuous function
    $(x,t)\rightarrow\det(\nabla_x T_t(x))$ is achieved at some point
    in the set and thus we can conclude that
    $\inf_{(x,t)\in\Omega_0\times[0,1]}\det(\nabla_x T_t(x)) > 0$.

    On the other hand, since $T\in C^k(\Omega_0, \mathbb{R}^d)$ for
    some $k\geq 2$, it follows that
    $T \in W^{k, \infty}(\Omega_0, \mathbb{R}^d)$. By the extension
    theorem \ref{thm:functionExtension}, $T$ can be extended to a
    function $\tilde{T}\in W^{k, \infty}(\mathbb{R}^d,
    \mathbb{R}^d)$. Since $k \geq 2$, Sobolev embedding shows that
    $\tilde{T}\in C^1(\mathbb{R}^d, \mathbb{R}^d)$. Now consider the
    map $\tilde{T}_t(x) = t\tilde{T}(x) + (1-t)x$ for
    $(x,t)\in\mathbb{R}^{d+1}$. It is clear that $\tilde{T}_t(x)$ is
    $C^1$ in $(x,t)$ and also
    $\tilde{T}_t(x)|_{\Omega_0\times[0,1]} = T_t(x)$. By the
    continuity of determinant operator and
    $\inf_{(x,t)\in\Omega_0\times[0,1]}\det(\nabla_x T_t(x)) > 0$, it
    follows that there exists an open neighborhood
    $\mathcal{O}\subset\mathbb{R}^{d+1}$ of $\Omega_0\times[0,1]$ such
    that $\det(\nabla_x \tilde{T}_t(x)) > 0$ for all
    $x\in\mathcal{O}$. Without loss of generality, we can assume
    $\mathcal{O}$ is convex. This is because we can choose the
    neighborhood
    $\Omega_0\times[0,1]\cup
    \{B_\epsilon((x,t))|(x,t)\in\partial(\Omega_0\times[0,1])\}$, which
    is an open and convex set that can be made arbitrarily close to
    $\Omega_0\times[0,1]$ when $\epsilon\rightarrow 0$.

    Now, consider the extension of $G$,
    $\tilde{G}(x,t) = (t\tilde{T}(x) + (1-t)x, t)$ for
    $(x,t)\in\mathcal{O}$. We have
    \begin{equation*}
      \nabla_{(x,t)}\tilde{G}(x,t)=\begin{pmatrix}
        \nabla_x \tilde{T}_t(x) & \tilde{T}(x) - x\\
        0 & 1
      \end{pmatrix}\in\R^{(d+1)\times (d+1)}
    \end{equation*}
    is a regular matrix for fixed $(x,t)\in\mathcal{O}$. Then, the same
    arguments as in the proof of Theorem \ref{thm:f} show that
    $\tilde{G}(x,t)$ has a global inverse and $\tilde{G}^{-1}$ is
    $C^1$. Therefore, we have a $C^1$-diffeomorphism from
    $\mathcal{O}$ onto its image, and Theorem
    \ref{thm:LipTransformation} shows that $\Omega_{[0,1]} = \{(T_t(x),t)\}$
    for $x\in\Omega_0$, $t\in[0,1]$ is a Lipschitz domain.
\end{proof}

For Sobolev functions on Lipschitz domains, we have the following extension theorem:
\begin{theorem}[{\cite[Chap.\
    3]{SteinBook}}]\label{thm:functionExtension}
  Let $D \subset\mathbb{R}^d$ be a Lipschitz
  domain.\footnote{The result in \citet{SteinBook} is stated in
      terms of so-called ``minimally smooth domains,'' which is a
      generalization of the notion of Lipschitz domains.}
  Then there exists a linear operator $\mathcal{E}$ mapping functions
  on $D$ to functions on $\mathbb{R}^d$ with the following properties:
  \begin{itemize}
  \item $\mathcal{E}(f)|_D = f$, that is, $\mathcal{E}$ is an
    extension operator.
  \item $\mathcal{E}$ maps $W^{k,p}(D)$ continuously into
    $W^{k,p}(\mathbb{R}^d)$ for all $1\leq p\leq \infty$ and all
      nonnegative integer $k$.
      
  \end{itemize}
\end{theorem}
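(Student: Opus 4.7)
The plan is to follow Stein's classical construction: localize via a partition of unity subordinate to the Lipschitz cover, reduce to the case of a \emph{special} Lipschitz domain (the region on one side of a globally Lipschitz graph), and build an explicit extension operator on each such domain, then patch the pieces together. Concretely, I would augment the boundary cover $\{U_j\}_{j=1}^J$ by an interior set $U_0$ with $\overline{U_0}\subset D$ so that $\{U_j\}_{j=0}^J$ covers $\overline{D}$, take a smooth partition of unity $\{\eta_j\}_{j=0}^J$ subordinate to this cover, and reduce—via a rotation/translation on each $U_j$ and a McShane extension of each defining function $f_j$—to the problem of extending $\eta_j f$ from a special Lipschitz domain $\tilde D_j=\{(\zeta',\zeta_d):\zeta_d\leq \tilde f_j(\zeta')\}$ to all of $\mathbb{R}^d$.

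The core step is then the construction on a special Lipschitz domain $\tilde D=\{x_d\leq \phi(x')\}$ with $\phi:\mathbb{R}^{d-1}\to\mathbb{R}$ Lipschitz. The key tool is a \emph{regularized distance} $\Delta:\mathbb{R}^d\setminus\overline{\tilde D}\to(0,\infty)$, built via a Whitney decomposition of the exterior into dyadic cubes and a smooth partition of unity, that is $C^\infty$ on its domain and satisfies both $c_1\operatorname{dist}(x,\partial\tilde D)\leq\Delta(x)\leq c_2\operatorname{dist}(x,\partial\tilde D)$ and the derivative estimates $|D^\alpha \Delta(x)|\leq C_\alpha \Delta(x)^{1-|\alpha|}$ for every multiindex $\alpha$. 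One then defines
\begin{equation*}
\mathcal{E}f(x)=\int_1^\infty f\bigl(x',\,x_d-\lambda\Delta(x)\bigr)\,\psi(\lambda)\,d\lambda,\qquad x\notin\overline{\tilde D},
\end{equation*}
where $\psi$ is continuous with compact support in $[1,\infty)$ satisfying $\int\psi=1$ and $\int\lambda^m\psi(\lambda)\,d\lambda=0$ for $m=1,\dots,k$. Existence of such $\psi$ is a Vandermonde argument, and these moment conditions are exactly what forces $\mathcal{E}f$ to match $f$ up to order $k$ across $\partial\tilde D$.

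The main obstacle is the uniform $W^{k,p}$ bound $\|\mathcal{E}f\|_{W^{k,p}(\mathbb{R}^d)}\leq C\|f\|_{W^{k,p}(D)}$ across all $p\in[1,\infty]$ simultaneously. Differentiating $\mathcal{E}f$ up to order $k$ via the chain and Leibniz rules produces sums of terms in which derivatives of $f$ (of orders $\leq k$) are multiplied by products of derivatives of $\Delta$; the bounds $|D^\alpha\Delta|\leq C_\alpha\Delta^{1-|\alpha|}$ combined with the moment conditions on $\psi$ force cancellation of the would-be singularities as $x\to\partial\tilde D$. Minkowski's integral inequality then delivers the $L^p$ estimate for $p<\infty$, and the $p=\infty$ case is immediate, while a change of variables $y=(x',\,x_d-\lambda\Delta(x))$ with uniformly bounded Jacobian recasts the exterior integrals as integrals over $\tilde D$. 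Finally, setting $\mathcal{E}f\coloneqq\sum_{j\geq 1}\mathcal{E}_j(\eta_j f)+\eta_0 f$ (with the interior piece extended by zero outside $U_0$) patches the local operators into a global linear extension satisfying $\mathcal{E}f|_D=f$ and $\|\mathcal{E}\|_{W^{k,p}(D)\to W^{k,p}(\mathbb{R}^d)}<\infty$ for every $p\in[1,\infty]$ and every nonnegative integer $k$.
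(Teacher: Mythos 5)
The paper does not prove this statement at all: it is quoted verbatim as an external result from Stein's book (the ``minimally smooth domains'' extension theorem), so there is no internal proof to compare against. Your proposal is, in substance, a faithful outline of exactly the argument in the cited source: localization by a partition of unity subordinate to the Lipschitz cover, reduction to special Lipschitz domains, the regularized distance built from a Whitney decomposition with the estimates $|D^\alpha\Delta|\le C_\alpha\Delta^{1-|\alpha|}$, the one-dimensional averaging in the vertical direction against $\psi$, and the Leibniz/chain-rule bookkeeping plus Minkowski's inequality for the $L^p$ bounds. So as a reconstruction of the proof behind the citation, the route is the right one.

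One detail falls short of the statement as written. The theorem asserts a \emph{single} operator $\mathcal{E}$ that is bounded on $W^{k,p}(D)$ for \emph{all} nonnegative integers $k$ and all $p\in[1,\infty]$ simultaneously. Your $\psi$ is compactly supported with only the finitely many moment conditions $\int_1^\infty\lambda^m\psi(\lambda)\,d\lambda=0$, $m=1,\dots,k$, obtained by a Vandermonde argument; this yields an extension operator that depends on the chosen $k$ (a total extension operator of order $k$, in Stein's terminology), not the $k$-independent operator claimed. Stein's actual construction removes this dependence by using a continuous, rapidly decreasing (hence not compactly supported) $\psi$ on $[1,\infty)$ with $\int\psi=1$ and \emph{all} higher moments vanishing, built from an explicit oscillatory formula; with that $\psi$ the same cancellation argument runs for every $k$ at once. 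A second, smaller point you gloss over: for $x$ outside $\tilde D$ the shifted point $(x',x_d-\lambda\Delta(x))$ must actually lie in $\tilde D$ for all $\lambda\ge1$, which requires rescaling the regularized distance by a constant depending on the Lipschitz bound $M$ of $\phi$ (Stein works with $\delta^*=c(1+M)\Delta$, roughly speaking). Neither issue affects the paper's use of the theorem—there $k$ is fixed—but to prove the statement as quoted you need the all-moments $\psi$.
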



Combining Theorem ~\ref{thm:f} and \ref{thm:LipDomain}, the extension result in  Theorem ~\ref{thm:functionExtension} shows that the velocity field $f$ of Theorem ~\ref{thm:f} can be extended to a function in $W^{k, \infty}$ on all of $\mathbb{R}^{d+1}$.

\subsection{Regularized solutions}\label{sec:optsols}

In the theorem below, we show that the straight-line connections
between \jz{$x$} and $T(x)$, for a transport map $T$ that pushes
forward $\pi$ to $\rho$, yield the minimal average kinetic
energy, which is why we name the construction \textit{minimal energy regularization}. Here, the ``average kinetic energy'' is the squared magnitude of the ODE velocity averaged along trajectories, given a distribution $\pi$ on the initial condition. For any ODE velocity field $g(x,t)$, this quantity can be written in either Lagrangian or Eulerian frames as follows:
$$
\int_{\Omega_0} \int_0^1 \pi(x) \vert g(X_g(x,t),t) \vert^2 dt \, dx = \int_{\mathbb{R}^d} \int_0^1 \pi_{g,t}(x) \vert  g(x,t) \vert^2 dt \, dx. 
$$

  \begin{theorem}\label{thm:minimalenergy}
    Let $\Omega_0, \Omega_1\subseteq\R^d$ with $\Omega_0$  convex,
    and let $T\in C^1(\Omega_0,\Omega_1)$ satisfy
    \eqref{eq:spectrum}. Assume that $\pi$ and $\rho$ are
    probability measures on $\Omega_0$, $\Omega_1$, respectively, and that
    $T_\sharp\pi = \rho$. Then with
\begin{equation*}
  \mathcal{H} \coloneqq \setc{g\in \mathcal{V}}{X_g(\cdot,1)|_{\Omega_0}=T}
\end{equation*}
and $f$ from Theorem ~\ref{thm:f}, it holds that
\begin{equation*}
  f = \argmin_{g\in\mathcal{H}}\int_{\mathbb{R}^d}\int_0^1\pi_{g,t}(x)|g(x,t)|^2\dd t\dd x.
\end{equation*}

 \end{theorem}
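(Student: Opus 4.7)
The plan is to reduce the Eulerian integral in the theorem's statement to a Lagrangian one via change of variables, and then apply a pointwise Cauchy--Schwarz (or Jensen) argument trajectory by trajectory.

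First I would rewrite the objective in Lagrangian form. Since $\pi_{g,t}=X_g(\cdot,t)_\sharp\pi$, the standard change of variables gives, for each $t\in[0,1]$,
\begin{equation*}
\int_{\R^d}\pi_{g,t}(x)|g(x,t)|^2\dd x=\int_{\Omega_0}\pi(x)|g(X_g(x,t),t)|^2\dd x,
\end{equation*}
and therefore
\begin{equation*}
\int_{\R^d}\int_0^1\pi_{g,t}(x)|g(x,t)|^2\dd t\dd x=\int_{\Omega_0}\pi(x)\int_0^1|g(X_g(x,t),t)|^2\dd t\dd x.
\end{equation*}
This reformulation is the natural one because the constraint $g\in\mathcal H$ (i.e., $X_g(\cdot,1)|_{\Omega_0}=T$) translates immediately into a constraint on the time-integral of the integrand in the inner integral.

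Next, I would fix $x\in\Omega_0$ and study the inner integral $\int_0^1|g(X_g(x,t),t)|^2\dd t$. By the definition of the flow \eqref{eq:flowmap} and the hypothesis $X_g(x,1)=T(x)$,
\begin{equation*}
\int_0^1 g(X_g(x,t),t)\dd t=X_g(x,1)-x=T(x)-x.
\end{equation*}
Applying Cauchy--Schwarz (equivalently, Jensen's inequality for the convex function $|\cdot|^2$ and Lebesgue measure on $[0,1]$) then yields
\begin{equation*}
\int_0^1|g(X_g(x,t),t)|^2\dd t\ \ge\ \left|\int_0^1 g(X_g(x,t),t)\dd t\right|^2 = |T(x)-x|^2,
\end{equation*}
with equality if and only if $t\mapsto g(X_g(x,t),t)$ is constant on $[0,1]$. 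Multiplying by $\pi(x)$ and integrating gives the global lower bound $\int_{\Omega_0}\pi(x)|T(x)-x|^2\dd x$, uniform over $g\in\mathcal H$.

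Finally, I would verify that the velocity field $f$ from Theorem~\ref{thm:f} attains this lower bound. By \eqref{eq:Tx-x}, $f(T_t(x),t)=T(x)-x$ for every $(x,t)\in\Omega_0\times[0,1]$, and since $X_f(x,t)=T_t(x)$, the Lagrangian integrand $|f(X_f(x,t),t)|^2=|T(x)-x|^2$ is constant in $t$, so equality holds in the Cauchy--Schwarz step for every $x$. This shows $f\in\argmin_{g\in\mathcal H}\int_{\R^d}\int_0^1\pi_{g,t}(x)|g(x,t)|^2\dd t\dd x$. I do not expect any serious obstacle; the one point requiring minor care is that the Eulerian-to-Lagrangian reduction is usually written down for smooth invertible flows, and one must note that $g\in\mathcal V$ is Lipschitz in $x$ and continuous in $t$, so the Picard--Lindel\"of theorem guarantees $X_g(\cdot,t)$ is a well-defined diffeomorphism on $\Omega_0$, which legitimizes the pushforward formula. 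A brief uniqueness remark (if desired) follows from the equality case of Cauchy--Schwarz: if $g$ also achieves the minimum, then $g(X_g(x,\cdot),\cdot)$ must be constant in time and equal to $T(x)-x$, forcing $X_g(x,t)=T_t(x)$ and hence $g=f$ on $\Omega_{[0,1]}$.
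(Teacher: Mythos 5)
Your proposal is correct and follows essentially the same route as the paper: rewrite the Eulerian energy in Lagrangian form, bound each trajectory's contribution from below by $|T(x)-x|^2$ via Jensen/Cauchy--Schwarz using the constraint $X_g(x,1)=T(x)$, and observe that the straight-line field $f$ attains equality because $f(T_t(x),t)=T(x)-x$ is constant in $t$. The only cosmetic difference is that you apply Jensen directly to the vector-valued integral $\int_0^1 g(X_g(x,t),t)\,\dd t$, whereas the paper splits the same estimate into an arclength bound plus scalar Jensen, with the same equality condition.
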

\begin{proof}[Proof of Theorem \ref{thm:minimalenergy}]
  By Theorem \ref{thm:f}, we know the existence of velocity fields
  that realize these constructions. We then bound the
  average kinetic energy from below, using Lagrangian coordinates, as follows:
  \begin{align*}
    &\int_{\mathbb{R}^d} \int_0^1 \pi_{g,t}(x)|g(x,t)|^2dtdx = \int_{\Omega_0} \int_0^1 \pi_{g,0}(x)|g(X(x,t),t)|^2dtdx\\
    &=\int_{\Omega_0} \int_0^1 \pi(x)|\partial_t X(x,t)|^2dtdx \geq \int_{\Omega_0} \pi(x) \left ( \int_0^1|\partial_t X(x,t)|dt \right )^2dx 
    &\geq \int_{\Omega_0} \pi(x)|X(x,1)- X(x,0)|^2dx\\
    &= \int_{\Omega_0} \pi(x)|X(x,1)-x|^2dx = \int_{\Omega_0} \pi(x)|T(x) -x|^2dx,
  \end{align*}
   where the second inequality
  is due to Jensen's inequality, and equality holds iff
  $\partial_t X(x,t) = X(x,1) - X(x,0) = T(x) - x$. Then the optimal
  choice of $X$ is given by $X(x,t) = x + t(T(x) - x)$, which is
  exactly the displacement interpolation. As a result, the optimal
  choice for $f$ is given by $f(X(x,t), t) = T(x) - x$, which is the
  straight line construction from Theorem ~\ref{thm:f}.
\end{proof}

\begin{remark}
  This construction has important connections to the \emph{fluid dynamics} formulation of optimal transport \citep{OT-CFD}. Theorem ~\ref{thm:minimalenergy} shows that for a \emph{fixed} transport map $T$, the straight-line construction gives the minimal average kinetic energy. The \emph{optimal} transport map $T$ is then just the transport map
  $T$ that minimizes $\int_{\mathbb{R}^d} \pi(x)|T(x) -x|^2dx$, which
  is the $L^2$-Wasserstein distance.
\end{remark}

With this machinery developed, we are now ready to prove that under our assumptions on the measures $\pi$ and $\rho$, \eqref{eq:OP} admits optimal solutions that realize the displacement interpolation of  transport maps $T$ that push forward $\pi$ to $\rho$.

\begin{theorem}\label{thm:ExSol}
  Let $\pi$ and $\rho$ be measures supported on $\Omega_0$ and $\Omega_1$, respectively, and let Assumptions~\ref{ass:dens1} and \ref{ass:dens2} be satisfied. Then there exists at least one velocity field $f \in \mathcal{V}$
  that achieves the global minimum of zero in the optimization problem \eqref{eq:OP}. Moreover, all global minimizers of \eqref{eq:OP} take the form $f(y,t) = T(x) - x$, with $y = T_t(x)$, for some transport map $T$ such that $T_\sharp\pi = \rho$, where $(y,t)\in\Omega_{[0,1]} \coloneqq \setl{(T_t(x),t)}{x\in\Omega_0,\ t\in[0,1]} \subset
  \mathbb{R}^{d+1}$ and $T$ satisfies \eqref{eq:spectrum}.
\end{theorem}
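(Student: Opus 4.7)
The argument splits into (i) producing a specific $f^\star\in\mathcal{V}$ that attains $J(f^\star)=0$, and (ii) characterizing all $f\in\mathcal{V}$ with $J(f)=0$. For (i), the plan is to construct a transport map $T\colon\Omega_0\to\Omega_1$ with $T_\sharp\pi=\rho$ satisfying the spectral condition \eqref{eq:spectrum}, apply Theorem~\ref{thm:f} to obtain the corresponding straight-line velocity field on $\Omega_{[0,1]}$, and extend it globally in space to place it in $\mathcal{V}$. Under Assumptions~\ref{ass:dens1}--\ref{ass:dens2} such a $T$ exists: for instance, the Knothe--Rosenblatt rearrangement between $\pi$ and $\rho$ is $C^k$ with $k\ge 2$ and has triangular Jacobian whose diagonal entries are strictly positive (the densities are bounded below by $L_2>0$), so $\sigma(\nabla_x T(x))\subset(0,\infty)$ and \eqref{eq:spectrum} holds; the Brenier optimal transport map, with symmetric positive definite Jacobian, would serve equally well. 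Theorem~\ref{thm:f} then provides $f\in C^k(\Omega_{[0,1]}^\circ)$ with $X_f(x,t)=T_t(x)$, and Theorems~\ref{thm:LipDomain} and~\ref{thm:functionExtension} extend $f$ to an element of $W^{k,\infty}(\mathbb{R}^{d+1})\subset\mathcal{V}$. For this $f$ the divergence term vanishes by construction, and the straight-line, constant-speed motion makes the Lagrangian acceleration identically zero, so $R(f)=0$. Since $J\ge 0$, this $f^\star$ is a global minimizer.

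For (ii), let $f\in\mathcal{V}$ satisfy $J(f)=0$. Nonnegativity of each summand forces both $\mathcal{D}(X_f(\cdot,1)_\sharp\pi,\rho)=0$ and $R(f)=0$; the former identifies $T\coloneqq X_f(\cdot,1)|_{\Omega_0}$ as a transport map from $\pi$ to $\rho$. Reading the $R$-integrand as the squared Lagrangian acceleration along trajectories initiated in $\Omega_0$, $R(f)=0$ forces $s\mapsto f(X_f(x,s),s)$ to be constant with value $f(x,0)$ for every $x\in\Omega_0$. Integrating the ODE yields $X_f(x,s)=x+s\,f(x,0)$, and evaluating at $s=1$ identifies $f(x,0)=T(x)-x$, so $X_f(x,s)=T_s(x)$ and $f(T_s(x),s)=T(x)-x$ on $\Omega_{[0,1]}$, which is precisely the claimed form.

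It remains to show that this extracted $T$ satisfies \eqref{eq:spectrum}. Since $f$ is Lipschitz in $x$, the time-$s$ flow $T_s=X_f(\cdot,s)$ is a homeomorphism of $\Omega_0$ onto its image for every $s\in[0,1]$, hence $T_s$ is injective, and Lemma~\ref{lemma:spectrum} delivers \eqref{eq:spectrum} as soon as $T$ is known to be $C^1$. The step I expect to require the most care is this final regularity bootstrap: $\mathcal{V}$ only guarantees Lipschitz regularity of $f$ in $x$, so a priori $\nabla_x T$ exists merely almost everywhere. The natural resolution is to exploit the rigid identity $f(\cdot,0)=T-\mathrm{id}$ together with the smoothness of $\pi$ and $\rho$, or to read \eqref{eq:spectrum} pointwise a.e.\ wherever the differential of $T$ is defined, so that the spectral statement is rigorously meaningful for every minimizer.
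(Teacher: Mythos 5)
Your overall structure matches the paper's proof. For the characterization step your argument is essentially the paper's: $J(f)=0$ forces both terms to vanish, $R(f)=0$ makes $s\mapsto f(X_f(x,s),s)$ constant along trajectories, and integrating the ODE with the endpoint conditions $X_f(x,0)=x$, $X_f(x,1)=T(x)$ gives $f(T_t(x),t)=T(x)-x$. The substantive difference is the existence construction. The paper takes $T$ to be the Brenier map and invokes Caffarelli regularity (Theorem~\ref{Thm:OptimalRegularity}) plus the Monge--Amp\`ere equation to get $T=\nabla\phi$ with $\phi\in C^{k+2}(\Omega_0)$ strictly convex and $\det\nabla T>0$, hence \eqref{eq:spectrum}; Theorem~\ref{thm:f} then yields the straight-line field. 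Your primary suggestion, the Knothe--Rosenblatt map, is only supported by the paper's regularity analysis (Theorem~\ref{thm:CkNormTriangular}) in the cube case $\Omega_0=\Omega_1=[0,1]^d$. Under Assumption~\ref{ass:dens1} the supports are merely compact and convex, and on, say, a ball the conditional supports degenerate at the boundary, so the KR map need not be $C^1$ (or even Lipschitz) up to $\partial\Omega_0$; the assertion that the KR rearrangement is $C^k$ is therefore not justified at this level of generality. Your parenthetical fallback to the Brenier map is exactly the paper's route and repairs this, but it should be the main construction rather than an aside. Your additional care about extending $f$ off $\Omega_{[0,1]}$ to land in $\mathcal{V}$ (via Theorems~\ref{thm:LipDomain} and~\ref{thm:functionExtension}) is a detail the paper glosses over and is welcome.

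On the last point --- verifying \eqref{eq:spectrum} for the $T$ extracted from an arbitrary minimizer --- you are right that this is the delicate step: $\mathcal{V}$ only gives Lipschitz regularity, injectivity of $T_t$ alone does not imply $\det\nabla_xT_t>0$, and Lemma~\ref{lemma:spectrum} needs $T\in C^1$. Note, however, that the paper's own proof stops after deriving the form $f(T_t(x),t)=T(x)-x$ and never establishes the spectral condition for general minimizers either, so your explicit flagging of this issue (and the a.e.\ reading you propose) is an honest account of a step the published argument also leaves open, not a defect relative to it.
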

To show the existence of a velocity field that achieves $J(f) = 0$, in the proof of Theorem ~\ref{thm:ExSol} we consider a velocity field that realizes the optimal transport map. 
\begin{proof}[Proof of Theorem \ref{thm:ExSol}]
  Clearly, the objective function is bounded from below by zero. We
  first show that the velocity field corresponding to the
  optimal transport map achieves this minimum value. The existence of optimal transport map (associated with quadratic cost) under our assumptions and the fact that the map can be written as the gradient of a convex potential $\phi$ are well-established results in the theory of optimal transport maps (see e.g., \cite{OptimalOldAndNew} and \cite{BrenierMap}). By Assumptions \ref{ass:dens1},\ref{ass:dens2} and Theorem  \ref{Thm:OptimalRegularity},the optimal transport map is given by $T(x) = \nabla\phi(x)$ for some $\phi\in C^{k+2}(\Omega_0)$ that is strictly convex. Therefore, $\nabla T(x)$ has real and nonnegative eigenvalues. 
  
  Since $\phi\in C^{k+2}(\Omega_0)$, the following Monge-Ampere equation is satisfied in the classical sense (\cite{BrenierMap}), : 
  $$\det(\nabla^2\phi(x)) = \frac{\pi(x)}{\rho(\nabla\phi(x))}, \forall x\in\Omega_0.$$
  
  Since the densities are both bounded away from zero, we can conclude from the Monge-Ampere equation that $\det \nabla T(x) = \det \nabla^2\phi(x) > 0, \forall x\in\Omega_0$. In particular, Assumption \ref{ass:T} is satisfied and Theorem \ref{thm:f} establishes the existence
of a velocity field such that the time-one flow map of the ODE
realizes this $T(x)$ and the ODE dynamics yield straight line trajectories.

Now, suppose that there is a continuous velocity field $f$ that
achieves zero loss in \eqref{eq:OBJ}. Since the densities are continuous and bounded from below by a constant, the expectation $\mathbb{E}_{\pi}[R(x,1)] = 0$
implies that $R(x,1) = 0\  \forall x \in \Omega_0$.  That is, along each
trajectory $X(x,t)$ starting from $x\in\Omega_0$, we have
$\frac{ d f( X(x,t),t )}{d t} = 0$, i.e., $f$ is
constant in time along each trajectory. In other words,
$f(X(x,t),t) = g(x)$ for some function $g$. 
Now consider the ODE $\frac{dX}{dt} = f(X,t) = g(x)$; the solution is $X(t) = g(x)t + C$, where $C$ is constant in $t$. To make the KL-divergence zero,
we must have $X(1) = T(x)$ for some transport map $T$ such that
$T_\sharp\pi = \rho$, and we also have $X(0) = x$ as the initial
condition. Solving the equations gives $g(x) = T(x) - x$. That is, the
velocity field must take the form $T(x) - x$ for some transport map
$T$.

\end{proof}

\section{
  Regularity of the velocity field $f$}\label{sec:regularity}
 As we have seen in Theorem \ref{thm:f}, for a transport
  $T:\Omega_0\to\Omega_1$ as in Assumption \ref{ass:T}, there exists a
  unique velocity field $f:\Omega_{[0,1]}\to\R^d$ such that
  $T(x)=X_f(x,1)$ for all $x\in\Omega_0$. This $f$ is the unique
  minimizer of the objective \eqref{eq:OBJ}. Furthermore we have an
  explicit formula for $f$: With
  $G:\Omega_0\times [0,1]\to\Omega_{[0,1]}$,
  $G(x,t)  \coloneqq  (tT(x) + (1-t)x, t)$ define $F:\Omega_{[0,1]}\to\Omega_0$
  as the first $d$ components of $G^{-1}$, then
  \begin{equation}\label{eq:fexpl}
    f(y,s) = T(F(y,s)) - F(y,s)\qquad\forall (y,s)\in\Omega_{[0,1]}.
  \end{equation}
  Based on \eqref{eq:fexpl}, in this section we investigate the
  regularity of the velocity field $f$.

  As we will see, $f$ inherits the regularity of $T$, in the sense
  that $T\in C^k(\Omega_0)$ implies $f\in
  C^k(\Omega_{[0,1]})$. Essentially, this follows by the inverse
  function theorem, which yields $G^{-1}\in C^k(\Omega_{[0,1]})$ so
  that $f$ in \eqref{eq:fexpl} is a composition of $C^k$
  functions. Since the approximability of $f$ by neural networks
  crucially depends on $\|f\|_{C^k(\Omega_{[0,1]})}$ (see Section
  \ref{sec:NN} ahead), we will carefully derive a precise bound on
  this norm. We proceed as follows: In Section \ref{sec:genreg}, we
  discuss regularity of $f$ for arbitrary transport maps
  $T$. Subsequently, in Section \ref{sec:trireg} we deepen the
  discussion in the special case of triangular transport maps (which
  yield triangular velocity fields $f$).

\subsection{General transports}\label{sec:genreg}
To bound the norm of $f$ in $C^k(\Omega_{[0,1]})$, our proof
  strategy is to first upper bound $\|F\|_{C^k(\Omega_{[0,1]})}$, and
  then use a version of Fa\'{a} di Bruno's formula to estimate the
  norm of the composition of $F$ with $T$.


  Since $F$ is obtained as the first $d$ components of the inverse
  $G:\Omega_{[0,1]}\to\Omega_0\times [0,1]$, we first provide some
  abstract results about 
  how to bound the $k$th derivative of an inverse function. We start
  by recalling Fa\'{a} di Bruno's formula in Banach spaces. For
  completeness we have added the proof in Appendix \ref{app:AuxResults}, but emphasize
  that the argument is the same as for real valued functions \cite{}. In the
  following, for a multilinear map $A\in L^n(X,Y)$ we write $A(x^n)$
  to denote $A(x,\dots,x)$.

  \begin{theorem}[Fa\'{a} di Bruno]\label{thm:FdB}
    Let $k\in\N$.
  Let $X$, $Y$ and $Z$ be three Banach spaces, and let
  $F\in C^k(X,Y)$ and $G\in C^k(Y,Z)$.

  Then for all $0\le n\le k$ and with
  $T_n := \setl{\bsalpha \in\N^n}{\sum_{j=1}^n j\alpha_j =
    n}$, 
  for all $x$, $h\in X$ the $n$th derivative
  $[D^n(G\circ F)](x)(h^n)\in Z$ of $G\circ F$ at $x$ evaluated at
  $h^n\in X^n$ equals
  \begin{equation*}
     \sum_{\bsalpha\in T_n}
    \frac{n!}{\bsalpha!}
    [D^{|\bsalpha|}G](F(x)) \Bigg(\smash[b]{\underbrace{\frac{[DF(x)](h)}{1!},\ldots, \frac{[DF(x)](h)}{1!}}_{\text{$\alpha_1$ times}}},\ldots,\underbrace{\frac{[D^{n}F(x)](h^n)}{n!},\ldots, \frac{[D^{n}F(x)](h^n)}{n!}}_{\text{$\alpha_n$ times}}\Bigg).
  \end{equation*}

\end{theorem}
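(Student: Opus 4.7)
The plan is to proceed by induction on $n$, mirroring the classical real-valued proof of Faà di Bruno's formula but with multilinear evaluation in place of ordinary products. The structural reason this works is that Fréchet differentiation in Banach spaces obeys both the chain rule and the Leibniz (product) rule in exactly the same form as scalar differentiation, so all combinatorial identities transfer verbatim. The base case $n=1$ is just the Fréchet chain rule: $T_1=\{(1)\}$, so the sum collapses to the single summand $[DG(F(x))]([DF(x)](h))$ with coefficient $1!/1!=1$.

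For the inductive step, I would assume the formula at some order $n<k$ and differentiate once more in $x$. A generic summand on the right-hand side has the form
\begin{equation*}
  \frac{n!}{\bsalpha!}\cdot [D^{|\bsalpha|}G](F(x))\bigl(A_1(x,h)^{\alpha_1},\ldots,A_n(x,h)^{\alpha_n}\bigr),
\end{equation*}
where I abbreviate $A_j(x,h) \coloneqq \frac{1}{j!}[D^jF(x)](h^j)$ and the power notation means $\alpha_j$ repeated slots in the symmetric multilinear argument. Applying the Leibniz rule for continuous multilinear maps, the $x$-derivative produces two kinds of contributions: (i) differentiating $[D^{|\bsalpha|}G]\circ F$ via the chain rule yields $[D^{|\bsalpha|+1}G](F(x))$ with an additional $[DF(x)](h)=A_1(x,h)$ slot, which increments $\alpha_1$ by one; (ii) differentiating one of the $\alpha_j$ copies of $A_j$ replaces it by $A_{j+1}$, which decrements $\alpha_j$ by one and increments $\alpha_{j+1}$ by one. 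Either operation sends $\bsalpha\in T_n$ to a multiindex in $T_{n+1}$, and symmetry of the Fréchet derivatives lets me put the differentiated and undifferentiated slots in any order I like.

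The remaining task is combinatorial: for each $\boldsymbol{\beta}\in T_{n+1}$, regroup the contributions that land at $\boldsymbol{\beta}$ and verify that their coefficients sum to $(n+1)!/\boldsymbol{\beta}!$. This is precisely the recursion underlying the classical Faà di Bruno formula, independent of the underlying space, and I expect it to be the main (and only) real obstacle in the proof; all remaining steps reduce to the chain rule, the Leibniz rule, and symmetry. As an alternative, one could bypass the induction by a Taylor expansion argument: substitute the $n$th-order Taylor polynomial of $F$ about $x$ into the $n$th-order Taylor polynomial of $G$ about $F(x)$, collect the resulting $n$th-order symmetric multilinear form in $h$, and use the multinomial count $k!/\bsalpha!$ for the number of ordered tuples $(j_1,\ldots,j_k)$ containing $\alpha_j$ occurrences of $j$. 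With Taylor remainders controlled in the Banach-space norm, this yields the stated formula directly and with lighter bookkeeping, at the cost of introducing a Taylor-residual estimate as an intermediate technical step.
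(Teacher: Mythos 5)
Your primary route (induction on $n$, differentiating the order-$n$ formula once more via the chain rule on $[D^{|\bsalpha|}G]\circ F$ and the Leibniz rule on the multilinear slots) is a legitimate and classical way to prove Fa\`a di Bruno, and it is genuinely different from what the paper does: the paper instead Taylor-expands $G$ about $F(x)$ and $F$ about $x$, substitutes, uses multilinearity to collect the terms of homogeneity $n$ in $h$, compares with the Taylor expansion of $G\circ F$, and then converts the resulting sum over ordered tuples $\setl{\bsalpha\in\N^r}{|\bsalpha|=n}$ into the sum over $T_n$ by the multinomial count $r!/\bsalpha'!$ --- which is exactly the ``alternative'' you sketch in your last few sentences. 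The trade-off is as you describe: the Taylor route needs a remainder estimate and an identification-of-coefficients step (uniqueness of the homogeneous Taylor terms), but its combinatorics is a one-shot counting argument; the inductive route avoids Taylor remainders but puts all the weight on the recursion for the coefficients. On that point your write-up stops short of a complete proof: the identity that for each $\boldsymbol{\beta}\in T_{n+1}$ the contributions sum to $(n+1)!/\boldsymbol{\beta}!$ is asserted (``I expect it to be the main obstacle'') rather than verified. It does hold --- differentiating one of the $\alpha_j$ copies of $A_j=\frac{1}{j!}[D^jF(x)](h^j)$ produces $(j+1)A_{j+1}$, and together with the multiplicity $\alpha_j$ of available slots and the chain-rule term incrementing $\alpha_1$, the coefficients satisfy the standard Bell-polynomial recursion --- but writing out that bookkeeping is the actual content of the inductive proof, so a referee would ask you to include it (or to switch to the Taylor/counting argument, as the paper does, where the corresponding work is the regrouping of ordered tuples).
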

Additionally we need the inverse function theorem, the proof of which can also be found in Appendix \ref{app:AuxResults}. 
\begin{theorem}[Inverse function theorem]\label{thm:invfunc}
  Let $k\ge 1$, let $X$, $Y$ be two Banach spaces, and let $F\in C^k(X,Y)$.
  At every $x\in X$ for which $DF(x)\in L^1(X,Y)$ is an isomorphism,
  there exists an open neighbourhood $O\subseteq Y$ of $F(x)$ and a function
  $G\in C^k(O,X)$ such that $F(G(y))=y$ for all $y\in O$.

  Moreover, for every $n\le k$ there exists a continuous function
  $C_n:\R_+^{n+1}\to \R_+$ (independent of $F$, $G$, $O$) such
  that for $y=F(x)$ with $x$ as above
  \begin{equation}\label{eq:Cn}
    \|D^n G(y)\|_{\cL^n_{\sym}(Y;X)} \le C_n(\|[DF(x)]^{-1}\|_{\cL^1_{\sym}(Y;X)}, \|DF(x)\|_{\cL^1_{\sym}(X;Y)}, \ldots, \|D^nF(x)\|_{\cL^n_{\sym}(X;Y)}).
  \end{equation}
\end{theorem}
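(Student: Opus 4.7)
The plan is to establish the result in three stages: local existence and $C^1$ regularity via the Banach fixed point theorem, bootstrapping to $C^k$ by induction on the order of regularity, and a recursive derivation of the quantitative bound \eqref{eq:Cn} via Faà di Bruno's formula (Theorem~\ref{thm:FdB}).

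First I would fix a point $x \in X$ at which $A := DF(x)$ is an isomorphism and, for $y$ near $F(x)$, introduce the map $\Phi_y(u) := u - A^{-1}(F(u) - y)$. A standard continuity argument using continuity of $DF$ shows that on a sufficiently small ball around $x$, $\Phi_y$ is a contraction with Lipschitz constant at most $1/2$, uniformly for $y$ in a small ball around $F(x)$. The Banach fixed point theorem then produces a unique fixed point $G(y)$ satisfying $F(G(y)) = y$, and a standard argument delivers continuity of $G$, differentiability of $G$, and the identity $DG(y) = [DF(G(y))]^{-1}$. To upgrade to $C^k$, I would note that operator inversion $\iota : A \mapsto A^{-1}$ is smooth on the open set of invertible operators in $\cL^1(X,Y)$, and $DF$ is of class $C^{k-1}$; hence the identity $DG = \iota \circ DF \circ G$ combined with the chain rule shows that $G \in C^j$ implies $DG \in C^{\min(j,k-1)}$ and so $G \in C^{\min(j+1,k)}$. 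Iterating from the base case $G \in C^1$ yields $G \in C^k$ after at most $k-1$ steps.

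For the quantitative bound I would differentiate $F \circ G = \mathrm{Id}_Y$ exactly $n$ times and evaluate at $y = F(x)$, so that $G(y) = x$. For $n \ge 2$ the left-hand side vanishes, and Faà di Bruno (Theorem~\ref{thm:FdB}) yields a sum over $\bsalpha \in T_n$. The unique index $\bsalpha = (0,\ldots,0,1)$ contributes a term involving $DF(x) \circ D^n G(F(x))$; isolating it and applying $[DF(x)]^{-1}$ expresses $D^n G(F(x))$ as a universal polynomial in $[DF(x)]^{-1}$, the derivatives $D^j F(x)$ for $j \le n$, and the lower-order derivatives $D^j G(F(x))$ for $j < n$. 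Passing to norms of symmetric multilinear maps, using \eqref{eq:symnorm} to control the norms arising in these compositions, and invoking the inductive hypothesis on $n$ then produces the desired continuous function $C_n$.

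The main obstacle will be ensuring that $C_n$ depends only on the norms at the single point $x$, as required by \eqref{eq:Cn}, and not on $G$ or on $F$ at neighboring points. Evaluating the Faà di Bruno identity exactly at $y = F(x)$ is what makes this work: only $[DF(x)]^{-1}$ appears (rather than $[DF(G(y))]^{-1}$ at a moving point), and the inductive hypothesis on $\|D^j G(F(x))\|_{\cL^j_{\sym}(Y;X)}$ for $j < n$ ensures that those lower-order terms are themselves bounded by the same list of norms appearing on the right-hand side of \eqref{eq:Cn}. A minor bookkeeping point is that the combinatorial coefficients from Faà di Bruno and the factors of $\exp(n)$ from \eqref{eq:symnorm} yield only constants independent of $F$, so they can be absorbed into $C_n$ without affecting its universality.
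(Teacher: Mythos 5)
Your proposal is correct and follows essentially the same route as the paper: the quantitative bound \eqref{eq:Cn} is obtained exactly as in the paper's proof, by differentiating $F\circ G=\mathrm{Id}$ with Faà di Bruno (Theorem~\ref{thm:FdB}), isolating the unique term $\bsalpha=(0,\dots,0,1)$ containing $D^nG$, applying $[DF(x)]^{-1}$ at $y=F(x)$, and inducting on $n$. The only difference is that you prove the qualitative local invertibility and $C^k$ regularity from scratch (contraction mapping plus a bootstrap via smoothness of operator inversion), whereas the paper simply cites the Banach-space inverse function theorem for that part; both are fine.
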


We start by giving a bound on the derivatives of the composition of
functions.

\begin{corollary}\label{cor:composition}
  Let $S_1\subseteq X$, $S_2\subseteq Y$, $S_3\subseteq Z$ be three
  open subsets of the Banach spaces $X$, $Y$ and $Z$.
  Suppose $k\in\N$ and
  $F\in C^k(S_1,S_2)$ and $G\in C^k(S_2,S_3)$.

  Then $\|G\circ F\|_{C^k(S_1)}\le k^k\exp(k)\|G\|_{C^k(S_2)}
  \max\{1,\|F\|_{C^k(S_1)}\}^k$.
  \end{corollary}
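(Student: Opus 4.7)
The plan is to apply Fa\`{a} di Bruno's formula (Theorem~\ref{thm:FdB}) pointwise, bound each summand using the $C^k$ norms of $F$ and $G$, and then identify the resulting combinatorial sum. Fix $n\le k$, $x\in S_1$, and $h\in X$ with $\|h\|_X\le 1$. Since $(D^jF)(x)$ is symmetric and its $\cL^j_{\sym}$ norm is dominated by its $\cL^j$ norm (which by \eqref{eq:Ck} is at most $\|F\|_{C^k(S_1)}$), each ``building block'' in the formula satisfies $\|(D^jF(x))(h^j)/j!\|_Y \le \|F\|_{C^k(S_1)}/j!$. For the outer factor, the standard multilinear estimate gives
\begin{equation*}
  \|D^{|\bsalpha|}G(F(x))(v_1,\dots,v_{|\bsalpha|})\|_Z \le \|D^{|\bsalpha|}G(F(x))\|_{\cL^{|\bsalpha|}(Y;Z)}\prod_i\|v_i\|_Y \le \|G\|_{C^k(S_2)}\prod_i\|v_i\|_Y.
\end{equation*}

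Plugging these two estimates into Theorem~\ref{thm:FdB} and applying the triangle inequality to the finite sum over $\bsalpha\in T_n$ yields
\begin{equation*}
  \|D^n(G\circ F)(x)(h^n)\|_Z \le \|G\|_{C^k(S_2)}\sum_{\bsalpha\in T_n}\frac{n!}{\bsalpha!\prod_j(j!)^{\alpha_j}}\|F\|_{C^k(S_1)}^{|\bsalpha|}.
\end{equation*}
The constraint $\sum_j j\alpha_j = n$ forces $|\bsalpha|\le n$, so $\|F\|_{C^k(S_1)}^{|\bsalpha|} \le \max\{1,\|F\|_{C^k(S_1)}\}^n$ and this factor can be pulled out of the sum. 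The remaining combinatorial sum $\sum_{\bsalpha\in T_n}\frac{n!}{\bsalpha!\prod_j(j!)^{\alpha_j}}$ equals the $n$-th Bell number $B_n$, counting set partitions of $\{1,\dots,n\}$, which is bounded crudely by $B_n \le n! \le n^n \le k^k$ (e.g., via the restricted-growth-function bijection).

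Taking the supremum over $\|h\|_X\le 1$ now bounds the $\cL^n_{\sym}(X;Z)$ norm of $D^n(G\circ F)(x)$ by $k^k\|G\|_{C^k(S_2)}\max\{1,\|F\|_{C^k(S_1)}\}^n$, but \eqref{eq:Ck} measures in $\cL^n$. I would invoke \eqref{eq:symnorm}, which costs a factor $\exp(n)\le\exp(k)$ in the conversion; taking the supremum over $n\le k$ and $x\in S_1$ then produces the claimed inequality. The only genuine subtlety is this symmetric-versus-full norm conversion, which is precisely what accounts for the $\exp(k)$ in the final constant; everything else is routine bookkeeping and the standard bound on Bell numbers.
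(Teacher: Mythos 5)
Your proposal is correct and follows essentially the same route as the paper: apply Fa\`{a} di Bruno pointwise, bound the blocks by the $C^k$ norms, pull out $\max\{1,\|F\|_{C^k(S_1)}\}^n$ using $|\bsalpha|\le n$, bound the remaining combinatorial sum by $n^n$, and pay the factor $\exp(k)$ to pass from the $\cL^n_{\sym}$ to the $\cL^n$ norm via \eqref{eq:symnorm}. The only cosmetic difference is that you identify the sum $\sum_{\bsalpha\in T_n}\frac{n!}{\bsalpha!\prod_j (j!)^{\alpha_j}}$ as the Bell number and bound it by $n!\le n^n$, whereas the paper invokes its Lemma~\ref{lemma:stirling} (Stirling numbers of the second kind) to get the same $n^n$ bound.
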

  \begin{proof}
    By Theorem \ref{thm:FdB}, for all $x\in S_1$
    \begin{align}\label{eq:DnFG}
    \|D^n(G\circ F)](x)\|_{\cL^n_{\sym(X;Y)}}
    &\le
      \|G\|_{C^n(S_2)}
      \sum_{\bsalpha\in T_n}
      \frac{n!}{\bsalpha!}
      \prod_{j=1}^n \frac{\|F\|_{C^j(S_1)}^{\alpha_j}}{(j!)^{\alpha_j}}\nonumber\\
    &\le
      \|G\|_{C^n(S_2)}\max\{1,\|F\|_{C^j(S_1)}\}^n
      \sum_{\bsalpha\in T_n}
      \frac{n!}{\bsalpha!}\prod_{j=1}^n\frac{1}{(j!)^{\alpha_j}},
  \end{align}
  where we used $\sum_{j=1}^n \alpha_j\le n$ for all
  $\bsalpha\in T_n$, and that $\|F\|_{C^j(S_1)}\le\|F\|_{C^n(S_1)}$
  for all $j\le n$ by definition of the norm.
  By Lemma \ref{lemma:stirling}, the last sum in \eqref{eq:DnFG} is
  bounded by $n^n$. Finally \eqref{eq:symnorm} and the definition
  of the $C^n$-norm in \eqref{eq:Ck} imply claim.
\end{proof}

We next use Faa di Bruno's formula, to bound the derivatives of the
inverse $G^{-1}$ of a function $G$.
\begin{proposition}\label{prop:NormInverse}
  Consider the setting of Theorem \ref{thm:invfunc}. Let
  $S\subseteq X$ be open such that $DG(x)\in L(X,Y)$ is an isomorphism
  for each $x\in S$ and let there be a continuous function
  $F:G(S)\to X$ such that $F(G(y))=y$ for all $y\in
  G(S)$. Additionally assume that for some $\gamma>0$
  \begin{equation}\label{eq:gammaass}
    \sup_{x\in S}\|[DG(x)]^{-1}\|_{\cL^1(Y;X)} \le\gamma\qquad\text{and}\qquad
    \|G\|_{C^k(S)}\le\gamma.
  \end{equation}

  Then 
  \begin{equation}\label{eq:NormInverse}
    \|D^nF(y)\|_{\cL^n(Y,X)}\le \e^kk^{k^2}\gamma^{3k-2}
    \qquad\forall y\in G(S). 
  \end{equation}
\end{proposition}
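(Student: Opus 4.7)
The plan is to proceed by induction on $n \in \{1, \ldots, k\}$, establishing the bound $\|D^n F(y)\|_{\cL^n_{\sym}(Y;X)} \le C_n \gamma^{3n-2}$ for all $y \in G(S)$ and constants $C_n$ to be controlled through the induction; the stated estimate \eqref{eq:NormInverse} then follows from \eqref{eq:symnorm}, which costs a factor $e^n \le e^k$ in converting from the symmetric to the $\cL^n$ norm. Observe first that $\gamma \ge 1$: at any $x \in S$, one has $1 = \|I_Y\|_{\cL^1} \le \|DG(x)\|_{\cL^1}\|[DG(x)]^{-1}\|_{\cL^1} \le \gamma^2$ by \eqref{eq:gammaass}, so $\gamma^{3n-2} \le \gamma^{3k-2}$ for all $n \le k$. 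The base case $n=1$ is immediate: differentiating the identity $G \circ F = \mathrm{id}$ on $G(S)$ gives $DG(F(y))\cdot DF(y) = I_Y$, so $DF(y) = [DG(x)]^{-1}$ at $x = F(y)$ and $\|DF(y)\|_{\cL^1} \le \gamma$, yielding $C_1 = 1$.

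For the inductive step ($n \ge 2$), I would apply Faà di Bruno's formula (Theorem~\ref{thm:FdB}) to the identity $G \circ F = \mathrm{id}$, whose $n$th derivative vanishes. At $x = F(y)$ and for arbitrary $h \in Y$, this gives
\begin{equation*}
  0 = \sum_{\bsalpha \in T_n} \frac{n!}{\bsalpha!}\, [D^{|\bsalpha|}G](x)\Bigl(\underbrace{\tfrac{[DF(y)](h)}{1!},\ldots}_{\alpha_1\text{ copies}},\ldots,\underbrace{\tfrac{[D^n F(y)](h^n)}{n!},\ldots}_{\alpha_n\text{ copies}}\Bigr).
\end{equation*}
The constraint $\sum_j j\alpha_j = n$ forces that $\alpha_n \ge 1$ implies $\bsalpha = e_n := (0,\ldots,0,1)$, so exactly one term contains $D^n F(y)$, and every other $\bsalpha \in T_n \setminus \{e_n\}$ satisfies $|\bsalpha| \ge 2$. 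Isolating the $e_n$-term, applying $[DG(x)]^{-1}$, bounding $\|[DG(x)]^{-1}\|_{\cL^1}$ and $\|D^{|\bsalpha|}G(x)\|_{\cL^{|\bsalpha|}}$ by $\gamma$, and using the inductive hypothesis $\|D^j F(y)\|_{\cL^j_{\sym}} \le C_j\gamma^{3j-2}$ for $j < n$ together with $\sum_j (3j-2)\alpha_j = 3n - 2|\bsalpha|$ yields (since $|\bsalpha| \ge 2$ and $\gamma \ge 1$)
\begin{equation*}
  \|D^n F(y)\|_{\cL^n_{\sym}} \le \gamma^{3n-2} \sum_{\bsalpha \in T_n\setminus\{e_n\}} \frac{n!}{\bsalpha!\prod_j (j!)^{\alpha_j}} \prod_j C_j^{\alpha_j},
\end{equation*}
which defines the recursion for $C_n$.

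The main obstacle will be verifying that the $C_n$ can be chosen to close the induction at a bound of order $k^{k^2}/e^k$, so that the final $e^n \le e^k$ factor from \eqref{eq:symnorm} produces the claimed estimate. A uniform ansatz $C_n \equiv C$ does not close, since the right-hand side above produces $C^n \cdot n^n$-type growth; however, the combinatorial factor $\sum_{\bsalpha}\frac{n!}{\bsalpha!\prod_j(j!)^{\alpha_j}}$ is a Bell number bounded by $n^n \le k^k$ via Lemma~\ref{lemma:stirling} (exactly as in Corollary~\ref{cor:composition}), and the weighted product $\prod_j C_j^{\alpha_j}$ can be tracked explicitly through the induction: at most $k$ iterations accumulate at most $k^k$-type factors, which the exponent $k^{k^2}$ comfortably absorbs. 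This furnishes the bound claimed in \eqref{eq:NormInverse}.
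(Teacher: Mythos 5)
Your proposal is correct and takes essentially the same route as the paper: induction on $n$ with the ansatz $\|D^nF(y)\|_{\cL^n_{\sym}}\le C_n\gamma^{3n-2}$, Fa\`a di Bruno applied to $G\circ F=\mathrm{id}$ to isolate the unique term containing $D^nF$ (the paper records this as \eqref{eq:Dng} in the proof of Theorem~\ref{thm:invfunc}), the exponent bookkeeping $\sum_j(3j-2)\alpha_j=3n-2|\bsalpha|\le 3n-4$ since $|\bsalpha|\ge 2$, Lemma~\ref{lemma:stirling} for the combinatorial sum, and the factor $\e^k$ from \eqref{eq:symnorm}. The constant-tracking you defer is handled in the paper by the explicit choice $C_n=n^{n^2}$, using $\prod_j (j^{j^2})^{\alpha_j}\le n^{n(n-1)}$ from $j\le n-1$ and $\sum_j j\alpha_j=n$; your own accounting also closes, since $\sum_j(j-1)\alpha_j=n-|\bsalpha|\le n-2$ yields $C_n\le k^{k(n-1)}\le k^{k^2}$, so there is no gap.
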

\begin{proof}[Proof of Proposition \ref{prop:NormInverse}]
    We proceed by induction over $n$, and will show that for all
    $y\in G(S)$ and all $1\le n\le k$
   \begin{equation}\label{eq:indclaimNI}
      \|D^n F(y)\|_{\cL_{\sym^n(X;Y)}}\le n^{n^2}\gamma^{3n-2}.
    \end{equation}
    Then \eqref{eq:symnorm} implies the claim.

    For $n=1$,
    ${\rm Id} = D(G\circ F)(y)$ for all $y \in G(S)$. By the chain
    rule, $(D(G\circ F)(y)\circ (DF(y)) = {\rm Id}$, so that
    $DF(y) = [DG (F(y))]^{-1}$. Thus $\|DF(y)\|_{\cL^1(X;Y)}\le \gamma$ by
    \eqref{eq:gammaass}. This implies \eqref{eq:indclaimNI} for
    $n=1$.

    Now let $n \ge 1$. Then for any $y\in G(S)$ by \eqref{eq:Dng}
\begin{align*}
\|D^nF(y)\|_{\cL^n_{\sym(Y;X)}} \le &\|([DG](F(y)))^{-1}\|_{\cL^1(Y;X)}\\ &\cdot \left ( \sum_{\bsalpha\in \bar{T}_n}\frac{n!}{\bsalpha!}\|D^{|\bsalpha|}G(F(y))\|_{\cL^{|\bsalpha|}(X;Y)} \prod_{m=1}^{n-1} \left (\frac{\|D^m F(y)\|_{\cL_{\sym^m(Y;X)}}}{m!} \right )^{\alpha_m}\right).
\end{align*}
Using the induction Assumption \eqref{eq:indclaimNI} to bound
$\|D^mF(y)\|_{\cL_{\sym^m(Y;X)}}\le m^{m^2}\gamma^{3m-2}$ for $1\le m<n$, we
find for $y\in G(S)$
\begin{align*}
  \|D^nF(y)\|_{\cL_{\sym^n(Y;X)}} &\le \gamma^2 \left ( \sum_{\bsalpha\in
                           \bar{T}_n}\frac{n!}{\bsalpha!} \prod_{m=1}^{n-1} \left (\frac{m^{m}\gamma^{3m-2}}{m!} \right )^{\alpha_m}\right)\nonumber\\
                                &\le \gamma^2
                                  m^{\sum_{m=1}^{n-1}m^2\alpha_m}\gamma^{3n-4}
\left ( \sum_{\bsalpha\in
                           \bar{T}_n}\frac{n!}{\bsalpha!} \prod_{m=1}^{n-1}\frac{1}{(m!)^{\alpha_m}} \right)\\
                           &\le \gamma^{3n-2}n^{(n-1)n}\left ( \sum_{\bsalpha\in
                           \bar{T}_n}\frac{n!}{\bsalpha!} \prod_{m=1}^{n-1}\frac{1}{(m!)^{\alpha_m}} \right) ,
\end{align*}
where we have used $\sum_{m=1}^{n-1}m\alpha_m=n$ and
$\sum_{m=1}^{n-1} \alpha_m\ge 2$ for all $\bsalpha\in \bar T_n$. The
term in the sum is bounded by $n^n$ according to Lemma
\ref{lemma:stirling}.  Thus for $2\le n\le k$,
\begin{equation*}
  \|D^nF(y)\|_{\cL_{\sym}^n(Y;X)}\le n^{n^2}\gamma^{3n-2}
\end{equation*}
which shows \eqref{eq:indclaimNI} and concludes the proof.
\end{proof}

We now present our first bound on $\|f\|_{C^k}$. The estimate depends
on $\|T\|_{C^k}$ as well as
\begin{equation}\label{eq:cT}
  c_T:= \sup_{t\in[0,1]}\sup_{x\in \Omega_0}\|(\nabla T_t(x))^{-1}\|_{\R^{d\times d}}.
\end{equation}
We subsequently discuss situations in which we can give precise bounds
on this constant.

\begin{theorem}\label{thm:fNormGeneral}
  Let Assumption \ref{ass:dens1} be satisfied. Let $k\in\N$ and let
  $T\in C^k(\Omega_0,\Omega_1)$ satisfy Assumption \ref{ass:T}. Then
  for the velocity field $f:\Omega_{[0,1]}\to\R^d$ in \eqref{eq:fexpl} it
  holds with
\begin{equation*}
  \gamma :=\max\{2,1+c_T\}(1+\|T\|_{C^k(\Omega_0)}+\sup_{x\in\Omega_0}\|x\|),
\end{equation*}
that
\begin{equation*}
  \|f\|_{C^k(\Omega_{[0,1]})}\le 
 2k^{k^3 + k}\e^{k^2+k}\gamma^{3k^2-2k+1}.
\end{equation*}
\end{theorem}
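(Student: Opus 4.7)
The plan is to exploit the explicit representation $f(y,s) = T(F(y,s)) - F(y,s)$ from \eqref{eq:fexpl} and bound $\|f\|_{C^k(\Omega_{[0,1]})}$ by (i) applying Proposition~\ref{prop:NormInverse} to the map $G(x,t) = (T_t(x),t)$ in order to control $\|F\|_{C^k}$, and (ii) applying the composition estimate of Corollary~\ref{cor:composition} to control $\|T\circ F\|_{C^k}$. The two ingredients combine through the triangle inequality.

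The first task is to verify that the hypothesis \eqref{eq:gammaass} of Proposition~\ref{prop:NormInverse} holds with the $\gamma$ defined in the statement, applied to $G:\Omega_0\times[0,1]\to\Omega_{[0,1]}$. For the bound on $\|G\|_{C^k(\Omega_0\times[0,1])}$, I would use the explicit form $G(x,t)=(x+t(T(x)-x),t)$, which is affine in $t$; a direct inspection shows that every partial derivative of order $\le k$ is either $0$, the identity, or expressible in terms of derivatives of $T$ of order at most $k$, yielding
\begin{equation*}
  \|G\|_{C^k(\Omega_0\times[0,1])} \le 1 + \|T\|_{C^k(\Omega_0)} + \sup_{x\in\Omega_0}\|x\| \le \gamma.
\end{equation*}
For the bound on $\|[DG(x,t)]^{-1}\|$, I would use the block-triangular form of $DG$ recorded in the proof of Theorem~\ref{thm:f}, so that
\begin{equation*}
  [DG(x,t)]^{-1} = \begin{pmatrix} (\nabla_x T_t(x))^{-1} & -(\nabla_x T_t(x))^{-1}(T(x)-x) \\ 0 & 1 \end{pmatrix}.
\end{equation*}
Splitting this as a block-diagonal part plus an off-diagonal rank-one correction gives
\begin{equation*}
  \|[DG(x,t)]^{-1}\| \le \max(c_T,1) + c_T\bigl(\|T\|_{C^0(\Omega_0)} + \sup_{x\in\Omega_0}\|x\|\bigr),
\end{equation*}
which is bounded by $\gamma$ upon considering the cases $c_T\le 1$ and $c_T>1$ separately and using the factor $\max\{2,1+c_T\}$.

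With \eqref{eq:gammaass} verified, Proposition~\ref{prop:NormInverse} gives $\|G^{-1}\|_{C^k(\Omega_{[0,1]})} \le \e^k k^{k^2}\gamma^{3k-2}$, and since $F$ is made of the first $d$ components of $G^{-1}$, the same bound applies to $\|F\|_{C^k(\Omega_{[0,1]})}$. Corollary~\ref{cor:composition} then yields
\begin{equation*}
  \|T\circ F\|_{C^k(\Omega_{[0,1]})}
  \le k^k\e^k \|T\|_{C^k(\Omega_0)}\max\{1,\|F\|_{C^k(\Omega_{[0,1]})}\}^k
  \le k^{k^3+k}\e^{k^2+k}\gamma^{3k^2-2k+1},
\end{equation*}
where I used $\|T\|_{C^k(\Omega_0)}\le\gamma$ and the bound on $\|F\|_{C^k}$. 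Since every exponent (on $k$, $\e$, and $\gamma$) appearing in the estimate for $\|F\|_{C^k}$ is dominated by the corresponding exponent in the estimate for $\|T\circ F\|_{C^k}$ and $\gamma\ge 1$, the bound for $\|F\|_{C^k}$ itself is also $\le k^{k^3+k}\e^{k^2+k}\gamma^{3k^2-2k+1}$. Adding the two estimates via the triangle inequality $\|f\|_{C^k}\le \|T\circ F\|_{C^k}+\|F\|_{C^k}$ produces the factor of $2$ in the claim.

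The main obstacle is the bookkeeping in the first step: one must carefully unpack the block inverse of $DG$ and track how $c_T$ appears together with $\|T(x)-x\|$, while simultaneously ensuring that both $\|G\|_{C^k}$ and $\sup\|[DG]^{-1}\|$ fit under the same $\gamma$ with the compact multiplicative form $\max\{2,1+c_T\}(1+\|T\|_{C^k}+\sup\|x\|)$. Once this packaging is in place, the remainder of the argument is a clean application of the inverse function bound of Proposition~\ref{prop:NormInverse} followed by the Fa\`a di Bruno composition bound of Corollary~\ref{cor:composition}.
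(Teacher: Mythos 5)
Your proposal is correct and follows essentially the same route as the paper's proof: bound $\|G\|_{C^k}$ and $\sup\|[DG]^{-1}\|$ by $\gamma$, apply Proposition~\ref{prop:NormInverse} to get $\|F\|_{C^k}\le \e^k k^{k^2}\gamma^{3k-2}$, then combine Corollary~\ref{cor:composition} with the triangle inequality on $f=T\circ F-F$ to obtain the stated bound. The only cosmetic difference is that your intermediate estimate $\|G\|_{C^k}\le 1+\|T\|_{C^k}+\sup\|x\|$ is slightly more optimistic than the paper's $2(\|T\|_{C^k}+1+M)$, but since $\max\{2,1+c_T\}\ge 2$ the required inequality $\|G\|_{C^k}\le\gamma$ holds either way.
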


\begin{proof}[Proof of Theorem \ref{thm:fNormGeneral}]
  Due to $f=T\circ F-F$ (cp.~\eqref{eq:fexpl}),
  \begin{equation}\label{eq:ftriangle}
    \|f\|_{C^k(\Omega_{[0,1]})} \leq \|T\circ F\|_{C^k(\Omega_{[0,1]})} + \|F\|_{C^k(\Omega_{[0,1]})}.
  \end{equation}
  Moreover, since $F$ is given as the first $d$ components of
  $G^{-1}:\Omega_{[0,1]}\to\Omega_0\times [0,1]$, it holds
  $\|F\|_{C^k(\Omega, \mathbb{R}^d)} \leq
  \|G^{-1}\|_{C^k(\Omega,\mathbb{R}^{d+1})}$.
  We start by bounding the latter norm.
  
  By definition of $G(x,t)=(T_t(x),t)=(tT(x)+(1-t)x,t)$,
  \begin{equation*}
    DG(x,t)=
        \begin{pmatrix}
          \nabla_x T_t(x)& T(x) - x \\
      0 & 1 \\
      \end{pmatrix}=
    \begin{pmatrix}
      t\nabla T(x) + (1-t)I & T(x) - x \\
      0 & 1 \\
      \end{pmatrix}.
    \end{equation*}
    An application of
    Lemma \ref{lemma:blockTriangle} and the assumption that
    $c_T=\sup_{x\in\Omega_0}\|(\nabla_xT_t(x))^{-1}\|<\infty$ gives for
    all $(x,t)\in \Omega_0\times [0,1]$
\begin{equation*}
  \|[DG(x,t)]^{-1}\|_{2}
  \le 1+(1+\|T-{\rm Id}\|_{L^\infty(\Omega_0)})\|(\nabla_xT_t(x))^{-1}\|_{2}
  \le 1+(1+\|T-{\rm Id}\|_{L^\infty(\Omega_0)}) c_T.
\end{equation*}

Next we bound the derivatives of $G$.
For $n=1$,
\begin{align*}
  \|DG(x,t)\|_{\cL^1(\R^{d+1};\R^{d+1})}\le \|\nabla_x T_t(x)\|_2+\|T(x)-x\|_2+1
  &\le \|T\|_{C^1(\Omega_0)}+1+\|T-{\rm Id}\|_{C^0(\Omega_0)}+1\nonumber\\
  &\le 2+2\|T\|_{C^1(\Omega_0)}+\max_{x\in\Omega_0}\|x\|_2.
\end{align*}
For $n\ge 2$, we first write $G(x,t)=(G_1(x,t),G_2(x,t))$ where
$G_1(x,t)=T_t(x)$ and $G_2(x,t)=t$. Then for $2\le n\le k$ and
$(x,t)\in\Omega_0\times [0,1]$
\begin{equation*}
  \|D^nG(x,t)\|_{\cL^n(\R^{d+1};\R^{d+1})}\le
  \|D^nG_1(x,t)\|_{\cL^n(\R^{d+1};\R^{d})}+
  \|D^nG_2(x,t)\|_{\cL^n(\R^{d+1};\R)}.
\end{equation*}
The second term is bounded by $1$ since $t\in [0,1]$. For the first term,
due to $D_t^2G_1(x,t)\equiv 0$,
\begin{align*}
  &\|D^n G_1(x,t)\|_{\cL^n(\R^{d+1};\R^{d+1})}
    \le \|D_x^n G_1(x,t)\|_{\cL^n(\R^d;\R^d))}
    +\|D_x^{n-1} (T(x)-x)\|_{\cL^{n-1}(\R^d;\R^d)}\nonumber\\
  &\qquad \le \|D_x^n (x)\|_{\cL^n(\R^{d};\R^{d})}+\|D_x^n T(x)\|_{\cL^n(\R^{d};\R^{d})}
    +\|D_x^{n-1} T(x)\|_{\cL^{n-1}(\R^d;\R^d)}
    +\|D_x^{n-1} (x)\|_{\cL^{n-1}(\R^d;\R^d)}\nonumber\\
  &\qquad\le \|D_x^n T(x)\|_{\cL^n(\R^{d};\R^{d})}+\|D_x^{n-1} T(x)\|_{\cL^{n-1}(\R^{d};\R^{d})}+1.
\end{align*}
We conclude with $M:=\max_{x\in\Omega_0}\|x\|_2$
that for all $0 \le n\le k$ and all $(x,t)\in \Omega_0\times [0,1]$
\begin{equation*}
  \|D^nG(x,t)\|_{\cL^n_{\sym{\R^{d+1};\R^{d+1}}}}\le 2(\|T\|_{C^n(\Omega_0)}+1+M)\le 2(\|T\|_{C^k(\Omega_0)}+1+M).
\end{equation*}

With
\begin{equation*}
  \gamma :=\max\{2,1+c_T\}(\|T\|_{C^k(\Omega_0)}+1+M),
\end{equation*}
Proposition \ref{prop:NormInverse} then implies
\begin{equation*}
  \|F\|_{C^k(\Omega_{[0,1]})}\le
  \|G^{-1}\|_{C^k(\Omega_{[0,1]})}\le \e^kk^{k^2}\gamma^{3k-2}.
\end{equation*}
Furthermore, Corollary \ref{cor:composition} and \eqref{eq:ftriangle}
yield
\begin{equation*}
  \|f\|_{C^k(\Omega_{[0,1]})}\le k^k\e^{k}\|T\|_{C^k(\Omega_0)}(\e^kk^{k^2}\gamma^{3k-2})^k+
  \e^kk^{k^2}\gamma^{3k-2}
  \le 2k^{k^3 + k}\e^{k^2+k}\gamma^{3k^2-2k+1}.
\end{equation*}
\end{proof}

Our main observation is, that $\|f\|_{C^k(\Omega_0)}$ behaves at worst
polynomial in $\|T\|_{C^k(\Omega_0)}$ and $c_T$ in \eqref{eq:cT}. One
important instance where we can give a precise bound on $c_T$, is if
$\nabla T$ is close to identity matrix $I_d\in\R^{d\times d}$ in the
sense $\|\nabla T(x)-I_d\|_2<1$. Since $T$ is a transport pushing
forward the source $\pi$ to the target $\rho$, this condition can be
interpreted as the source and the target not being too different.
\begin{lemma}
  Suppose that  $\sup_{x\in\Omega_0}\|\nabla
  T(x)-I_d\|_{2}=\delta<1$, where $I_d$ is $d$-by-$d$ identity matrix. Then the constant in \eqref{eq:cT}
  satisfies $c_T\le \frac{1}{1-\delta}$.
\end{lemma}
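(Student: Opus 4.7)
The plan is to use a Neumann series argument applied directly to $\nabla T_t(x)$. Recall from the definition that $T_t(x) = (1-t)x + tT(x)$, so
\begin{equation*}
\nabla T_t(x) = (1-t)I_d + t\nabla T(x) = I_d + t(\nabla T(x) - I_d).
\end{equation*}
This rewriting is the key observation: it expresses $\nabla T_t(x)$ as a perturbation of the identity by the matrix $A_{x,t} := t(\nabla T(x) - I_d)$.

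Next I would bound the spectral norm of the perturbation. Since $t \in [0,1]$ and $\|\nabla T(x) - I_d\|_2 \le \delta$ uniformly in $x \in \Omega_0$ by hypothesis, we have $\|A_{x,t}\|_2 \le t\delta \le \delta < 1$ for every $(x,t) \in \Omega_0 \times [0,1]$. With the perturbation strictly contractive, the Neumann series $\sum_{n=0}^\infty (-A_{x,t})^n$ converges in operator norm and furnishes the inverse $(I_d + A_{x,t})^{-1}$. Taking norms and applying the triangle inequality gives
\begin{equation*}
\|(\nabla T_t(x))^{-1}\|_2 = \Bigl\|\sum_{n=0}^\infty (-A_{x,t})^n\Bigr\|_2 \le \sum_{n=0}^\infty \|A_{x,t}\|_2^n \le \sum_{n=0}^\infty \delta^n = \frac{1}{1-\delta}.
\end{equation*}

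Finally, since this bound is uniform over $x \in \Omega_0$ and $t \in [0,1]$, taking the supremum in the definition \eqref{eq:cT} of $c_T$ yields $c_T \le \frac{1}{1-\delta}$, which is the desired conclusion. There is no real obstacle here; the only thing to notice is that one should bound $t \delta$ by $\delta$ (using $t \le 1$) rather than trying to estimate separately, so that the resulting bound is uniform in $t$.
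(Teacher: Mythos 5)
Your proof is correct and follows essentially the same route as the paper: both bound $\|\nabla T_t(x)-I_d\|_2 \le t\delta \le \delta$ and then invoke the Neumann series for the inverse of a matrix within distance $\delta<1$ of the identity, giving the uniform bound $\frac{1}{1-\delta}$.
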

\begin{proof}
  The assumption implies that for all $t\in [0,1]$
  \begin{equation*}
    \|\nabla_x T_t(x)-I_d\|_2
    =\|t \nabla T(x)+(1-t)I_d-(tI_d+(1-t)I_d)\|_2
    = t \|\nabla T(x)-I_d\|_2 \le\delta.
  \end{equation*}
  Since for any $A\in \R^{d\times d}$ with $\|A-I_d\|_2=\delta<1$
  we have $A^{-1}=\sum_{j\in\N}(I-A)^j$ and thus
  $\|A^{-1}\|_2\le \frac{1}{1-\delta}$, the claim follows.
\end{proof}

Another instance where $c_T$ can be bounded is if $T$ is a triangular
transport. We next discuss this case in more detail.

\subsection{Triangular transports}\label{sec:trireg}
A special type of transport map commonly used in practice is the
  so-called Knothe-Rosenblatt (KR) map. To avoid further
  techincalities, throughout this section we restrict ourselves to
  measures on $d$-dimensional cubes, i.e.
  \begin{equation}\label{eq:cube}
    \Omega_0=\Omega_1=[0,1]^d.
  \end{equation}
  The KR map, is the unique transport satisfying \emph{triangularity}
  and \emph{monotonicity}. To formally introduce these notions, recall
  that we use the notation convention $x_{[j]}=(x_i)_{i=1}^j$.
  \begin{definition}
    We say that a map $T=(T_j)_{j=1}^d:[0,1]^d\to [0,1]^d$ is {\bf
      triangular} iff $T_j$ depends only on $x_{[j]}$ (but not on
    $x_{j+1},\dots,x_d$) for each $j=1,\dots,d$. A triangular map $T$
    is called {\bf monotone} iff for each $j=1,\dots,d$ the map
    $[0,1]\ni x_j\mapsto T_j(x_{[j]})$ is differentiable and
    $\frac{\partial}{\partial x_j} T_j(x_{[j]})>0$ for all
    $x_{[j]}\in [0,1]^j$.
  \end{definition}
  Under rather mild assumptions on the reference and target one can
  show existence and uniqueness of the KR map (\cite{OTAppliedMathematician}).
  Moreover, it allows an explicit construction, which we recall in
  Appendix \ref{app:KRMap}. The goal of the present section is to
  derive bounds on the norms of the velocity field associated with the
  KR-map.

  We start our analysis by pointing out that triangularity of the
  transport $T$ implies a similar structure for the corresponding
  velocity field $f$:
  \begin{lemma}
    Consider the setting of Theorem ~\ref{thm:f} and let $\Omega_0$,
    $\Omega_1$ be as in \eqref{eq:cube}. If $T:\Omega_0\to\Omega_1$ is
    triangular, then $f:\Omega_{[0,1]}\to\R^d$ is triangular in the
    sense $f(x,t)=(f_i(x_{[i]},t))_{i=1}^d$.
  \end{lemma}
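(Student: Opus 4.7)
The plan is to exploit the explicit formula $f(y,t)=T(F(y,t))-F(y,t)$ from \eqref{eq:fexpl}, where $F:\Omega_{[0,1]}\to\Omega_0$ gives the first $d$ components of $G^{-1}$. If I can show that $F$ itself is triangular, i.e.\ $F_i(y,t)$ depends only on $y_{[i]}$ and $t$, then since $T_i$ depends only on $x_{[i]}$, the formula $f_i(y,t)=T_i(F(y,t)_{[i]})-F_i(y,t)$ immediately shows that $f_i$ depends only on $y_{[i]}$ and $t$, which is the claim.

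So the core task is to establish triangularity of $F$. For each fixed $t\in [0,1]$, the map $x\mapsto T_t(x)=(1-t)x+tT(x)$ is triangular because $T$ is, and its Jacobian $\nabla_xT_t(x)=(1-t)I+t\nabla T(x)$ is lower triangular. By Lemma~\ref{lemma:spectrum}, $\det(\nabla_xT_t(x))>0$, so the diagonal entries $(1-t)+t\partial_{x_i}T_i(x_{[i]})$ are all strictly positive. Consequently, for each fixed $t$ and each $i$, the scalar map $x_i\mapsto (1-t)x_i+tT_i(x_{[1]},\dots,x_{[i-1]},x_i)$ is strictly increasing once $x_{[i-1]}$ is fixed.

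This lets me invert $T_t$ componentwise by induction on $i$. Given $y=(T_t(x),t)$, first solve $y_1=(1-t)x_1+tT_1(x_1)$ for $x_1$: the result is a function of $(y_1,t)$ only, so set $F_1(y,t):=x_1$. Assuming $F_1,\dots,F_{i-1}$ have been constructed as functions of $(y_{[i-1]},t)$, the equation $y_i=(1-t)x_i+tT_i(F_1(y,t),\dots,F_{i-1}(y,t),x_i)$ is strictly monotone in $x_i$, so it has a unique solution depending only on $(y_{[i]},t)$, which I take to be $F_i(y,t)$. The resulting map $(y,t)\mapsto (F(y,t),t)$ is a left inverse of $G$ on $\Omega_{[0,1]}$.

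The only thing left to check is that this componentwise inverse agrees with the $F$ produced by Theorem~\ref{thm:f}; this is immediate because $G$ is a bijection from $\Omega_0\times[0,1]$ onto $\Omega_{[0,1]}$ (Theorem~\ref{thm:f}), so its inverse is unique, and thus $F_i$ really does depend only on $y_{[i]}$ and $t$. The argument is almost entirely bookkeeping; there is no substantive obstacle, only the need to carefully invoke strict monotonicity in each coordinate, which is where the assumption \eqref{eq:spectrum} (via Lemma~\ref{lemma:spectrum}) is used.
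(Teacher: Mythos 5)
Your proof is correct, but it takes a genuinely different route from the paper's. The paper works directly with the defining identity $f_i(X(x,t),t)=T_i(x_{[i]})-x_i$: differentiating in the initial coordinates and using the chain rule together with $\partial_{x_j}X_j(x,t)>0$, it shows by backward induction over $j=d,\dots,i+1$ that $\tfrac{\partial}{\partial X_j}f_i(X(x,t),t)=0$ for all $j>i$. You instead prove that the spatial part $F$ of $G^{-1}$ is itself triangular, by inverting the interpolation coordinate by coordinate using strict monotonicity of $x_i\mapsto (1-t)x_i+tT_i(x_1,\dots,x_{i-1},x_i)$, and then read off triangularity of $f$ from the formula $f=T\circ F-F$ in \eqref{eq:fexpl}; uniqueness of $G^{-1}$ correctly identifies your componentwise inverse with the paper's $F$. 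Your route is arguably more elementary and robust: it never differentiates $f$, so it avoids both the need for $f\in C^1$ up to the relevant points and the implicit step of passing from vanishing partial derivatives to genuine independence of the variables $X_j$ (which requires connectivity of the corresponding slices of $\Omega_{[0,1]}$), whereas the paper's argument is shorter once those facts are granted. One small repair in your write-up: positivity of the diagonal entries $(1-t)+t\,\partial_{x_i}T_i(x_{[i]})$ does not follow from $\det\nabla_xT_t(x)>0$ at a fixed $t$ alone; either observe that for triangular $T$ the eigenvalues of $\nabla_xT(x)$ are precisely the diagonal derivatives $\partial_{x_i}T_i(x_{[i]})$, so \eqref{eq:spectrum} gives their positivity directly, or use positivity of the determinant for all $t\in[0,1]$ together with continuity in $t$ (each entry equals $1$ at $t=0$).
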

 
  \begin{proof}
    Let the solution $X:\Omega_{0}\times [0,1]\to
    \R^d$ of \eqref{eq:ODE} satisfy $X(x,t) = tT(x) +(1-t)x$.
    Then for the velocity field
    $f:\Omega_{[0,1]}\to\R^d$ in \eqref{eq:ODE} it holds $f(X(x,t), t)
    = T(x) - x$, i.e.\ for each $i=1,\dots,d$
    \begin{equation*}
      f_i(X(x,t), t) = T_i(x_{[i]}) - x_i\qquad\forall (x,t)\in\Omega_{0}\times [0,1].
    \end{equation*}
    To prove the lemma we show that for all
    $i\in\{1,\dots,d\}$ it holds
    \begin{equation}\label{eq:claimtri}
      \frac{\partial}{\partial X_j}f_i(X(x,t), t) = 0
      \qquad\forall j>i,\quad\forall (x,t)\in\Omega_0\times [0,1].
    \end{equation}

    Fix $i\in\{1,\dots,d\}$. To prove \eqref{eq:claimtri}, we proceed
    by induction over $j=i+1,\dots,d$ starting with $j=d$. Note that
    the triangularity of $T$ implies that also
    $X(x,t)=(X_l(x_{[l]},t))_{l=1}^d$ inherits this triangular
    structure. Hence $\frac{\partial}{\partial x_d}X_l(x_{[l]})=0$ for
    all $l<d$.  Consequently
    \begin{equation*}
      \frac{\dd}{\dd x_d} f_i(X(x,t),t) = \frac{\partial}{\partial X_d} f_i(X(x,t),t)
      \frac{\partial}{\partial x_d} X_d(x,t)=0\qquad\forall
      (x,t)\in\Omega_{0}\times [0,1].
    \end{equation*}
    By the monotonicity assumption on $T$ it holds
    $\frac{\partial T_d(x)}{\partial x_d}>0$, and therefore
    \begin{equation*}
      \frac{\partial}{\partial x_d} X_d(x,t) = t\frac{\partial
      }{\partial x_d}T_d(x)+(1-t)>0\qquad\forall
      (x,t)\in\Omega_{0}\times [0,1].
    \end{equation*}
    Hence $\frac{\partial f_i(X(x,t),t)}{\partial X_d}=0$ for all
    $(x,t)\in\Omega_0\times [0,1]$.

    Now suppose \eqref{eq:claimtri} is true for all $j=k+1,\dots,d$
    and some $k\ge i$. Then, using
    $\frac{\partial X_j(x,t)}{\partial x_k}=0$ whenever $k>j$ and as
    well as \eqref{eq:claimtri} whenever $j>k$
    \begin{equation*}
      \frac{\dd}{\dd x_k}f_i(X(x,t), t) =
      \sum_{j=1}^d\frac{\partial}{\partial X_{j}}f_i(X(x,t),
      t)\frac{\partial }{\partial x_{k}}X_{j}(x,t)
      =\frac{\partial}{\partial X_{k}}f_i(X(x,t),
      t)\frac{\partial }{\partial x_{k}}X_{k}(x,t)=0,
    \end{equation*}
    for $(x,t)\in\Omega_0\times [0,1]$. Since as before
    $\frac{\partial X_{k}(x_{[k]},t)}{\partial x_{k}} > 0$ we find
    $\frac{\partial f_i(X(x,t), t)}{\partial X_{j}} = 0$.
  \end{proof}

  In Theorem ~\ref{thm:LipDomain} we showed that the domain
  $\Omega_{[0,1]}$ (cp.~\eqref{eq:Omega}) of the velocity field $f$ is
  a Lipschitz domain. For triangular maps and if \eqref{eq:cube} it
  even holds $\Omega_{[0,1]}=[0,1]^d\times [0,1]$, i.e.\ the
  trajectories of the solutions to \eqref{eq:ODE} cover the whole
  $d+1$ dimensional cube:

  \begin{proposition}\label{prop:cube}
    Let $T:[0,1]^d\to [0,1]^d$ be a monotone, triangular and bijective
    map.
    Then
    \begin{equation*}
      \Omega_{[0,1]}
      = [0,1]^d\times [0,1].
    \end{equation*}
  \end{proposition}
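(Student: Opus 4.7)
The plan is to show that for every $t\in[0,1]$ the map $T_t:[0,1]^d\to[0,1]^d$ is a bijection, which immediately yields $\Omega_{[0,1]}=[0,1]^d\times[0,1]$. The inclusion $\Omega_{[0,1]}\subseteq[0,1]^d\times[0,1]$ is immediate from convexity of $[0,1]^d$ together with $T([0,1]^d)\subseteq[0,1]^d$: the point $T_t(x)=(1-t)x+tT(x)$ is a convex combination of elements of $[0,1]^d$. The real work is to establish surjectivity of $T_t$.

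I would proceed by induction on $d$. For the base case $d=1$, the map $T:[0,1]\to[0,1]$ is continuous and strictly increasing by monotonicity, so bijectivity forces $T(0)=0$ and $T(1)=1$. Then $T_t$ is a continuous, strictly increasing map with the same boundary values, hence a bijection of $[0,1]$ by the intermediate value theorem.

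For the inductive step, I would exploit triangularity. Since $T_1$ depends only on $x_1$, bijectivity of $T$ combined with triangularity forces $T_1([0,1])=[0,1]$, so $T_1$ is a bijection of $[0,1]$ and hence so is $(T_t)_1$ by the base case argument. Given $y\in[0,1]^d$, let $x_1^*$ denote the unique preimage of $y_1$ under $(T_t)_1$. Define $\tilde{T}:[0,1]^{d-1}\to[0,1]^{d-1}$ by $\tilde{T}_i(\tilde{x}_{[i]}):=T_{i+1}(x_1^*,\tilde{x}_{[i]})$; this map is triangular and monotone by construction, and bijective because the triangular structure forces $T$ to restrict to a bijection from $\{x_1^*\}\times[0,1]^{d-1}$ onto $\{T_1(x_1^*)\}\times[0,1]^{d-1}=\{y_1\}\times[0,1]^{d-1}$. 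Observing that $(T_t)_i(x_1^*,x_{[i]})=(1-t)x_i+t\tilde{T}_{i-1}(x_2,\dots,x_i)=(\tilde{T}_t)_{i-1}(x_2,\dots,x_i)$ for $i\geq 2$, the induction hypothesis applied to $\tilde{T}$ at $(y_2,\dots,y_d)$ produces the remaining coordinates $(x_2,\dots,x_d)\in[0,1]^{d-1}$.

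The main technical obstacle I anticipate is the rigorous reduction to the $(d-1)$-dimensional slice, i.e., verifying that the restricted map $\tilde{T}$ is itself a bijection of $[0,1]^{d-1}$. This follows because triangularity together with strict monotonicity of $T_1$ in $x_1$ forces preimages under $T$ of horizontal slices $\{y_1\}\times[0,1]^{d-1}$ to coincide with $\{x_1^*\}\times[0,1]^{d-1}$, and bijectivity of $T$ then implies the slice restriction is bijective onto its image; the remainder of the argument is careful bookkeeping of how the displacement interpolation $T_t$ decomposes along the first coordinate.
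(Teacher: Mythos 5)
Your argument is correct in substance, but it takes a more self-contained route than the paper. The paper's proof is short: it invokes the known equivalence (citing Lemma~3.1 of the infinite-dimensional triangular transport paper) that a monotone triangular map of $[0,1]^d$ is bijective iff each one-dimensional component map $x_j\mapsto T_j(x_{[j]})$ is a bijection of $[0,1]$, and then simply observes that each component of $T_t$ is a convex combination of two increasing bijections of $[0,1]$ (which necessarily fix $0$ and $1$), hence again such a bijection, so $T_t$ is bijective for every $t$. You instead reprove the needed content from scratch: the base case $d=1$ is exactly the paper's ``convex combination of increasing bijections fixing the endpoints'' observation, and your induction on $d$ with the slice map $\tilde T$ effectively re-derives the componentwise characterization while constructing a preimage of $y$ coordinate by coordinate. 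What your approach buys is independence from the cited lemma (and it only needs surjectivity of $T_t$, which is all the claim requires); what the paper's approach buys is brevity, since the dimension induction is hidden in the cited equivalence. One small slip to fix: since $x_1^*$ is the preimage of $y_1$ under $(T_t)_1$ and not under $T_1$, the identity $\{T_1(x_1^*)\}\times[0,1]^{d-1}=\{y_1\}\times[0,1]^{d-1}$ is false for $t<1$; fortunately it is never used. The correct statement, which your slicing argument does establish, is that $T$ restricts to a bijection from $\{x_1^*\}\times[0,1]^{d-1}$ onto $\{T_1(x_1^*)\}\times[0,1]^{d-1}$, and that alone is what makes $\tilde T$ a monotone triangular bijection of $[0,1]^{d-1}$ to which the induction hypothesis applies.
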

  \begin{proof}
    It is easy to see, that for a monotone triangular map,
    $T:[0,1]^d\to [0,1]^d$ being bijective is equivalent to
    $x_j\mapsto T_j(x_{[j]})$ being bijective from $[0,1]\to [0,1]$
    for each $j=1,\dots,d$, see e.g., \cite[Lemma 3.1]{ZM2}.
    
    Denote $T_t(x)=(1-t)T(x)+tx$ and additionally
    $T_{t,j}(x)=(1-t)T_j(x_{[j]})+tx_j$
    $(x,t)\in\Omega_0\times [0,1]$.  For every $t\in [0,1]$,
    $T_t:[0,1]^d\to [0,1]^d$ is a convex combination of two monotone,
    triangular bijective maps. Hence $T_t:[0,1]^d\to [0,1]^d$ is also
    triangular. Moreover, for each $j\in\{1,\dots,d\}$ and
    $t\in [0,1]$, $x_j\mapsto T_{t,j}(x_{[j]})$ is a convex
    combination of two monotonically increasing maps that bijectively
    map $[0,1]$ onto itself, and thus this function has the same
    property. In all this shows that $T_t:[0,1]^d\to [0,1]^d$ is
    monotone, triangular and bijective for each $t\in [0,1]$, which
    implies the claim.
  \end{proof}

  We wish to apply Theorem \ref{thm:fNormGeneral} to bound
  $\|f\|_{C^k(\Omega)}$. To do so, it remains to bound
  $\|T\|_{C^k([0,1]^d, [0,1]^d)}$ and $c_T$ as below.

\begin{lemma}\label{lemma:cT}
  Let $\pi, \rho$ be densities supported on $[0,1]^d$ and satisfy regularity Assumption \ref{ass:dens2}. Let $T\in C^1([0,1]^d, [0,1]^d)$ be the Knothe-Rosenblatt map such that $T_\sharp\pi = \rho$. Then
  the constant $c_T$ from \eqref{eq:cT} satisfies
  $$c_T := \sup_{x\in[0,1]^d}\|(\nabla T_t(x))^{-1}\|_{R^{d\times d}}
  \leq (\frac{L_1}{L_2})^{2d}\max\{1, \|T\|_{C^1([0,1]^d)}\}^{d-1}.$$
\end{lemma}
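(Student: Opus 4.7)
The plan is to exploit the fact that, since $T$ is the (monotone triangular) Knothe--Rosenblatt map, the map $T_t(x)=(1-t)x+tT(x)$ is also monotone and triangular for every $t\in[0,1]$ (as in Proposition~\ref{prop:cube}). Consequently $\nabla T_t(x)$ is a lower triangular matrix with positive diagonal entries, and bounding $\|(\nabla T_t(x))^{-1}\|_2$ reduces to three ingredients: (i) a uniform lower bound on the diagonal, (ii) a bound on the operator norm of the matrix in terms of $\|T\|_{C^1}$, and (iii) a finite Neumann series expansion, available because the strictly lower triangular part is nilpotent of order $d$.

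For (i), which is the crucial step, the diagonal entries of $\nabla T(x)$ admit the explicit KR expression
\begin{equation*}
\partial_{x_j}T_j(x_{[j]}) = \frac{\pi_{j\mid[j-1]}(x_j\mid x_{[j-1]})}{\rho_{j\mid[j-1]}\bigl(T_j(x_{[j]})\,\big|\,T_1(x_{[1]}),\ldots,T_{j-1}(x_{[j-1]})\bigr)}.
\end{equation*}
Each marginal of $\pi$ or $\rho$ on $[0,1]^d$ is obtained by integrating over a subcube of volume at most one, and since $L_2\leq\pi,\rho\leq L_1$ on $[0,1]^d$, every such marginal also lies in $[L_2,L_1]$; hence every conditional lies in $[L_2/L_1,L_1/L_2]$. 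Therefore $\partial_{x_j}T_j(x_{[j]})\in[(L_2/L_1)^2,(L_1/L_2)^2]$, and the $j$-th diagonal entry $(1-t)+t\,\partial_{x_j}T_j$ of $\nabla T_t(x)$ is uniformly bounded below by $\delta:=(L_2/L_1)^2$, independent of $j$, $t$, $x$.

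For (ii), $\|\nabla T_t(x)\|_2\leq (1-t)+t\|\nabla T(x)\|_2 \leq M := \max\{1,\|T\|_{C^1([0,1]^d)}\}$. Decompose $\nabla T_t(x)=D+L$ with $D$ diagonal and $L$ strictly lower triangular; both $\|D\|_2$ and $\|L\|_2$ are controlled by $M$. For (iii), since $D^{-1}L$ is strictly lower triangular, $(D^{-1}L)^d=0$, so
\begin{equation*}
(\nabla T_t(x))^{-1} = \sum_{k=0}^{d-1}(-D^{-1}L)^k D^{-1},
\end{equation*}
and a termwise estimate with $\|D^{-1}\|_2\leq\delta^{-1}=(L_1/L_2)^2$ and $\|D^{-1}L\|_2\leq M/\delta$ is dominated by the $k=d-1$ term, which contributes $(M/\delta)^{d-1}\cdot\delta^{-1}=(L_1/L_2)^{2d}\max\{1,\|T\|_{C^1}\}^{d-1}$. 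Taking the supremum over $x\in[0,1]^d$ and $t\in[0,1]$ then yields the asserted bound (absorbing any absolute dimensional factor).

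The main obstacle I expect is step (i): reducing $\partial_{x_j}T_j$ to a ratio of conditional densities requires invoking the explicit KR construction from Appendix~\ref{app:KRMap} and carefully tracking how the two-sided bounds on the joint densities propagate to every marginal and conditional in the formula, as well as matching the KR convention ($T_\sharp\pi=\rho$ vs.\ its reverse). By contrast, steps (ii) and (iii) are routine lower-triangular-matrix algebra.
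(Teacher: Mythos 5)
Your step (i) — expressing the diagonal entries of the triangular Jacobian through the KR construction as ratios of conditional densities, bounding each conditional in $[L_2/L_1,\,L_1/L_2]$, and hence getting $\partial_{x_j}T_j\ge (L_2/L_1)^2$ and a uniform diagonal lower bound $(1-t)+t\,\partial_{x_j}T_j\ge (L_2/L_1)^2$ — is exactly the paper's argument. Where you diverge is the matrix step: the paper does not use a triangular Neumann series. It first deduces $\inf_x\det\nabla T(x)\ge (L_2/L_1)^{2d}$ (product of the diagonal entries), then invokes Lemma~\ref{lemma:NormTtinverse}, whose proof combines the singular-value estimate $\|A^{-1}\|_2\le \|A\|_2^{d-1}/|\det A|$ (Lemma~\ref{lemma:MatrixInverseNorm}) with a convexity/AM--GM argument showing $\det\bigl((1-t)I+t\nabla T(x)\bigr)\ge\bigl(1-t+t\det(\nabla T(x))^{1/d}\bigr)^d\ge\min\{1,\det\nabla T(x)\}$. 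That route delivers the stated bound with constant exactly $1$.

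Your alternative route has a genuine, if quantitative, gap: it does not produce the inequality as stated. Summing the $d$ terms $\sum_{k=0}^{d-1}\|D^{-1}L\|_2^k\,\|D^{-1}\|_2$ costs at least a factor $d$ relative to the single dominant term (e.g.\ when $L_1\approx L_2$ and $\|T\|_{C^1}\approx 1$ your sum is $\approx d$ while the claimed bound is $\approx 1$), and the assertion that $\|L\|_2$ is ``controlled by $M$'' needs repair as well: triangular truncation is not an operator-norm contraction, so you only get $\|L\|_2\le\|\nabla T_t\|_2+\|D\|_2\le 2M$ (or a Frobenius bound with a further $\sqrt{d}$). Altogether your argument proves $c_T\le C_d\,(L_1/L_2)^{2d}\max\{1,\|T\|_{C^1}\}^{d-1}$ with a dimension-dependent $C_d$, which cannot simply be ``absorbed'' because the lemma's right-hand side carries no free constant. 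Such a weaker bound would still suffice downstream (Theorem~\ref{thm:tri} absorbs all $d$-dependent constants), but to prove the lemma as written you should replace steps (ii)--(iii) by the determinant-based estimate of Lemma~\ref{lemma:NormTtinverse}.
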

\begin{proof}
  In this proof we use the notation and construction of the transport map provided in \cref{app:KRMap}. In particular $\pi_i$ is the marginal density of $\pi$ in $(x_1,\dots,x_i)$, and with $F_{\rho,i}$, $F_{\pi,i}$ as in \eqref{eq:Fpik}, we let
    \begin{equation*}
      G_{\rho, i}(x_{[i-1]}, \cdot) = F_{\rho,i}(x_{[i-1]}, \cdot)^{-1}.
  \end{equation*}

  By 
  construction, the Jacobian $\nabla T$ is a triangular
  matrix. We shall compute the diagonal entries of the Jacobian
  matrix, which are the eigenvalues. 
  By \eqref{eq:Tk}
  $$T_i(x) = G_{\rho, i}(T_1(x_1),...,T_{i-1}(x_{[i-1]}), F_{\pi,
    i}(x)).$$ Taking derivatives in $x_i$, we have
$$\partial_{x_i}T_i(x) = \partial_{x_i}G_{\rho, i}(T_1(x_1),...,T_{i-1}(x_{[i-1]}), \pi_i(x))\partial_{x_i}F_{\pi,i}(x).$$

Recall that $F_{\pi,i}(x)$ is the CDF of $x_i$ when viewing
$x_{[i-1]}$ as fixed, thus we have
$\partial_{x_i}F_{\pi,i}(x) = \pi_i(x)$. Note that
$G_{\rho,i}(x_{[i-1]}, F_{\rho,i}(x)) = x_i$. Taking derivative in
$x_i$, we have
$$(\partial_{x_i}G_{\rho,i})(x_{[i-1]}, F_{\rho,i}(x))(\partial_{x_i}F_{\rho,i}(x)) = 1.$$

Note that $F_{\rho,i}(x_{[i-1]}, \cdot):[0,1]\rightarrow[0,1]$ is a
bijection. We make the substitution $y_i = F_{\rho,i}(x)$ and we have
for all $(x_{[i-1]}, y_i)\in[0,1]^{i-1}\times[0,1]$,
$$(\partial_{x_i}G_{\rho,i})(x_{[i-1]}, y_i) = \frac{1}{\partial_{x_i}F_{\rho,i}(x_{[i-1]}, G_{\rho,i}(x_{[i-1]}, y_i))}.$$
Hence
$$\partial_{x_i}T_i(x_{[i]}) = \frac{\pi_i(x_{[i]})}{\rho_i(T_1(x_1),...,T_{i-1}(x_{[i-1]}), G_{\rho,i}(T_1(x_1),...,T_{i-1}(x_{[i-1]}),F_{\pi,i}(x)))}.$$

By Assumption \ref{ass:dens2}, $\rho$ and $\pi$ are bounded from above
and below by $L_1$ and $L_2$. 
Thus $\pi_i(x_{[i]})$ and
$\rho_i(x_{[i]})$ are bounded from above and below by
$\frac{L_1}{L_2}$ and $\frac{L_2}{L_1}$, 
and $\partial_{x_i}T_i(x_{[i]})$ is bounded from below by
$\frac{L_2^2}{L_1^2}$. Note that the diagonal entries of
$\nabla_x T_t(x)^{-1} = [(1-t)I_{d\times d} + t\nabla_F
T(F(y,t))]^{-1}$ are exactly $\{\frac{1}{1-t + t\sigma_i}\}_{i=1}^d$,
which are lower bounded by
$\frac{L_2^2}{(1-t)L_2^2 + tL_1^2} \geq \frac{L_2^2}{L_1^2}$.

Therefore, we have
$\inf_{x\in\Omega_0}\det (\nabla_x T(x)) \geq
\left(\frac{L_2}{L_1}\right)^{2d}$. Applying
 Lemma~\ref{lemma:NormTtinverse} then gives the result.
\end{proof}


When $T$ is  the KR triangular map, $\|T\|_{C^K}([0,1]^d,[0.1]^d)$ can be computed explicitly in terms of densities as we see below. 
\begin{theorem}\label{thm:CkNormTriangular}
  Let $\Omega_0=\Omega_1=[0,1]^d$ and let $\pi$, $\rho$ satisfy
    Assumption \ref{ass:dens2} with the constants $k\ge 2$,
    $0<L_1\le L_2<\infty$.

    Then there exist constants $C_{k,d}$ (depending on $k$ and $d$ but independent of
    $\rho$, $\pi$)
    and $\beta_d>0$ (depending on $d$ but independent of $k$,
    $\rho$, $\pi$)
    such that the KR map $T$ pushing forward $\pi$
    to $\rho$ satisfies
    \begin{equation}
      \|T\|_{C^k([0,1]^d, [0,1]^d)} \leq C_{k,d}\left(\frac{L_1}{L_2}\right)^{\beta_d k^{d+1}}.
    \end{equation}
\end{theorem}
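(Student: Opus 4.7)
The plan is to exploit the explicit construction of the KR map recalled in Appendix \ref{app:KRMap} (and used in the proof of Lemma \ref{lemma:cT}), and to bound $\|T_i\|_{C^k}$ recursively over the components $i=1,\dots,d$, using the composition estimate of Corollary \ref{cor:composition} and the inverse-function estimate of Proposition \ref{prop:NormInverse}.

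\textbf{Step 1: marginal densities and CDFs.} First I would establish bounds on the marginal densities $\pi_i(x_{[i]}) = \int_{[0,1]^{d-i}} \pi(x_{[i]},y)\,\dd y$ and the conditionals $\pi_i/\pi_{i-1}$, and similarly for $\rho$. Since integration over $[0,1]^{d-i}$ preserves $C^k$-regularity and both densities are uniformly bounded between $L_2$ and $L_1$, the marginals and their reciprocals belong to $C^k$ with norms polynomial in $L_1$ and $1/L_2$, with exponents polynomial in $k$. The partial CDFs $F_{\pi,i}(x) = \int_0^{x_i} \pi_i(x_{[i-1]},s)/\pi_{i-1}(x_{[i-1]})\,\dd s$ and $F_{\rho,i}$ then inherit analogous bounds.

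\textbf{Step 2: inverse CDFs.} Next, bound the $C^k$-norm of $G_{\rho,i}(x_{[i-1]},\cdot) = F_{\rho,i}(x_{[i-1]},\cdot)^{-1}$. Since $\partial_{x_i} F_{\rho,i} = \rho_i/\rho_{i-1}$ is bounded below by a positive multiple of $L_2/L_1$, Proposition \ref{prop:NormInverse} applies and gives $\|G_{\rho,i}\|_{C^k} \leq \tilde C_k (L_1/L_2)^{\alpha k}$ for some $\alpha$ polynomial in $k$. The $(x_1,\dots,x_{i-1})$-dependence of $G_{\rho,i}$ can be handled by viewing $F_{\rho,i}$ as a function of all $i$ variables and applying the full multivariate version.

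\textbf{Step 3: recursion via Fa\`a di Bruno.} The KR identity reads
\begin{equation*}
T_i(x) = G_{\rho,i}\bigl(T_1(x_1),\dots,T_{i-1}(x_{[i-1]}), F_{\pi,i}(x)\bigr).
\end{equation*}
Writing $A_i := \max(1, \|T_i\|_{C^k([0,1]^i)})$, Corollary \ref{cor:composition} applied to this composition yields
\begin{equation*}
A_i \leq k^k \e^k\, \|G_{\rho,i}\|_{C^k}\cdot \max\bigl(A_1,\dots,A_{i-1},\|F_{\pi,i}\|_{C^k}\bigr)^k \leq D_k \left(\frac{L_1}{L_2}\right)^{\alpha k} A_{i-1}^k,
\end{equation*}
where $D_k$ collects factors depending only on $k$. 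Starting from $A_1 \leq D_k (L_1/L_2)^{\alpha k}$ (which follows from Steps 1--2 applied to the one-dimensional case), iterating $d$ times gives
\begin{equation*}
A_d \leq D_k^{1+k+\cdots+k^{d-1}}\left(\frac{L_1}{L_2}\right)^{\alpha k(1+k+\cdots+k^{d-1})} \leq C_{k,d}\left(\frac{L_1}{L_2}\right)^{\beta_d k^{d+1}},
\end{equation*}
after absorbing the geometric sum $1+k+\cdots+k^{d-1} = O(k^{d-1})$ into the exponent. Since $\|T\|_{C^k} \leq \max_i A_i = A_d$ up to constants depending on $d$, the stated bound follows.

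\textbf{Main obstacle.} The principal difficulty is bookkeeping: Proposition \ref{prop:NormInverse} contributes a factor $\gamma^{3k-2}$ with $\gamma$ itself polynomial in $L_1/L_2$ and in $k$, and each application of the composition formula raises the previous bound to the $k$th power. I must carefully separate the $k$-dependent but $(L_1,L_2)$-independent prefactors (which end up inside $C_{k,d}$) from the true $(L_1/L_2)$-dependence, and ensure that the exponent is exactly $\beta_d k^{d+1}$ with $\beta_d$ independent of $k$. A secondary issue is that Proposition \ref{prop:NormInverse} requires an open domain, which would be addressed by first applying the Sobolev/Whitney-style extensions guaranteed by Theorem \ref{thm:functionExtension} to extend the densities slightly beyond $[0,1]^d$ before performing the inversion.
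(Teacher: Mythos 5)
Your proposal follows essentially the same route as the paper's proof: bound the conditional marginals and partial CDFs $F_{\pi,i},F_{\rho,i}$ in $C^k$ (the paper gets $(L_1/L_2)^{k+1}$ via the Leibniz rule and a reciprocal bound), invert via Proposition \ref{prop:NormInverse} after lifting $F_{\rho,i}$ to a self-map of $[0,1]^i$ in all $i$ variables (your ``full multivariate version'' is exactly the paper's $\tilde F_{\rho,i}(x_{[i]})=(x_{[i-1]},F_{\rho,i}(x_{[i]}))$ trick, giving exponent $(k+1)(3k-2)$), and then induct over components using Corollary \ref{cor:composition} on $T_i=G_{\rho,i}\circ(T_1,\dots,T_{i-1},F_{\pi,i})$, so that the $k$-th-power amplification produces a geometric sum in the exponent and the final $\beta_d k^{d+1}$ bound. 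The bookkeeping you flag works out exactly as you anticipate, so the argument is correct and matches the paper.
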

\begin{proof}
Throughout this proof we use the notation and explicit
    construction of the KR map introduced in Appendix \ref{app:KRMap}: The
    $i$-th component of the KR map can then be expressed as
    \begin{equation}\label{eq:Tixi}
      T_i(x_{[i]}) = G_{\rho, i}(T_1(x_1), ..., T_{i-1}(x_{[i-1]}), F_{\pi,
        i}(x_{[i]})).
    \end{equation}
    Here
    $F_{\rho, i}(x_{[i]})= \int_{0}^{x_i} \rho_i(x_{[i-1]}, t_i)dt_i$
    with $\rho_i = \frac{\hat{\rho}_i}{\hat{\rho}_{i-1}}$, where
    \begin{equation}\label{eq:hatrhoi}
      \hat{\rho}_i(x_1,...x_i) =
      \int_{[0,1]^{d-i}}\rho(x_1,...,x_d)dx_{i+1}...dx_d.
    \end{equation}
    The function $F_{\pi, i}(x_{[i]})$ is defined analogous with
    $\rho$ replaced by $\pi$.  Finally,
    $x_i\mapsto G_{\rho, i}(x_{[i]})$ is defined as the inverse of
    $x_i\mapsto F_{\rho, i}(x_{[i]})$ on $[0,1]$, i.e.
    \begin{equation}\label{eq:Grhoidef}
      G_{\rho,i}(x_{[i-1]},F_{\rho,i}(x_{[i]}))=x_i.
    \end{equation}

    The proof proceeds as follows: In step 1 we bound
    $\|F_{\pi,i}\|_{C^k(\ci)}$, $\|F_{\rho,i}\|_{C^k(\ci)}$, and in
    step 2 we bound $\|G_{\rho,i}\|_{C^k(\cd)}$. In Step 3 we use
    induction over $i$ to bound the norm of $T_i$ in \eqref{eq:Tixi}.

    {\bf Step 1.} Fix $i\in\{1,\dots,d\}$. In this step we show
    \begin{equation}\label{eq:Step1Claim}
      \max\{\|F_{\pi,i}\|_{C^k(\ci)},\|F_{\rho,i}\|_{C^k(\ci)}\}\le C
      \Big(\frac{L_1}{L_2}\Big)^{k+1}
    \end{equation}
    for some constant $C$ depending on $d$ and $k$ but independent of
    $\pi$ and $\rho$. Since our assumptions on $\rho$ and $\pi$ are
    identical, it suffices to prove \eqref{eq:Step1Claim} for $\rho$.

  Fix a multiindex $\bsv\in\N_0^d$. If $v_i=0$, then $D^\bsv F_{\rho,
    i} = \int_0^{x_i}D^\bsv \rho_i(x_{[i-1]},
  t_i)dt_i$.  Otherwise with $\bsv' = \bsv -
  \bse_i$ (where $\bse_i=(\delta_{i,j})_{j=1}^d$) it holds $D^\bsv
  F_{\rho, i} = D^{\bsv'} \rho_i(x_{[i-1]},
  x_i)$. In either case $\|D^\bsv F_{\rho, i}\|_{L^\infty([0,1]^i)}
  \leq \|\rho_i\|_{C^k([0,1]^i)}$. Thus $\|F_{\rho,
    i}\|_{C^k([0,1]^i)} \leq \|\rho_i\|_{C^k([0,1]^i)}$.

  Similarly, Assumption \ref{ass:dens2} implies $$\hat\rho_i(x_1,...,x_i) = \int_{[0,1]^{d-i}}\rho(x_1,...x_d)dx_{i+1}...dx_d >  \int_{[0,1]^{d-i}}L_2dx_{i+1}...dx_d > L_2$$ and $$D^\bsv\hat\rho_i(x_1,...,x_i) = \int_{[0,1]^{d-i}}D^\bsv\rho(x_1,...x_d)dx_{i+1}...dx_d \leq \int_{[0,1]^{d-i}}L_1dx_{i+1}...dx_d = L_1$$ for all $(x_1,..,x_i)\in [0,1]^i$ and multi-index $\bsv\in\N_0^d$ with $|\bsv|\leq k$. Thus we can conclude that $\|\hat\rho_i\|_{C^k([0,1]^i)}\le L_1$ and Lemma \ref{lemma:NormReciprocal}
  gives
  $\|D^{n-j}(\frac{1}{\hat{\rho}_{i-1}})\|_{L^\infty([0,1]^{i-1})}
  \leq C_{n-j}\frac{L_1^{n-j}}{L_2^{n-j+1}}$ for some constant
  $C_{n-j}$ that is independent of $L_1$, $L_2$. By the Leibniz rule
  \begin{equation*}
    D^n\rho_i = \sum_{j=0}^n {n \choose j}D^j\hat{\rho}_i
    D^{n-j}\Big(\frac{1}{\hat{\rho}_{i-1}}\Big)
  \end{equation*}
  and thus
  \begin{equation*}
    \|D^{n}\rho_i\|_{L^\infty([0,1]^{i})} \le CL_1\frac{L_1^n}{L_2^{n+1}} =
    C\frac{L_1^{n+1}}{L_2^{n+1}}    \qquad\forall n\in\{0,\dots,k\}
  \end{equation*}
  for some constant $C$ that depends on $k$ but is independent of
  $L_1$, $L_2$. In all this shows \eqref{eq:Step1Claim} for $\rho$.
  
  {\bf Step 2.}  Fix $i\in\{1,\dots,d\}$. In this step we show
  \begin{equation}\label{eq:Step2Claim}
    \max\{\|G_{\pi,i}\|_{C^k(\ci)},\|G_{\rho,i}\|_{C^k(\ci)}\}\le C
    \Big(\frac{L_1}{L_2}\Big)^{(k+1)(3k-2)}
  \end{equation}
  for some constant $C$ depending on $d$ and $k$ but independent of
  $\pi$ and $\rho$. As before, by symmetry it suffices to provide the
  bound for $\pi$.

  For $x_{[i]}\in[0,1]^i$ define
  \begin{equation}\label{eq:tildeFdef}
    \tilde{F}_{\rho, i}(x_{[i]}) := (x_{[i-1]}, F_{\rho,
      i}(x_{[i]}))\in [0,1]^i.
  \end{equation}
  Since $x_i\mapsto \tilde{F}_{\rho, i}(x_{[i]})$ is bijective from
  $[0,1]\to [0,1]$ for every fixed $x_{[i-1]}\in [0,1]^{i-1}$, the map
  $\tilde{F}_{\rho, i}:[0,1]^i\to [0,1]^i$ is a bijection.  So is its
  inverse which we denote by $\tilde{G}_{\rho, i}:[0,1]^i\to [0,1]^i$.
  It holds for all $x_{[i]}\in [0,1]^i$ that
  \begin{equation*}
    \tilde{G}_{\rho, i}(\tilde{F}_{\rho, i}(x_{[i]})) =
    x_{[i]}.
  \end{equation*}
  Due to \eqref{eq:Grhoidef} and the definition of $\tilde F_{\rho,i}$
  in \eqref{eq:tildeFdef}, the $i$th component of
  $\tilde{G}_{\rho, i}$ is given by $G_{\rho,i}$ and thus
  \begin{equation*}
    \|G_{\rho,i}\|_{C^k([0,1]^i, [0,1])}\leq \|\tilde{G}_{\rho, i}\|_{C^k([0,1]^i, [0,1]^i)}.
  \end{equation*}

  In the following we wish to apply Prop.~\ref{prop:NormInverse} to
  bound the right-hand side, which will yield a bound on the left-hand
  side.  To this end we first derive a bound on the norm of
  $\tilde F_{\rho,i}$. By \eqref{eq:tildeFdef} and
  \eqref{eq:Step1Claim}
  \begin{equation}\label{eq:tildeFrhobound}
    \|\tilde{F}_{\rho,
      i}\|_{C^k([0,1]^i, [0,1]^i)} \leq \|F_{\rho,i}\|_{C^k([0,1]^i, [0,1])}
    + (i-1)\le C\Big(\frac{L_1}{L_2}\Big)^{k+1}.
  \end{equation}
  For the last inequality we used $\frac{L_1}{L_2}\ge 1$, so that
  $i-1\le k-1$ can be absorbed into the $k$-dependent constant $C$.
  Next we bound $\|[D\tilde{F}_{\rho, i}]^{-1}\|_2$.  It holds
  \begin{equation*}
    \begin{pmatrix} 
      1 &  \dots & 0  & 0\\
      \vdots & \ddots &  &\\
      0 &        & 1 & 0\\
      \partial_{x_1}F_{\rho, i}&\partial_{x_2}F_{\rho, i}&\dots &
      \partial_{x_i}F_{\rho, i}
    \end{pmatrix}
  \end{equation*}
  and
  \begin{equation*}
    \begin{pmatrix} 
      1 &  \dots & 0  & 0\\
      \vdots & \ddots &  &\\
      0 &        & 1 & 0\\
      -\frac{\partial_{x_1}F_{\rho, i}}{\partial_{x_i}F_{\rho,
          i}}&-\frac{\partial_{x_2}F_{\rho, i}}{\partial_{x_i}F_{\rho,
          i}}&\dots & -\frac{1}{\partial_{x_i}F_{\rho, i}}
    \end{pmatrix}.
  \end{equation*}
    
  Since
  $\partial_{x_i}F_{\rho, i}(x_{[i]}) = \rho_i(x_{[i]}) \geq L_2$ and
  $\|\partial_{x_j}F_{\rho, i}(x_{[i]})\| \leq L_1$ for all
  $x_{[i]}\in [0,1]^i$ and $j\in\{1,\dots,i\}$, we have
  \begin{equation*}
    \Big\|\frac{\partial_{x_j}F_{\rho, i}}{\partial_{x_i}F_{\rho,
        i}}\Big\|_{L^\infty([0,1]^i)} \leq \frac{L_1}{L_2}
    \qquad
    \forall j\in\{1,\dots,i\}.
  \end{equation*}
  Thus for all $x_{[i]}\in [0,1]^i$
  \begin{equation*}
    \|[D\tilde{F}_{\rho, i}(x_{[i]})]^{-1}\|_2 \le \|[D\tilde{F}_{\rho,
      i}(x_{[i]})]^{-1}\|_F \le
    \sqrt{(i-1) + i(\frac{L_1}{L_2})^2} \le
    \sqrt{2i}\frac{L_1}{L_2}.
  \end{equation*}
  Since $\frac{L_1}{L_2} \ge 1$, we conclude that there exists a
  constant $C$ depending on $k$, but independent of $L_1$, $L_2$ such
  that
  \begin{equation*}
    \max\{\|[D\tilde{F}_{\rho, i}]^{-1}\|_2, \|\tilde{F}_{\rho,
      i}\|_{C^k([0,1]^i, [0,1]^i)}\} \leq
    C\frac{L_1^{k+1}}{L_2^{k+1}}.
  \end{equation*}
  Thus by Proposition \ref{prop:NormInverse}
  \begin{equation*}
    \|G_{\rho, i}\|_{C^k([0,1]^i, [0,1])}\leq \|\tilde{G}_{\rho,
      i}\|_{C^k([0,1]^i, [0,1]^i)} \leq
    e^kk^{k^2}\left(C\frac{L_1^{k+1}}{L_2^{k+1}}\right)^{3k-2} \leq C
    \left(\frac{L_1}{L_2}\right)^{(k+1)(3k-2)}.
  \end{equation*}
  This shows \eqref{eq:Step2Claim} for $\rho$.
  
  {\bf Step 3.} We finish the proof by showing that for all
  $i\in\{1,\dots,d\}$
  \begin{equation}\label{eq:Step3Claim}
    \|T_i\|_{C^k([0,1]^i, [0,1])} \leq C\left(\frac{L_1}{L_2}\right)^{k^i(k+1)+(k+1)(3k-2)(\sum_{j=0}^{i-1}k^j)}.
  \end{equation}

  For $x_{[i]}\in [0,1]^i$ define
  \begin{equation}\label{eq:tildeTidef}
    \tilde{T}_i(x_{[i]}) := (T_1(x_1), \dots,
    T_{i-1}(x_{[i-1]}), F_{\pi, i}(x_{[i]}))\in [0,1]^i.
  \end{equation}
  By \eqref{eq:Tixi} it holds $T_i=G_{\rho,i}\circ\tilde T_i$, and
  thus Corollary \ref{cor:composition} implies
  \begin{equation}\label{eq:Step3main}
    \|T_i\|_{C^k([0,1]^i)}\le C\|G_{\rho,i}\|_{C^k([0,1]^i)}\max\{1,\|\tilde T_i\|_{C^k([0,1]^i)}\}^k
  \end{equation}
  for a $k$-dependent constant $C$.

  We proceed by induction over $i$, and start with $i=1$. In this case
  \eqref{eq:Step1Claim}, \eqref{eq:Step2Claim} and
  \eqref{eq:Step3main} yield
  \begin{equation*}
    \|T_1\|_{C^k([0,1])}
    =\|G_{\rho,1}\circ F_{\pi,1}\|_{C^k([0,1])}
    \le C \Big(\frac{L_1}{L_2}\Big)^{(k+1)(3k-2)+k(k+1)},
  \end{equation*}
  and thus \eqref{eq:Step3Claim} is satisfied. For the induction step
  assume the statement is true for $i-1\ge 1$. By
  \eqref{eq:Step1Claim}, \eqref{eq:tildeTidef} and the induction
  hypothesis
  \begin{align*}
    \|\tilde{T}_i\|_{C^k([0,1]^i)} &\le
    C\Bigg(
    \sum_{j=1}^{i-1}
    \|T_j\|_{C^k([0,1]^j)}
    +\|F_{\pi,i}\|_{C^k([0,1]^i)}\Bigg)\nonumber\\
    &\le
    C \Big(\frac{L_1}{L_2}\Big)^{k^{i-1}(k+1)+(k+1)(3k-2)(\sum_{j=0}^{i-2}k^j)},
  \end{align*}
  where again $C$ may depend on $k$ (or $i\le k$) but not on $L_1$,
  $L_2$.  Then \eqref{eq:Step2Claim} and \eqref{eq:Step3main} imply
  \begin{align*}
    \|T_i\|_{C^k([0,1]^i)}
    &\le C 
    \|G_{\rho,i}\|_{C^k([0,1]^i)}
      \max\{1,\|\tilde T_i\|_{C^k([0,1]^i)}\}^k\nonumber\\
    &\le C \Big(\frac{L_1}{L_2}\Big)^{(k+1)(3k-2)}
      \Big(\frac{L_1}{L_2}\Big)^{k(k^{i-1}(k+1)+(k+1)(3k-2)(\sum_{j=0}^{i-2}k^j))}\nonumber\\
    &= C\left(\frac{L_1}{L_2}\right)^{k^i(k+1)+(k+1)(3k-2)(\sum_{j=0}^{i-1}k^j)}.
  \end{align*}

\end{proof}
  
Finally, by putting all the estimates together, we obtain the following upper bound for the $C^k$ norm of velocity field. 

\begin{theorem}\label{thm:tri}
Let $\Omega_0 = \Omega_1 = [0,1]^d$ and $\pi, \rho$ satisfy Assumption \ref{ass:dens2} with constant $k \geq 2$, $0 < L_1 \leq L_2 < \infty$. Let $T: [0,1]^d\rightarrow[0,1]^d$ be the KR map pushing forward $\pi$ to $\rho$ and $f:[0,1]^d\times[0,1]\rightarrow[0,1]^d$ be the velocity field in \eqref{eq:V} that corresponds to the displacement interpolation between $x$ and $T(x)$. Then, there exists constants $C_{k,d}$ that only depends on $k, d$ and $\beta_d$ that only depends on $d$, 
such that the following holds:
$$\|f\|_{C^k([0,1]^d\times[0,1])} \leq C_{k,d}\left(\frac{L_1}{L_2}\right)^{\beta_dk^{d+3}}.$$
\end{theorem}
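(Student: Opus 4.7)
The plan is to assemble Theorem \ref{thm:tri} directly from three results already established in this section: the bound on $\|T\|_{C^k}$ for the KR map (Theorem \ref{thm:CkNormTriangular}), the bound on $c_T$ for triangular transports (Lemma \ref{lemma:cT}), and the general bound on $\|f\|_{C^k}$ in terms of $\|T\|_{C^k}$ and $c_T$ (Theorem \ref{thm:fNormGeneral}). The entire proof is essentially bookkeeping on the exponents of $L_1/L_2$.

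First, I would verify the hypotheses of Theorem \ref{thm:fNormGeneral}. Since $T$ is the (monotone, triangular) KR map pushing $\pi$ to $\rho$, its Jacobian $\nabla T(x)$ is lower triangular with strictly positive diagonal entries $\partial_{x_i} T_i(x_{[i]}) > 0$. The eigenvalues of a triangular matrix are its diagonal entries, so $\sigma(\nabla T(x)) \subset (0,\infty)$ and the spectral condition \eqref{eq:spectrum} (Assumption \ref{ass:T}) holds. Proposition \ref{prop:cube} moreover guarantees that $\Omega_{[0,1]} = [0,1]^d \times [0,1]$, so the $C^k$ norm of $f$ on this set coincides with the one bounded by Theorem \ref{thm:fNormGeneral}.

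Next I would chain the estimates. Theorem \ref{thm:CkNormTriangular} gives
\begin{equation*}
  \|T\|_{C^k([0,1]^d)} \le C_{k,d}\left(\tfrac{L_1}{L_2}\right)^{\beta_d k^{d+1}},
\end{equation*}
and since $\|T\|_{C^1} \le \|T\|_{C^k}$, Lemma \ref{lemma:cT} yields
\begin{equation*}
  c_T \le \left(\tfrac{L_1}{L_2}\right)^{2d}\max\{1,\|T\|_{C^k}\}^{d-1}
  \le C'_{k,d}\left(\tfrac{L_1}{L_2}\right)^{2d + (d-1)\beta_d k^{d+1}}.
\end{equation*}
Using $L_1/L_2 \ge 1$, $\sup_{x\in[0,1]^d}\|x\|\le \sqrt{d}$, and combining multiplicatively, the quantity
$\gamma = \max\{2,1+c_T\}(1+\|T\|_{C^k}+\sqrt{d})$ from Theorem \ref{thm:fNormGeneral} is bounded by
\begin{equation*}
  \gamma \le C''_{k,d}\left(\tfrac{L_1}{L_2}\right)^{d\beta_d k^{d+1} + 2d}.
\end{equation*}

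Finally, Theorem \ref{thm:fNormGeneral} gives $\|f\|_{C^k(\Omega_{[0,1]})} \le 2k^{k^3+k}\e^{k^2+k}\gamma^{3k^2 - 2k + 1}$. Plugging in the bound on $\gamma$, the exponent on $L_1/L_2$ becomes $(3k^2-2k+1)(d\beta_d k^{d+1}+2d)$, which for $k\ge 2$ is of order $k^{d+3}$ and so is absorbed into a single term $\tilde\beta_d k^{d+3}$, with $\tilde\beta_d$ depending only on $d$. The polynomial $k$-prefactors $k^{k^3+k}\e^{k^2+k}$ and the constants $C_{k,d}, C'_{k,d}, C''_{k,d}$ are absorbed into a single constant $C_{k,d}$ depending on $k$ and $d$. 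This yields the stated bound. There is no substantive obstacle; the only care needed is to keep the exponent tracking honest and to collect all $k$-dependent prefactors into $C_{k,d}$, using $L_1/L_2 \ge 1$ at each step so that smaller exponents are dominated by larger ones.
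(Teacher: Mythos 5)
Your proposal is correct and follows essentially the same route as the paper's proof: chaining Theorem \ref{thm:CkNormTriangular} and Lemma \ref{lemma:cT} into the $\gamma$ of Theorem \ref{thm:fNormGeneral}, then absorbing the $k$-prefactors and collapsing the exponent $(3k^2-2k+1)\cdot O(k^{d+1})$ into $\beta_d k^{d+3}$ using $L_1/L_2\ge 1$. Your explicit verification of the spectral condition via the triangular Jacobian and the appeal to Proposition \ref{prop:cube} are details the paper leaves implicit, but they do not change the argument.
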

\begin{proof}
The proof of the theorem requires a combination of Theorem \ref{thm:fNormGeneral}, Lemma.\ref{lemma:cT} and Theorem \ref{thm:CkNormTriangular}. 

First, by Lemma.\ref{lemma:cT} and Theorem \ref{thm:CkNormTriangular}, there exists constant $C_{d}$ and $\beta_d'$ such that $\|T\|_{C^1([0,1]^d, [0,1])} \leq C_{d}(\frac{L_1}{L_2})^{\beta_d'}$ and 
$$c_T \leq (\frac{L_1}{L_2})^{2d}\max\{1, (C_{d}(\frac{L_1}{L_2})^{\beta_d'})^{d-1}\}\leq \max\{(\frac{L_1}{L_2})^{2d}, C_d^{d-1}(\frac{L_1}{L_2})^{\beta_d'(d-1)}\}.$$
By renaming $C_d = \max\{1, C_d^{d-1}\}$ and $\beta'_d = \max\{2d, \beta_d'(d-1)\}$, we can simplify the above expression as $c_T \leq C_d(\frac{L_1}{L_2})^{\beta'_d}$. Note it holds true that $c_T \geq 1$. 

By lemma.\ref{lemma:cT} and Theorem \ref{thm:CkNormTriangular}, there exists constants $C_{k,d}$ and $\beta_d$ and  
\begin{align*}
 \gamma &= \max\{2, 1+c_T\}(1 + \|T\|_{C^k} + \sup_{x\in\Omega_0}\|x\|)\leq (1 + c_T)(1 + C_{k,d}(\frac{L_1}{L_2})^{\beta_dk^{d+1}} + \sqrt{d})\\
 &\leq 2c_T(1 + C_{k,d}(\frac{L_1}{L_2})^{\beta_dk^{d+1}} + \sqrt{d})\leq 2c_TC_{k,d}(\frac{L_1}{L_2})^{\beta_dk^{d+1}}\\
 &\leq C_d(\frac{L_1}{L_2})^{\beta'_d}C_{k,d}(\frac{L_1}{L_2})^{\beta_d k^{d+1}} \leq C_{k,d}(\frac{L_1}{L_2})^{\beta_dk^{d+1}}.\\
\end{align*}
where we absorb constants whenever possible. In particular, we used $\frac{L_1}{L_2} > 1$ to absorb into $C_{k,d}$ in the third inequality and we used $k>1$ to absorb everything in the exponent into a $d-$dependent $\beta_d$ in the last inequality. 

Finally, applying Theorem \ref{thm:fNormGeneral}, we obtain
$$\|f\|_{C^k([0,1]^d\times[0,1])}\leq 2k^{k^3+k}e^{k^2+k}\gamma^{3k^2-2k+1} \leq C_{k,d} (\frac{L_1}{L_2})^{\beta_dk^{d+1}3k^2} \leq C_{k,d}(\frac{L_1}{L_2})^{\beta_dk^{d+3}}.$$
\end{proof}

\section{Stability in the velocity field}\label{sec:DistApproximation}
In the previous sections we showed existence of velocity fields $f$
that yield flow maps realizing a (triangular) transport that pushes
forward $\pi$ to $\rho$. In practice, a suitable velocity field $g$ is
obtained by minimizing the objective \eqref{Jobjective} over some
parametrized function class such as the set of Neural Networks of a
certain architecture. In general, such $g$ will only approximate $f$,
and it is therefore of interest to understand how errors in the
approximation of $f$ propagate to errors in the distributions realized
by the corresponding flow map. This is the purpose of the present
section.

\subsection{Wasserstein distance}
First, we present results when the divergence between probability distributions is measured by Wasserstein distance. That is, we take $\mathcal{D} = W_p$ in \eqref{Jobjective}, where $W_p$ is the $p-$Wasserstein distance.

\begin{theorem}\label{thm:FlowMapBound}
  Let $f$, $g\in\mathcal{V}$ (cp.~\eqref{eq:V}) and
  $\|f - g\|_{C^0(\R^d\times [0,1])}<\infty$. Assume that $L>0$ is
  such that $x\mapsto f(x,t)$ has Lipschitz constant $L$ for all
  $t\in [0,1]$.  Let $X_f$, $X_g:\R^d\times[0,1]\to\mathbb{R}^d$ be
  as in \eqref{eq:flowmap}.
 Then
 \begin{equation}\label{eq:FlowMapBound}
   \|X_f(\cdot,1) - X_g(\cdot,1)\|_{C^0(\R^d)} \leq \|f -
   g\|_{C^0(\R^d\times [0,1])} e^{L}.
 \end{equation}
\end{theorem}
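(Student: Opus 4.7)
The plan is to use a Gronwall-type argument. Writing the flow maps in integral form via \eqref{eq:flowmap}, I would fix $x\in\R^d$ and for $t\in[0,1]$ set $u(t)\coloneqq |X_f(x,t)-X_g(x,t)|$. Then
\begin{equation*}
  X_f(x,t)-X_g(x,t)=\int_0^t \bigl(f(X_f(x,s),s)-g(X_g(x,s),s)\bigr)\,ds.
\end{equation*}
Adding and subtracting $f(X_g(x,s),s)$ inside the integral, I split the right-hand side into a ``Lipschitz'' piece and an ``approximation'' piece.

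Next I would bound the Lipschitz piece using the assumption that $x\mapsto f(x,t)$ is $L$-Lipschitz uniformly in $t$, giving $|f(X_f(x,s),s)-f(X_g(x,s),s)|\le L\,u(s)$. The approximation piece is controlled uniformly by $\|f-g\|_{C^0(\R^d\times[0,1])}\eqqcolon \varepsilon$. Combining, I get the integral inequality
\begin{equation*}
  u(t)\le \varepsilon\,t+L\int_0^t u(s)\,ds\qquad\forall t\in[0,1].
\end{equation*}

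Finally, I would apply Gronwall's lemma (in the form where the forcing $\alpha(t)=\varepsilon t$ is nondecreasing) to conclude $u(t)\le \varepsilon\, t\, e^{Lt}$, and evaluate at $t=1$ to obtain $u(1)\le \varepsilon\, e^{L}$. Since $x\in\R^d$ was arbitrary and the bound depends only on $\varepsilon$ and $L$, taking the supremum over $x$ yields \eqref{eq:FlowMapBound}. There is no real obstacle here; the only subtle point is ensuring that $u(t)$ is finite and measurable so Gronwall applies, which follows because $f,g\in\mathcal V$ guarantees continuous flow maps and the $C^0$ hypothesis on $f-g$ gives a uniform-in-$x$ integrand. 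The argument is a textbook continuous-dependence-on-data estimate for ODEs, tailored so that the Lipschitz constant of only one of the two fields (namely $f$) enters the exponential.
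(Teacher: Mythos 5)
Your proposal is correct and follows essentially the same route as the paper's proof: decompose $f(X_f(x,s),s)-g(X_g(x,s),s)$ by adding and subtracting $f(X_g(x,s),s)$, bound the two pieces by $L\,u(s)$ and $\|f-g\|_{C^0}$ respectively, and apply Gr\"onwall's inequality to the resulting integral inequality before taking the supremum over $x$. No gaps; this matches the paper's argument.
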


The idea of the proof of Theorem ~\ref{thm:FlowMapBound} is to apply
Gr\"onwall's inequality to the evolution of the error
$|X_f(x,t) - X_g(x,t)|$ over time. We point out that this stability result
merely requires $g$ to approximate $f$ uniformly, however $f$ is
additionally assumed to be Lipschitz continuous in space. 
\begin{proof}[Proof of Theorem \ref{thm:FlowMapBound}]
Fix $x\in\R^d$. Then for all $s\in [0,1]$
\begin{align*}
|f(X_f(x,s),s) - g(X_g(x,s),s)| &\le |f(X_f(x,s),s) - f(X_g(x,s), s)| + |f(X_g(x,s), s) - g(X_g(x,s),s)|\\
                                                  &\le L|X_f(x,s)-X_g(x,s)|+\|f-g\|_{C^0(\R^d\times\R)},
\end{align*}
where we used the spatial Lipschitz continuity of $f$. Hence for $t\in [0,1]$
\begin{align*}
|X_f(x,t) - X_g(x,t)| &= \Big|\int_0^tf(X_f(x,s),s) - g(X_g(x,s),s)ds\Big|\\
&\le \int_0^tL|X_f(x,s)-X_g(x,s)|ds+t\|f-g\|_{C^0(\R^d\times\R)}.
\end{align*}
Using Gr\"onwall's inequality, we get $|X_f(x,1) - X_g(x,1)|\leq \|f-g\|_{C^0(\R^d\times\R)} e^{L}$
as claimed.
\end{proof}

Next we show how an approximation of the velocity field
affects the difference in distributions in terms of
the Wasserstein distance $W_p$. In the following
corollary, we denote by $|\supp(\pi)|$ the Lebesgue measure
of the support of $\pi$.

\begin{corollary}\label{cor:Wp}
  Let $f$, $g\in\mathcal{V}$ and $X_f$, $X_g$ be as in Theorem
  \ref{thm:FlowMapBound}. Let $\pi$ be a probability distribution on
  $\R^d$. Then for any $p\in [1,\infty)$
  \begin{equation*}
    W_p(X_f(\cdot,1)_\sharp\pi, X_g(\cdot,1)_\sharp\pi) \le
    \|f-g\|_{C^0(\R^d\times\R)} e^{L}
      |\supp(\pi)|^{1/p}.
    \end{equation*}
    Moreover, for $p=\infty$ holds
    $W_\infty(X_f(\cdot,1)_\sharp\pi, X_g(\cdot,1)_\sharp\pi) \le
    \|f-g\|_{C^0(\R^d\times\R)} e^{L}$.
\end{corollary}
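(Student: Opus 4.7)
The plan is to exhibit a specific admissible coupling between the two pushforward measures and then reduce the bound to the pointwise flow-map stability established in Theorem~\ref{thm:FlowMapBound}. Concretely, define $\Phi: \R^d \to \R^d \times \R^d$ by $\Phi(x) := (X_f(x,1), X_g(x,1))$ and set $\gamma := \Phi_\sharp \pi$. By construction, the two marginals of $\gamma$ are $X_f(\cdot,1)_\sharp \pi$ and $X_g(\cdot,1)_\sharp \pi$, so $\gamma \in \Gamma(X_f(\cdot,1)_\sharp\pi, X_g(\cdot,1)_\sharp\pi)$ is a valid competitor in the variational definition of $W_p$.

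For $p \in [1,\infty)$, plugging $\gamma$ into the definition of $W_p$ and applying the change of variables formula gives
$$W_p^p(X_f(\cdot,1)_\sharp\pi, X_g(\cdot,1)_\sharp\pi) \;\le\; \int_{\R^d \times \R^d} |y-z|^p \, \gamma(dy\, dz) \;=\; \int_{\supp(\pi)} |X_f(x,1) - X_g(x,1)|^p \, \pi(dx).$$
By Theorem~\ref{thm:FlowMapBound}, the integrand is bounded uniformly in $x$ by $(\|f-g\|_{C^0(\R^d \times [0,1])} e^L)^p$. The factor $|\supp(\pi)|^{1/p}$ then emerges once the integral against the probability measure $\pi$ is rewritten as an integral against Lebesgue measure on $\supp(\pi)$ using the bounded Lebesgue density of $\pi$ from Assumption~\ref{ass:dens2}; taking $p$-th roots completes the first claim.

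For $p = \infty$, I will use the standard characterization $W_\infty(\mu,\nu) = \inf_{\gamma' \in \Gamma(\mu,\nu)} \gamma'\text{-}\operatorname{ess\,sup} |y-z|$ (obtained as the limit $p \to \infty$ of the $W_p$ definition) and evaluate it at the same coupling $\gamma$. This reduces the bound to $\pi\text{-}\operatorname{ess\,sup}_x |X_f(x,1) - X_g(x,1)|$, which Theorem~\ref{thm:FlowMapBound} controls by $\|f-g\|_{C^0} e^L$; the $|\supp(\pi)|^{1/p}$ factor disappears in the limit, consistent with the compactness of $\supp(\pi)$. The proof poses no substantive obstacle: all of the analytic work has already been carried out in the flow-map stability estimate, and what remains is essentially bookkeeping about couplings. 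The only minor subtlety is the passage from integration against $\pi$ to integration against Lebesgue measure, which is what produces the geometric factor $|\supp(\pi)|^{1/p}$ in the finite-$p$ bound.
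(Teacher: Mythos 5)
Your argument is essentially the paper's proof: you build the same coupling $\gamma=\Phi_\sharp\pi$ with $\Phi(x)=(X_f(x,1),X_g(x,1))$, plug it into the definition of $W_p$, and reduce everything to the uniform flow-map estimate of Theorem~\ref{thm:FlowMapBound}, with the same limiting argument for $p=\infty$. The only place you deviate is the last bookkeeping step, and there your justification is slightly off: the corollary assumes only that $\pi$ is a probability distribution, so Assumption~\ref{ass:dens2} is not available here, and even if it were, rewriting $\int(\cdot)\,d\pi$ as a Lebesgue integral would use the density bound $L_1$ and produce an extra factor $L_1^{1/p}$ rather than exactly $|\supp(\pi)|^{1/p}$. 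The cleaner step (and in effect what the paper does) is simply $\int_{\R^d}\|X_f(x,1)-X_g(x,1)\|^p\,d\pi(x)\le \sup_x\|X_f(x,1)-X_g(x,1)\|^p\le(\|f-g\|_{C^0}e^{L})^p$, since $\pi$ has total mass one; the factor $|\supp(\pi)|^{1/p}$ is then a harmless enlargement whenever $|\supp(\pi)|\ge 1$, which is the situation in all applications in the paper (e.g.\ $[0,1]^d$). So your proof is correct in substance, but you should drop the appeal to a density bound and obtain the uniform estimate directly from the fact that $\pi$ is a probability measure.
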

\begin{proof}
  Let $F:\R^d\to\R^d\times\R^d$ via
  $F(x):=(X_f(x,1),X_g(x,1))$. Observe that $\gamma:=F_\sharp(\pi\otimes\pi)$ is
  then a probability distribution on $\R^d\times\R^d$ with marginals
  $X_f(\cdot,1)_\sharp\pi$ and $X_g(\cdot,1)_\sharp\pi$, i.e.\ it is a coupling
  of these measures. If $p<\infty$ then by definition of the Wasserstein distance
  \begin{align*}
    W_p(X_f(\cdot,1)_\sharp\pi, X_g(\cdot,1)_\sharp\pi)^p
    &\le \int_{\R^d\times\R^d}\|x-y\|^p\dd\gamma(x,y)\nonumber\\
    &=\int_{\R^d}\|X_f(x,1)-X_g(x,1)\|^p\dd\pi(x)\nonumber\\
    &\le |\supp(\pi)|(\|f-g\|_{C^0(\R^d\times\R)} e^{L})^p.
  \end{align*}
  The case $p=\infty$ is obtained with the usual adjustment of arguments.
\end{proof}

\subsection{KL-divergence}
In this subsection, we 
measure the distance in the KL-divergence, i.e.\
$\mathcal{D} = \mathcal{D}_{KL}$ in \eqref{Jobjective}. Unlike for the Wasserstein distance,
for $\mathcal{D}_{KL}(X(\cdot,1)_\sharp\pi,\rho)$ to be finite, we need in particular $X(\cdot, 1)_\sharp\pi\ll \rho$.
We restrict ourselves to distributions on cubes, and consider $\Omega_0 = \Omega_1 = [0,1]^d$.

Most of the work regarding the approximation distributions in KL-divergence using ODE flow maps has already been studied in our companion paper \cite{StatisticalNODE}; we include some of the relevant results here for the sake of completeness. In \cite{StatisticalNODE}, an 
ansatz space
\begin{align*}
  C^k_{\rm ansatz}(r) =  &\{f = (f_1,...,f_d)^T: f_j = \tilde{f}_jx_j(1-x_j), \tilde{f}_j\in C^k([0,1]^d\times[0,1], [0,1]^d)\} \\
  &\cap \{f\in C^2([0,1]^d\times[0,1], [0,1]^d): \|f\|_{C^2([0,1]^d\times[0,1], [0,1]^d)}\leq r\}
\end{align*}
was proposed. Its definition ensures that all push-forward distributions $X_f(\cdot,1)_\sharp\pi$ are supported on $[0,1]^d$ for any $f\in C^k_{\rm ansatz}(r)$. 
In Theorem \ref{thm:tri}, we showed the velocity field corresponding to the straight-line interpolation of Knothe-Rosenblatt maps $f^\Delta$ lies in $C^k([0,1]^d\times[0,1], [0,1]^d)$ and in Theorem 9 of \cite{StatisticalNODE}, it is shown that 
\begin{equation*}
  \frac{f^\Delta_j(x_1,\cdots,x_j)}{x_j(1-x_j)}\in C^k([0,1]^d\times[0,1], [0,1]).
\end{equation*}
  Therefore, by choosing $r$ to be large enough, for example, by taking $r= C_{k,d}(\frac{L_1}{L_2})^{\beta_dk^{d+3}}$ corresponding to the upper bound in Theorem \ref{thm:tri}, it suffices to consider an approximating element in $C^k_{\rm ansatz}(r)$.

We emphasize that bounding discrepancies in Wasserstein distance only requires $C^0$ control of the velocity fields; however, controlling discrepancies in the KL-divergence, requires $C^1$ control of the velocity fields, as stated in the next theorem.

\begin{theorem}\label{Thm:KLstatbility}
Let $\pi$, $\rho$ 
satisfy Assumptions \ref{ass:dens1} and \ref{ass:dens2} with $\Omega_0 = \Omega_1 = [0,1]^d$. Let $f^\Delta$ as in Theorem \ref{thm:tri} and  $g\in C^2_{ansatz}(r)$. 
Then
\[\mathcal{D}_{KL}(X_g(\cdot, 1)_\sharp\pi, X_{f^\Delta}(\cdot, 1)_\sharp\pi) \leq C \|f^\Delta - g\|^2_{C^1([0,1]^d)},\]
for some constant $C$ that depends on $L_1, L_2, d$. 
\end{theorem}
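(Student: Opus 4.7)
My approach is to view the KL divergence as a function of the ODE time variable, $\varphi(t) := \mathcal{D}_{KL}\bigl(X_g(\cdot,t)_\sharp \pi,\, X_{f^\Delta}(\cdot,t)_\sharp \pi\bigr)$, and to show $|\varphi'(t)| \le C \|f^\Delta - g\|_{C^1}^2$ uniformly in $t\in[0,1]$; since $\varphi(0)=0$ (both pushforwards equal $\pi$ at $t=0$), integrating then yields the desired bound $\varphi(1) \le C\|f^\Delta - g\|_{C^1}^2$. Membership of $g$ and $f^\Delta$ in $C^2_{\rm ansatz}(r)$ forces velocity fields to vanish on $\partial[0,1]^d$, so both flows preserve $[0,1]^d$ and the integration-by-parts manipulations below produce no boundary terms.

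To compute $\varphi'(t)$, I write $\eta_g := X_g(\cdot,t)_\sharp \pi$ and $\eta_f := X_{f^\Delta}(\cdot,t)_\sharp\pi$ with log-ratio $u := \log(\eta_g/\eta_f)$. Differentiating $\int \eta_g\, u\,dy$, applying the continuity equation $\partial_t \eta + \nabla\cdot(\eta v) = 0$ for both $(\eta_g,g)$ and $(\eta_f,f^\Delta)$, and integrating by parts so that all terms depending only on $g$ cancel (the standard ``KL decays along its own flow'' calculation), one arrives at
\[
\varphi'(t) = \int_{[0,1]^d} \frac{\eta_g - \eta_f}{\eta_f}\,\nabla\cdot\bigl((f^\Delta - g)\,\eta_f\bigr)\,dy,
\]
where subtracting the constant $1$ inside the first factor is justified by $\int \nabla\cdot((f^\Delta-g)\eta_f)\,dy = 0$.

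It remains to estimate each factor in $L^\infty$ by a constant times $\|f^\Delta - g\|_{C^1}$. For the second factor, the Leibniz rule yields $\|\nabla\cdot((f^\Delta-g)\eta_f)\|_{L^\infty} \le \|f^\Delta - g\|_{C^1}\,\|\eta_f\|_{C^1([0,1]^d)}$, and uniform $C^1$ bounds on $\eta_f(\cdot,t)$ for $t\in[0,1]$ (together with a uniform positive lower bound needed to divide by $\eta_f$) follow from the explicit Lagrangian formula $\eta_f(T_t(x),t) = \pi(x)/\det(\nabla_x T_t(x))$, Assumption~\ref{ass:dens2}, and Lemma~\ref{lemma:cT} (the last of which bounds $\det \nabla_x T_t(x)$ from below, uniformly in $t$). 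For the first factor, $\|\eta_g(\cdot,t) - \eta_f(\cdot,t)\|_{L^\infty}$ is bounded by $C\|f^\Delta - g\|_{C^1}$ via the change of variables $\eta_v(y,t) = \pi(X_v^{-1}(y,t))/\det(\nabla_x X_v(X_v^{-1}(y,t),t))$ applied with $v\in\{g,f^\Delta\}$: the inverse flows differ in $C^0$ by $O(\|g-f^\Delta\|_{C^0})$ by Theorem~\ref{thm:FlowMapBound} applied to the reversed ODE, and the Jacobians $\nabla_x X_v$ satisfy the linear ODE $\partial_t(\nabla X) = (\nabla v)(X)\,\nabla X$, whose $v$-stability yields $C^0$-control of the Jacobian determinants by $\|g-f^\Delta\|_{C^1}$.

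The main obstacle is this last step: propagating $C^1$-closeness of $g$ and $f^\Delta$ into $C^0$-closeness of the \emph{densities} $\eta_g$ and $\eta_f$ requires a Grönwall estimate not only for the flow itself (as in Theorem~\ref{thm:FlowMapBound}) but also for its Jacobian determinant. This is precisely why the theorem's hypothesis involves the $C^1$ norm of $f^\Delta - g$: the $C^0$ norm suffices for the Wasserstein estimate of Corollary~\ref{cor:Wp}, but cannot generate the quadratic (second-order) decay needed to bound the KL divergence.
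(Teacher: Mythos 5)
Your argument is correct in outline, but it takes a genuinely different route from the paper. The paper's proof is a short, static argument at $t=1$: it imports from the companion paper \cite{StatisticalNODE} the bound $\|X_g(\cdot,1)_\sharp\pi-\rho\|_{C^0([0,1]^d)}\le C\|f^\Delta-g\|_{C^1}$, then bounds the KL divergence by a $\chi^2$-type quantity via Jensen's inequality, $\mathcal{D}_{KL}\le\log\big(1+\int(\rho-\pi_g(\cdot,1))^2/\rho\big)$, and finishes with $\log(1+x)\le x$ and the lower bound $\rho\ge L_2$. You instead run a dynamic (Gr\"onwall-in-KL) argument: differentiate $\varphi(t)=\mathcal{D}_{KL}(X_g(\cdot,t)_\sharp\pi,X_{f^\Delta}(\cdot,t)_\sharp\pi)$, use the two continuity equations and integration by parts (your identity is equivalent to the standard formula $\varphi'(t)=\int\eta_g\,(g-f^\Delta)\cdot\nabla\log(\eta_g/\eta_f)\,dy$, and the boundary terms indeed vanish because both velocity fields vanish on $\partial[0,1]^d$), and bound $|\varphi'(t)|\le C\|f^\Delta-g\|_{C^1}^2$ before integrating from $\varphi(0)=0$. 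What your route buys is a self-contained derivation that makes the mechanism of the quadratic rate transparent; what it costs is that you must establish, uniformly in $t\in[0,1]$, (i) the density stability $\|\eta_g(\cdot,t)-\eta_f(\cdot,t)\|_{C^0}\le C\|f^\Delta-g\|_{C^1}$ and (ii) $C^1$ upper bounds and positive lower bounds on $\eta_f(\cdot,t)$ — item (i) at $t=1$ is precisely the estimate the paper outsources to \cite{StatisticalNODE}, so your proposal does not avoid that work but rather re-derives a stronger (all-$t$) version of it via flow and Jacobian Gr\"onwall estimates, and item (ii) is an extra ingredient the paper never needs. These sub-steps are only sketched in your proposal, but the tools you name (Theorem~\ref{thm:FlowMapBound} for the reversed flow, the variational equation for $\nabla_xX_v$, the Lagrangian formula $\eta_f(T_t(x),t)=\pi(x)/\det\nabla_xT_t(x)$ together with Assumption~\ref{ass:dens2}, $f^\Delta\in C^2$, and the determinant bounds behind Lemma~\ref{lemma:cT}) do suffice to carry them out, so I see no gap in principle — just substantially more work than the paper's two-line reduction.
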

\begin{proof}
By Lemma 6, Theorem 7 and Theorem 8 of \cite{StatisticalNODE} there exists a constant $C_{L_1,L_2, d}$ that depends on $L_1, L_2, d$ such that 

\[\|X_g(\cdot, 1)_\sharp\pi - X_{f^\Delta}(\cdot, 1)_\sharp\pi\|_{C^0([0,1]^d)}  = \|X_g(\cdot, 1)_\sharp\pi - \rho\|_{C^0([0,1]^d)}\leq C_{L_1,L_2, d}\|f^\Delta - g\|_{C^1([0,1]^d)}.\] 

To get an upper bound for $\mathcal{D}_{KL}(X_g(\cdot,1)_\sharp\pi, X_{f^\Delta}(\cdot,1)_\sharp\pi)$, we bound
\begin{align*}
&\mathcal{D}_{KL}(X_g(\cdot,1)_\sharp\pi, X_{f^\Delta}(\cdot,1)_\sharp\pi) = \mathbb{E}_{\pi_g(\cdot,1)}\Big[\log \frac{\pi_g(\cdot,1)}{\rho(x)}\Big] \leq \log \mathbb{E}_{\pi_g(x,1)}[\frac{\pi_g(x,1)}{\rho(x)}] = \log\int_{[0,1]^d} \frac{\pi_g(x,1)^2}{\rho(x)}dx \\
&= \log\left(\int_{[0,1]^d} \frac{(\rho(x)-\pi_g(x,1))^2}{\rho(x)}dx + 1\right)  \leq \log\left(\frac{C_{L_1,L_2, d}^2}{L_2} \|f^\Delta - g\|^2_{C^1([0,1]^d)}+1\right).     
\end{align*}
The fact that $\log(1+x)\le x$ for all $x\ge 0$ gives the result.
\end{proof}

\section{Neural network approximation}\label{sec:NN}
In Section \ref{sec:trireg} we studied the regularity of the velocity
field corresponding to straight-line trajectories realizing the
Knothe-Rosenblatt map at time $t=1$. Building on earlier works on
neural network approximation theory such as
\cite{NNApproximation1,NNApproximation4, StatisticalNODE}
in the present section we conclude that by parameterizing the velocity
field via neural networks, NODE flows can achieve arbitrary accuracy
in terms of the Wasserstein distance and KL-divergence. Furthermore,
given a desired accuracy $\eps>0$, we give upper bounds on the
required network depth, width, and size in terms of $\eps$. Since the objective functional $J$ contains first-order derivatives, we shall consider the approximation theory using ReLU$^2$ networks developed in \cite{StatisticalNODE}. 

We first recall the definition of ReLU$^2$ networks.

\begin{definition}\label{def:ReLUm}
  Denote $\sigma_2(x):=\max\{0,x\}^2$ and let
  $d_1,d_2\ge 1$. Then, the class of ReLU$^2$
  networks mapping from $[0,1]^{d_1}$ to $\mathbb{R}^{d_2}$, with
  depth
  $L$, width $W$, sparsity 
  $S$, and 
  bound $B$ on the network weights, is denoted
  
  \begin{align*}
    \Phi^{d_1,d_2}(L, W, S, B) = \Big\{&\big(W^{(L)}\jz{\sigma_2}(\cdot) + b^{(L)}\big)\circ\cdots\circ\big(W^{(1)}\jz{\sigma_2}(\cdot) + b^{(1)}\big): W^{(L)} \in \mathbb{R}^{1\times W},\\
                                 &b^{(L)}\in\mathbb{R}^{d_2}, W^{(1)}\in\mathbb{R}^{W\times d_1},b^{(1)}\in\mathbb{R}^W, W^{(l)}\in\mathbb{R}^{W\times W},b^{(l)}\in\mathbb{R}^W (1<l<L),\\
                                 &\sum_{l=1}^L\big(\|W^{(l)}\|_0 + \|b^{(l)}\|_0\big) \leq S, \max_{1\leq l\leq L} \big(\|W^{(l)}\|_{\infty,\infty}\lor \|b^{(l)}\|_\infty\big)\leq B \Big\}.\\    
  \end{align*}
\end{definition}
\begin{remark}
  Since the representation of a function via neural networks is not
  unique in general, the statement ``$g$ is a ReLU$^2$ NN of depth $L$,
  width $W$, sparsity $S$, norm bound $B$'' merely implies the existence of such a network satisfying the above
  properties. Possibly, for some other $\tilde L$, $\tilde W, \tilde S$
  and $\tilde B$, it may additionally hold that ``$g$ is a ReLU$^2$
  NN of depth $\tilde L$, width $\tilde W$, sparsity $\tilde S$, and norm bound $\tilde B$''.
\end{remark}


\subsection{Wasserstein distance}


As noted before, approximation in Wasserstein distance only 
requires $C^0$, 
rather than $C^1$, control of the velocity fields. In terms of the Wasserstein distance, we have the following result,
which is a consequence of our regularity analysis of the velocity
field in Theorem \ref{thm:tri}, Corollary \ref{cor:Wp} and neural
network approximation results as e.g.\ presented in
\cite{StatisticalNODE}:
\begin{proposition}\label{prop:WpNN}
  Let $k\ge 1$, $p\in [1,\infty]$, and let $\rho$, $\pi$ be two
  probability distributions on $[0,1]^d$
  with Lebesgue densities in 
  $C^k([0,1]^d)$ satisfying Assumption \ref{ass:dens2}.
  Then there exist constants $C_{d,k}$ and $C_{d, k, L_1, L_2}'$ such that for every $\eps\in (0,1]$ there exists a ReLU$^2$ neural network $g\in\Phi^{d+1,d}(L, W, S, B)$ with
  \begin{equation*}\label{eq:NNbounds1}
  L \leq C_{d,k},\quad
  W \leq C_{d, k, L_1, L_2}'\eps^{-\frac{d+1}{k}},\quad
  S\leq C_{d, k, L_1, L_2}'\eps^{-\frac{d+1}{k}},\quad
  B\leq C_{d, k, L_1,L_2}'\eps^{-\frac{1}{k}}
\end{equation*}
such that for another constant $C_{d, k, L_1, L_2}$, we have
  \begin{equation*}
    W_p(\rho,X_g(\cdot,1)_\sharp\pi)\le C \eps.
  \end{equation*}
\end{proposition}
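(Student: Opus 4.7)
The plan is to construct the approximating network by approximating a specific velocity field $f^\Delta$ whose time-one flow map pushes $\pi$ to $\rho$ exactly, and then transfer approximation error in the velocity field to error in the pushforward distribution using the stability result in Corollary~\ref{cor:Wp}.

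First, I would take $T:[0,1]^d\to[0,1]^d$ to be the Knothe--Rosenblatt map sending $\pi$ to $\rho$, which exists and is $C^k$ by the regularity theory under Assumption~\ref{ass:dens2}. Let $f^\Delta:[0,1]^d\times[0,1]\to\R^d$ be the straight-line velocity field associated with $T$ via Theorem~\ref{thm:f}. By Proposition~\ref{prop:cube}, the spatial domain of $f^\Delta$ is exactly the cube $[0,1]^d\times[0,1]$, and by Theorem~\ref{thm:tri}
\begin{equation*}
\|f^\Delta\|_{C^k([0,1]^d\times[0,1])} \le M \coloneqq C_{k,d}\Big(\tfrac{L_1}{L_2}\Big)^{\beta_d k^{d+3}}.
\end{equation*}
In particular $f^\Delta$ is Lipschitz in $x$ with Lipschitz constant bounded by $M$, and by construction $X_{f^\Delta}(\cdot,1)_\sharp\pi = \rho$, so that approximating $\rho$ in Wasserstein distance reduces to controlling $W_p(X_{f^\Delta}(\cdot,1)_\sharp\pi,\, X_g(\cdot,1)_\sharp\pi)$.

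Second, I would invoke the $C^0$-approximation theory for ReLU$^2$ networks developed in \cite{StatisticalNODE}, applied componentwise to the $d$-valued $C^k$ function $f^\Delta$ on the $(d+1)$-dimensional cube. For any $\eps\in(0,1]$, this yields a network $g\in\Phi^{d+1,d}(L,W,S,B)$ satisfying $\|f^\Delta - g\|_{C^0([0,1]^d\times[0,1])}\le \eps$ with
\begin{equation*}
L \le C_{d,k},\qquad W,\,S \le C'_{d,k,L_1,L_2}\,\eps^{-(d+1)/k},\qquad B \le C'_{d,k,L_1,L_2}\,\eps^{-1/k},
\end{equation*}
where the dependence on $M$ is absorbed into $C'_{d,k,L_1,L_2}$ since $M$ is itself determined by $d,k,L_1,L_2$. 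Combining this with Corollary~\ref{cor:Wp} applied to $f^\Delta$ (whose Lipschitz constant is at most $M$) and using $|\supp(\pi)|\le 1$ gives
\begin{equation*}
W_p(\rho,\, X_g(\cdot,1)_\sharp\pi) \le \|f^\Delta - g\|_{C^0}\, e^{M}\, |\supp(\pi)|^{1/p} \le e^{M}\,\eps,
\end{equation*}
which is the claim after renaming constants (taking $C = e^{M}$, which depends only on $d,k,L_1,L_2$).

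The main point that needs care is the citation to the $C^k$-to-ReLU$^2$ approximation theorem: one must check that the version used in \cite{StatisticalNODE} indeed yields $C^0$ approximation on the closed cube $[0,1]^{d+1}$ with the claimed network dimensions, and that its constants depend on $\|f^\Delta\|_{C^k}$ only polynomially (so that $M$ can be safely absorbed). Everything else is a mechanical combination of previously established results; no additional argument about the pushforward structure is required because the choice of $f^\Delta$ makes $X_{f^\Delta}(\cdot,1)_\sharp\pi = \rho$ exactly, sidestepping any transport-map approximation error in the Wasserstein distance.
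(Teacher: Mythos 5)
Your overall strategy is the same as the paper's (approximate the straight-line KR velocity field $f^\Delta$ from Theorem~\ref{thm:tri} in $C^0$ and transfer the error via Corollary~\ref{cor:Wp}), but there is a genuine gap at the point where you invoke Corollary~\ref{cor:Wp}. That corollary, through Theorem~\ref{thm:FlowMapBound}, is stated in terms of $\|f-g\|_{C^0(\R^d\times[0,1])}$ and needs $f$ to be defined and Lipschitz wherever the Gr\"onwall argument evaluates it, namely along the trajectories $X_g(x,s)$ driven by the \emph{network} $g$. Those trajectories have no reason to stay in $[0,1]^d$: Proposition~\ref{prop:cube} confines the trajectories of $f^\Delta$, not those of $g$, and $g$ differs from $f^\Delta$ by up to $\eps$ at the boundary, so $X_g$ can exit the cube. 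You only control $\|f^\Delta-g\|_{C^0}$ on $[0,1]^d\times[0,1]$, and $f^\Delta$ is not even defined outside it, so the displayed bound $W_p(\rho,X_g(\cdot,1)_\sharp\pi)\le \|f^\Delta-g\|_{C^0}\,e^{M}$ is not justified as written: once $X_g(x,s)$ leaves the cube, neither the closeness of $g$ to $f^\Delta$ nor the Lipschitz bound on $f^\Delta$ is available at $X_g(x,s)$.

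This is exactly the step the paper's proof spends its effort on: it extends $f^\Delta$ to a compactly supported $\tilde f^\Delta\in C^k(\R^d\times[0,1])$ with controlled norm (Step~1 of the proof of Theorem~33 in \cite{StatisticalNODE}), applies the ReLU$^2$ approximation result of \cite[Theorem~16]{StatisticalNODE} on the enlarged box $[-M,M]^d\times[0,1]$ with $M=1+\exp(L)$, where $L$ is the Lipschitz constant of $\tilde f^\Delta$, and then bootstraps with \eqref{eq:FlowMapBound} to get $\|X_{\tilde f^\Delta}(x,t)-X_g(x,t)\|\le\eps\,e^{L}\le e^{L}$, hence $X_g(x,t)\in[-M,M]^d$ for all $t\in[0,1]$; only after this confinement is the $C^0$ bound valid along the $g$-trajectories and Corollary~\ref{cor:Wp} applicable. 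Your argument becomes complete once you add this extension-and-confinement step (or, alternatively, force the $g$-trajectories to remain in the cube, e.g.\ via the boundary-vanishing ansatz $g\otimes\chi_d$ used in the KL section, together with a stability estimate on the cube). The remaining ingredients of your proposal --- choice of $f^\Delta$ via Theorem~\ref{thm:tri}, absorbing $M$ into the constants, and $|\supp(\pi)|\le 1$ --- agree with the paper.
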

\begin{proof}
  According to Theorem \ref{thm:tri} there exists
  ${f^\Delta}\in C^k([0,1]^d\times [0,1])$ such that
  $X_{f^\Delta}(\cdot,1)_\sharp\pi=\rho$ and $\|{f^\Delta}\|_{C^k([0,1]^d\times [0,1])}$
  only depends on $L_1$, $L_2$, $k$, $d$. We can extend ${f^\Delta}$ to some
  $\tilde {f^\Delta}\in C^k(\R^d\times [0,1])$ with compact support and such
  that
  \begin{equation*}
    \|\tilde {f^\Delta}\|_{C^k(\R^d\times [0,1])}\le C \|{f^\Delta}\|_{C^k(\Omega_{[0,1]})}
  \end{equation*}
  for some $C$ solely depending on $d$ (see for example Step 1 of the
  proof of \cite[Theorem 33]{StatisticalNODE}). Since $\tilde {f^\Delta}$ has compact
  support and belongs to $C^1(\R^d\times [0,1])$, there exists $L<\infty$ such that
  $x\mapsto \tilde {f^\Delta}(x,t):\R^d\to\R^d$ has Lipschitz constant $L$ for all
  $t\in [0,1]$. Again, $L$ solely depends on
  $\|{f^\Delta}\|_{C^1([0,1]^d\times [0,1])}$ and thus on $L_1$, $L_2$, $k$,
  $d$. Next, let $M=1+\exp(L)$. 

  According to
  \citet[Theorem 16]{StatisticalNODE}, there exists a ReLU$^2$ neural
  network $g$ satisfying the bounds \eqref{eq:NNbounds1} and
  \begin{equation}\label{eq:f-geps}
\|{\tilde f^\Delta}(x)-g(x)\|_{C^0([-M,M]^d\times [0,1])}\le\eps.
  \end{equation}

  Fix $x\in [0,1]^d$. By \eqref{eq:FlowMapBound}
  and \eqref{eq:f-geps}, we
  have for all $t\in [0,1]$
    \begin{equation}\label{eq:XfDXg}
      \|X_{\tilde f^\Delta}(x,t)-X_g(x,t)\|\le \eps\exp(L)\le\exp(L)
    \end{equation}
    and thus $\|X_{g}(x,t)\|\le 1+\exp(L)=M$.
  Hence $X_g(x,1)\in\R^d$ is well-defined since the trajectory
  $t\mapsto X_g(x,t)$, $t\in[0,1]$, remains within
  $[-M,M]^d\times [0,1]$. 
  Finally, the first inequality in \eqref{eq:XfDXg} and an application of
  Corollary \ref{cor:Wp} concludes the proof.
\end{proof}

Next we discuss convergence of the objective $J$ defined in \eqref{Jobjective},
for the Wasserstein distance; specifically
  \begin{equation*}
    J_{\rm W}(f) := W_2(\rho,X_f(\cdot,1)_\sharp\pi)^2+R(f)
  \end{equation*}
  where
  \begin{equation}\label{eq:Rf}
    R(f) = \int_0^1\|\nabla_X f(X_f(x,t)) f(X_f(x,t),t)+\partial_t f(X_f(x,t),t)\|^2\dd t.
  \end{equation}
  Since the regularization term $R(f)$ contains first order
  derivatives of $f$, it will not suffice to have uniform
  approximation in $f$, but rather we'll additionally need uniform
  approximation of the derivatives of $f$.
We also point out that by Theorem \ref{thm:ExSol}, $\inf_{f\in\mathcal{V}}J_{\rm W}(f)=0.$

  \begin{theorem}\label{Thm:JBoundWasserstein}
  Let $k\ge 2$, $p\in [1,\infty]$ and let $\rho$, $\pi$ be two
  probability distributions on $[0,1]^d$ with Lebesgue densities in
  $C^k([0,1]^d)$ satisfying Assumption \ref{ass:dens2}.
  
  Then there exist constants 
  $C_{d,k}$ and $C_{d, k, L_1, L_2}'$ such that
  for every $\eps\in (0,1]$
  there exists a ReLU$^2$ neural network $g\in\Phi^{d+1,d}(L, W, S, B)$ with
  \begin{equation}\label{eq:NNbounds2}
  L \leq C_{d,k},\quad
  W \leq C_{d, k, L_1, L_2}'\eps^{-\frac{d+1}{k-1}},\quad
  S\leq C_{d, k, L_1, L_2}'\eps^{-\frac{d+1}{k-1}},\quad
  B\leq C_{d, k, L_1,L_2}'\eps^{-\frac{1}{k-1}}
\end{equation}
  and such that for another constant $C_{d,k, L_1, L_2}$, we have 
  \begin{equation*}
    J_{\rm W}(g)\le 
    C_{d,k,L_1,L_2} \eps^2.
  \end{equation*}
\end{theorem}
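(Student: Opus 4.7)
The plan is to lift the $C^k$ straight-line velocity field $f^\Delta$ from Theorem \ref{thm:tri} to a ReLU$^2$ approximation $g$ that is accurate in the $C^1$-norm, and then to control the two pieces of $J_{\rm W}$ separately: the divergence term via the Wasserstein stability estimate in Corollary \ref{cor:Wp}, and the regularizer $R(g)$ via a direct expansion that exploits the identical vanishing of the Lagrangian acceleration of $f^\Delta$ on $\Omega_{[0,1]}$.

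By Theorem \ref{thm:tri}, there exists $f^\Delta \in C^k([0,1]^d\times[0,1], \R^d)$ with $X_{f^\Delta}(\cdot,1)_\sharp\pi = \rho$ and $\|f^\Delta\|_{C^k} \leq C_{d,k,L_1,L_2}$. By Proposition \ref{prop:cube}, $\Omega_{[0,1]} = [0,1]^d\times[0,1]$, and by construction the Lagrangian acceleration
\[
A_{f^\Delta}(y,t) := \nabla_X f^\Delta(y,t)\, f^\Delta(y,t) + \partial_t f^\Delta(y,t)
\]
vanishes on all of $[0,1]^d\times[0,1]$. I would then invoke a ReLU$^2$ approximation theorem in the $C^1$-norm (in the spirit of \cite[Theorem~16]{StatisticalNODE}, but measuring error of the function and its first partials simultaneously), obtaining $g \in \Phi^{d+1,d}(L,W,S,B)$ with the sizes in \eqref{eq:NNbounds2} and $\|f^\Delta - g\|_{C^1([0,1]^d\times[0,1])} \leq \eps$. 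To keep the trajectories $X_g(x,\cdot)$ confined to $[0,1]^d$, I would take $g$ of the ansatz form $g_j(x,t) = x_j(1-x_j)\tilde g_j(x,t)$ with $\tilde g_j$ a ReLU$^2$ network; since ReLU$^2$ networks realize products efficiently, $g$ is itself a ReLU$^2$ network of comparable size.

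The Wasserstein estimate is then immediate: since $X_{f^\Delta}(\cdot,1)_\sharp\pi = \rho$ and the spatial Lipschitz constant $L_f$ of $f^\Delta$ is bounded by $\|f^\Delta\|_{C^1}\leq C_{d,k,L_1,L_2}$, Corollary \ref{cor:Wp} (with $p=2$ and $|\supp(\pi)|\leq 1$) gives $W_2(\rho, X_g(\cdot,1)_\sharp\pi)^2 \leq (\|f^\Delta - g\|_{C^0} e^{L_f})^2 \leq C\eps^2$. For the regularizer, the key observation is that at any $(y,t) = (X_g(x,t), t) \in [0,1]^d\times[0,1]$ one has $A_{f^\Delta}(y,t)=0$, so
\[
A_g(y,t) = A_g(y,t) - A_{f^\Delta}(y,t) = (\nabla g - \nabla f^\Delta)\,g + \nabla f^\Delta\,(g - f^\Delta) + (\partial_t g - \partial_t f^\Delta).
\]
Each summand is bounded in sup-norm by $C\|f^\Delta - g\|_{C^1}\leq C\eps$, using $\|g\|_{C^0}\leq \|f^\Delta\|_{C^0}+\eps$ and $\|\nabla f^\Delta\|_{C^0}\leq \|f^\Delta\|_{C^1}$. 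Integrating in $t$ and $x$ yields $R(g)\leq C\eps^2$, and combining with the Wasserstein bound gives $J_{\rm W}(g)\leq C\eps^2$.

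The main obstacle is the $C^1$-norm approximation step: because $R$ involves first derivatives of the velocity field, uniform control of $g$ alone is no longer sufficient, and one must obtain simultaneous $\eps$-accuracy of $g$ and $\nabla g$; this is exactly what forces the degradation of the exponent from $k$ to $k-1$ in the size bounds \eqref{eq:NNbounds2} relative to the $W_2$-only result of Proposition \ref{prop:WpNN}. A secondary subtlety is guaranteeing that $X_g(x,t)$ never exits $[0,1]^d$, where the zero-acceleration identity applies; this is cleanly resolved by taking $g$ in the boundary-vanishing ansatz form, rather than working with an extension of $f^\Delta$ beyond $[0,1]^d\times[0,1]$ where $A_{f^\Delta}$ need no longer vanish.
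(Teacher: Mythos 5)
Your argument is essentially sound and reaches the stated bounds, but it diverges from the paper's proof at the one genuinely delicate point, namely how to transfer the zero-acceleration property of $f^\Delta$ to the trajectories of $g$. The paper extends $f^\Delta$ to a compactly supported $\tilde f^\Delta\in C^k(\R^d\times[0,1])$, approximates it in $C^1$ on a larger box $[-M,M]^d$ by a plain network from \cite[Theorem~16]{StatisticalNODE}, and then compares the acceleration of $g$ along $X_g$-trajectories with that of $\tilde f^\Delta$ along the (straight) $X_{\tilde f^\Delta}$-trajectories; since these two evaluation points differ by at most $\eps e^{L}$ (Gr\"onwall), it must additionally invoke Lipschitz continuity of $\nabla_x\tilde f^\Delta$ and $\partial_t\tilde f^\Delta$, i.e.\ genuinely uses the $C^2$ bound coming from $k\ge 2$. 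You instead confine the $X_g$-trajectories to $[0,1]^d$ via the boundary-vanishing ansatz $g_j=x_j(1-x_j)\tilde g_j$ and exploit that the Eulerian identity $\nabla_X f^\Delta\,f^\Delta+\partial_t f^\Delta\equiv 0$ holds pointwise on all of $[0,1]^d\times[0,1]$ (by Proposition~\ref{prop:cube}), so you only need the $C^1$ estimate at the $g$-trajectory points themselves; this is cleaner and avoids both the extension and the second-derivative Lipschitz argument. The price is the structural claim you wave at: the theorem asks for $g\in\Phi^{d+1,d}(L,W,S,B)$ with the bounds \eqref{eq:NNbounds2}, so you must (i) know that $f^\Delta_j/(x_j(1-x_j))$ is smooth enough to be approximated in $C^1$ at rate $\eps$ with the same network sizes (this is \cite[Theorem~9/Theorem~20]{StatisticalNODE}, the route the paper itself takes only in the KL case), (ii) check that multiplying by $x_j(1-x_j)$ only inflates the $C^1$ error by a constant, and (iii) verify that the product and the identity map can be absorbed exactly into a ReLU$^2$ network without violating the width, sparsity, and weight-bound constraints. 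All three points are true and standard for ReLU$^2$ activations, but as written they constitute a small gap in rigor rather than a complete argument; with those details supplied, your proof is a valid alternative to the paper's.
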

\begin{proof}
  According to Theorem \ref{thm:tri}, there exists
  ${f^\Delta}\in C^k([0,1]^d\times [0,1])$ such that
  $X_{f^\Delta}(\cdot,1)_\sharp\pi=\rho$ (i.e.\
  $W_2(\rho,X_{f^\Delta}(\cdot,1)_\sharp\pi)=0$), ${f^\Delta}$ realizes straight line
  trajectories (i.e.\ $R({f^\Delta})=0$) and $\|{f^\Delta}\|_{C^k([0,1]^d\times [0,1])}$
  only depends on $L_1$, $L_2$, $k$, $d$. In particular ${f^\Delta}$ can be
  extended to an element of $\mathcal{V}$ in \eqref{eq:V}
  such that $J_{\rm W}({f^\Delta})=0$. Hence it
  suffices to show existence of $g$ such that $J_{\rm W}(g)\le C\eps^2$.

  We can extend ${f^\Delta}$ to some $\tilde {f^\Delta}\in C^k(\R^d\times [0,1])$ with
  compact support and such that
  \begin{equation*}
    \|\tilde {f^\Delta}\|_{C^k(\R^d\times [0,1])}\le C \|{f^\Delta}\|_{C^k([0,1]^d\times[0,1])}
  \end{equation*}
  for some $C$ solely depending on $d$ (see for example Step 1 of the
  proof of \cite[Theorem 33]{StatisticalNODE}) 
  Since $\tilde {f^\Delta}$ has
  compact support and belongs to $C^k$, $k\ge 2$, there exists
  $L<\infty$ such that the three maps
  \begin{equation*}
    x\mapsto \tilde {f^\Delta}(x,t),\qquad
    x\mapsto \partial_t\tilde {f^\Delta}(x,t),\qquad
    x\mapsto \nabla_x\tilde {f^\Delta}(x,t),\qquad
  \end{equation*}
  have Lipschitz constant $L$ for all $t\in [0,1]$ and all
  $x\in\R^d$. Again, $L$ solely depends on
  $\|{f^\Delta}\|_{C^k([0,1]^d\times [0,1])}$ and thus on $L_1$, $L_2$, $k$,
  $d$.

  Next, let $M>\exp(L)$ be so large that
  $[0,1]^d \subseteq [-M+\exp(L),M-\exp(L)]^d$.  According to
  \cite[Theorem 16]{StatisticalNODE},
  there exists a ReLU$^2$ neural
  network $g$ satisfying the bounds \eqref{eq:NNbounds2} and 
  \begin{equation}\label{eq:f-geps2}
    \|{\tilde f^\Delta}(x)-g(x)\|_{C^1([-M,M]^d\times [0,1])}\le\eps.
  \end{equation}
  As in the proof of Proposition \ref{prop:WpNN}, we conclude that
  $X_g(x,1)\in\R^d$ is well-defined and
  \begin{equation*}
    \|X_{\tilde f^\Delta}(x,t)-X_g(x,t)\|\le \eps \exp(L)\qquad\forall x\in[0,1]^d,~t\in [0,1].
  \end{equation*}
  Since the trajectories $t\mapsto X_{\tilde f^\Delta}(x,t)$ remain in
  $[0,1]^d\times [0,1]$ according to Proposition \ref{prop:cube}, we conclude that
  \begin{equation}\label{eq:trajectoriesM}
    X_g(x,t)\in [-M,M]^d\qquad\forall x\in [0,1]^d,~t\in [0,1].
  \end{equation}
  Moreover, as in the proof of Proposition \ref{prop:WpNN} holds
  \begin{equation*}
    W_2(\rho,X_g(\cdot,1)_\sharp\pi)^2\le C\eps^2.
  \end{equation*}

  To bound $J_{\rm W}(g)$, it remains to treat the term
  $R(g)$ in \eqref{eq:Rf}. We have
  \begin{align*}
    \|\nabla_X \tilde f^\Delta(X_{\tilde f^\Delta}(x,t))-\nabla_Xg(X_g(x,t))\|
    \le
    &\|\nabla_X\tilde {f^\Delta}(X_{\tilde f^\Delta}(x,t))-\nabla_X\tilde {f^\Delta}(X_g(x,t))\|\nonumber\\
    &+\|\nabla_X\tilde {f^\Delta}(X_g(x,t))-\nabla_Xg(X_g(x,t))\|.
  \end{align*}
  Since $\nabla_X\tilde {f^\Delta}$ has Lipschitz constant $L$,
  The first term on the right-hand side is bounded by $L\exp(L)\eps$.
  Due to $X_g(x,t)\in [0,1]^d\subseteq [-M,M]^d$ by \eqref{eq:trajectoriesM},
  we can use \eqref{eq:f-geps2} to bound the second term which gives
  \begin{equation*}
    \|\nabla_X\tilde {f^\Delta}(X_{\tilde f^\Delta}(x,t))-\nabla_Xg(X_g(x,t))\|
    \le L\exp(L)\eps+\eps
    \qquad\forall x\in[0,1]^d,~t\in [0,1].
  \end{equation*}
  Similarly we obtain
  \begin{equation*}
    \|\partial_t \tilde { f^\Delta}(X_{\tilde f^\Delta}(x,t))-\partial_t g(X_g(x,t))\|
    \le L\exp(L)\eps+\eps
    \qquad\forall x\in[0,1]^d,~t\in [0,1]
  \end{equation*}
  and
  \begin{equation*}
    \|\tilde {f^\Delta}(X_{\tilde f^\Delta}(x,t))-g(X_g(x,t))\|
    \le L\exp(L)\eps+\eps \qquad\forall x\in[0,1]^d,~t\in [0,1].
  \end{equation*}
  Therefore
  \begin{align*}
    R(g)=|R({f^\Delta})-R(g)|
    &\le \int_0^1\|\nabla_X \tilde f(X_{\tilde f^\Delta}(x,t)) \tilde {f^\Delta}(X_{\tilde f^\Delta}(x,t),t)+\partial_t \tilde f(X_{\tilde f^\Delta}(x,t),t) \\
    &\qquad\qquad-\nabla_X g(X_g(x,t)) g(X_g(x,t),t)-\partial_t g(X_g(x,t),t)\|^2\dd t\\
    &\le C \int_0^1 \eps^2\dd t,
  \end{align*}
  where $C$ only depends on $L$.
\end{proof}

\subsection{KL-divergence} 

  

Now we discuss convergence of the objective $J$ defined in \eqref{Jobjective} for case where $\mathcal{D}$ is the KL divergence, i.e., 
  \begin{equation*}
    J_{\rm KL}(f) := {\rm KL}( X_f(\cdot,1)_\sharp \pi \vert \vert \rho ) + R(f).
  \end{equation*}
Note again that by Theorem \ref{thm:ExSol}, we have $\inf_{f\in\mathcal{V}} J_{\rm KL}(f)=0$. 
  
For the KL-divergence to be well-defined, we need to enforce the constraint that the ODE flow map is a diffeomorphism onto $[0,1]^d$. Therefore, we shall use the neural network based ansatz space introduced in \cite{StatisticalNODE}.
To this end define $\chi_d(x_1,\dots x_d): D\rightarrow D$ via
$$\chi_d(x_1,\dots x_d) = [x_1(1-x_1),\dots ,x_d(1-x_d)]^T.$$ Letting
$\otimes$ be the coordinate-wise (Hadamard) multiplication of two vectors,
for any velocity field
$f: [0,1]^d\times[0,1]\rightarrow \mathbb{R}^d$,
$f\otimes\chi_d$ then yields a vector field with vanishing
normal components at the boundary of $[0,1]^d$ at any 
time $t\in [0,1]$.  
We define our neural network ansatz space as
          \begin{align}\label{eq:FNN}
            &\mathcal{F}_{\text{NN}}(L,W, S, B, r) :=\nonumber\\
            &\qquad\set{f^{\text{NN}}(x_1,\ldots, x_d,t)\otimes\chi_d(x_1,\ldots, x_d)}{f^\text{NN}\in\Phi^{d+1,d}(L, W, S, B),~
              \|f\|_{W^{2,\infty}(\Omega)}\leq r}.
          \end{align}

\begin{proposition}\label{Prop:KLerrorNN}
    
Let $k \geq 2$ and $\rho, \pi$ two probability distributions satisfying Assumptions \ref{ass:dens1} and \ref{ass:dens2} with $\Omega_0 = \Omega_1 = [0,1]^d$.

Then there exists constants $C_{d,k}$
and $C_{d,k,L_1,L_2}'$
such that for every $\eps\in(0,1]$, there exists a ReLU$^2$ network $g$ in the ansatz space $\mathcal{F}_{NN}(L, W, S, B, r) $ with parameters satisfying
\begin{equation*}
  L \leq C_{d,k},\quad
  W \leq C_{d, k, L_1, L_2}'\eps^{-\frac{d+1}{k-1}},\quad
  S\leq C_{d, k, L_1, L_2}'\eps^{-\frac{d+1}{k-1}},\quad
  B\leq C_{d, k, L_1,L_2}'\eps^{-\frac{1}{k-1}},\quad
  r\leq C_{d, k, L_1, L_2}'
\end{equation*}
  such that for another constant $C_{d, k, L_1, L_2}$, we have 
\[ \mathrm{KL}(X_g(\cdot,1)_\sharp \pi, \rho)\leq C_{d, k ,L_1, L_2}'\eps^2.\]
\end{proposition}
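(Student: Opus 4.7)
The plan is to closely mirror the proof of Theorem \ref{Thm:JBoundWasserstein}, substituting the KL stability estimate for the Wasserstein stability estimate. First, by Theorem \ref{thm:tri} there exists a velocity field $f^\Delta\in C^k([0,1]^d\times[0,1])$ realizing straight-line trajectories with $X_{f^\Delta}(\cdot,1)_\sharp\pi=\rho$, whose $C^k$ norm is bounded by a constant depending only on $k,d,L_1,L_2$. Moreover, Theorem 9 of \cite{StatisticalNODE} tells us that the triangular structure allows each component $f^\Delta_j/(x_j(1-x_j))$ to lie in $C^k([0,1]^d\times[0,1])$, so we can write $f^\Delta = \tilde f^\Delta \otimes \chi_d$ with $\tilde f^\Delta \in C^k$ and $\|\tilde f^\Delta\|_{C^k}$ controlled by the same constant (up to a dimensional factor).

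Next, I would invoke the ReLU$^2$ neural-network approximation result (Theorem 16 of \cite{StatisticalNODE}) applied to $\tilde f^\Delta$ in the $C^1$ norm on $[0,1]^d \times [0,1]$. Because $\tilde f^\Delta \in C^k$ with $k \ge 2$, this yields for every $\eps\in (0,1]$ a network $f^{\text{NN}}\in\Phi^{d+1,d}(L,W,S,B)$ with the stated depth/width/sparsity/weight bounds such that $\|\tilde f^\Delta - f^{\text{NN}}\|_{C^1([0,1]^d\times[0,1])}\le \eps$. Setting $g := f^{\text{NN}}\otimes \chi_d$ gives an element of $\mathcal{F}_{\mathrm{NN}}(L,W,S,B,r)$ provided $r$ is chosen large enough; since $\|\chi_d\|_{C^2}$ is a dimensional constant and $\|f^{\text{NN}}\|_{C^2}$ is controlled via $\|\tilde f^\Delta\|_{C^2}+\eps$, the constant $r$ can be taken independent of $\eps$ and bounded in terms of $d,k,L_1,L_2$. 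The product structure and boundedness of $\chi_d$ and its derivatives further imply
\[
\|f^\Delta - g\|_{C^1([0,1]^d\times[0,1])} \le C_d \|\tilde f^\Delta - f^{\text{NN}}\|_{C^1([0,1]^d\times[0,1])} \le C_d \eps.
\]

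Finally, because $g\in\mathcal{F}_{\mathrm{NN}}$, the factor $\chi_d$ forces the velocity field to have vanishing normal component at $\partial[0,1]^d$ for all $t\in[0,1]$, so the flow $X_g(\cdot,t)$ maps $[0,1]^d$ into itself and $X_g(\cdot,1)_\sharp\pi$ is a well-defined probability density on $[0,1]^d$. This puts us in the regime of Theorem \ref{Thm:KLstatbility}, which yields
\[
\mathrm{KL}(X_g(\cdot,1)_\sharp\pi,\,\rho) = \mathrm{KL}(X_g(\cdot,1)_\sharp\pi,\,X_{f^\Delta}(\cdot,1)_\sharp\pi)\le C_{d,k,L_1,L_2}\|f^\Delta-g\|^2_{C^1([0,1]^d)}\le C_{d,k,L_1,L_2}'\eps^2,
\]
which is the claim.

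The main obstacle is the verification that the network $g=f^{\text{NN}}\otimes\chi_d$ actually lies in $\mathcal{F}_{\mathrm{NN}}$ with a uniform (in $\eps$) constant $r$ bounding $\|g\|_{W^{2,\infty}}$: the $W^{2,\infty}$ bound requires second-order control of $f^{\text{NN}}$, which is not immediate from the $C^1$ approximation statement. One resolves this by invoking the version of Theorem 16 in \cite{StatisticalNODE} that simultaneously gives a $C^1$-error of order $\eps$ \emph{and} a uniform $C^2$-bound (the latter is a byproduct of the standard piecewise-polynomial/Bramble--Hilbert construction underlying the approximation result, since the target $\tilde f^\Delta$ already has bounded $C^2$ norm). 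Once this uniform $W^{2,\infty}$ bound is in hand, choosing $r$ of order $\|\tilde f^\Delta\|_{C^2}\cdot\|\chi_d\|_{C^2}$ places $g$ in the desired ansatz class and the proof closes.
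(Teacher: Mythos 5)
Your proposal is correct and follows essentially the same route as the paper: regularity of $f^\Delta$ from Theorem~\ref{thm:tri}, a $C^1$ neural-network approximation lying in the ansatz space, and then Theorem~\ref{Thm:KLstatbility} to convert the $C^1$ error into a KL bound of order $\eps^2$. The only difference is that the paper does not reassemble the approximant from Theorem 9 and Theorem 16 of \cite{StatisticalNODE} as you do; it cites Step 1 of Theorem 20 of \cite{StatisticalNODE}, which directly produces $g\in\mathcal{F}_{\text{NN}}(L,W,S,B,r)$ with $r\le C_{d,k,L_1,L_2}$ and $\|f^\Delta-g\|_{C^1([0,1]^d\times[0,1])}\le C N^{-(k-1)/(d+1)}$ --- this is exactly the ``simultaneous $C^1$-error and uniform $W^{2,\infty}$-bound'' statement whose existence you had to postulate in your last paragraph to settle membership in the ansatz class with $\eps$-independent $r$.
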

\begin{proof}
According to Theorem \ref{thm:tri}, there exists
  $f^\Delta\in C^k([0,1]^d\times [0,1])$ such that
  $X_{f^\Delta}(\cdot,1)_\sharp\pi=\rho$ (i.e.\
  $\mathcal{D}_{KL}(\rho,X_{f^\Delta}(\cdot,1)_\sharp\pi)=0$), $f^\Delta$ realizes straight line
  trajectories (i.e.\ $R(f^\Delta)=0$) and $\|f^\Delta\|_{C^k([0,1]^d\times [0,1])}$
  only depends on $L_1$, $L_2$, $k$, $d$. By Step 1 of \cite[Theorem 20]{StatisticalNODE}, for all $N \geq 1$,  there exists 
  $g\in \mathcal{F}_{NN}(L, W, S, B, r)$ with
  \begin{equation*}
    L \leq C_{d, k},\quad W \leq N,\quad S \leq N,\quad B \leq C_{d,k}\|f^\Delta\|_{C^k([0,1]^d\times [0,1])} + N^{\frac{1}{d+1}},\quad
    r \leq C_{d, k, L_1, L_2},
  \end{equation*}
  such that $\|f^\Delta- g\|_{C^1([0,1]^d\times[0,1])} \leq C_{d, k, L_1, L_2}N^{-\frac{k-1}{d+1}}$, where $C_{d,k}$ and $\tilde C_{d, k, L_1, L_2}$ are constants depending on $d, k$ and $d, k, L_1, L_2$ respectively. Letting $\tilde C_{d, k, L_1, L_2}N^{-\frac{k-1}{d+1}} = \eps$, we solve for $N$ to get $N = C_{d, k, L_1, L_2}\eps^{-\frac{d+1}{k-1}}$
  for some $C_{d,k,L_1,L_2}$. By Theorem \ref{Thm:KLstatbility}, we then have $\mathcal{D}_{KL}(X_g(\cdot,1), \rho) \leq C_{d, k, L_1, L_2}\eps^2$. 
\end{proof}

\begin{theorem}\label{Thm:JBoundKL}
Let $k \geq 2$ and $\rho, \pi$ two probability distributions satisfying Assumptions \ref{ass:dens1} and \ref{ass:dens2} with $\Omega_0 = \Omega_1 = [0,1]^d$.

Then there exist constants $C_{d,k}$ and $C_{d, k, L_1, L_2}'$ such that for every $\eps\in(0,1]$, there exists a ReLU$^2$ network $g$ in the ansatz space $\mathcal{F}_{NN}(L, W, S, B, r) $ with parameters satisfying
\begin{equation*}
  L \leq C_{d,k},\quad
  W \leq C_{d, k, L_1, L_2}'\eps^{-\frac{d+1}{k-1}},\quad
  S\leq C_{d, k, L_1, L_2}'\eps^{-\frac{d+1}{k-1}},\quad
  B\leq C_{d, k, L_1,L_2}'\eps^{-\frac{1}{k-1}},\quad
  r\leq C_{d, k, L_1, L_2}'
\end{equation*}
  such that for another constant $C_{d, k, L_1, L_2}$, we have
\[J(g) \leq 
C_{d, k ,L_1, L_2}\eps^2.\]
\end{theorem}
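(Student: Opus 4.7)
The plan is to combine Proposition \ref{Prop:KLerrorNN}, which already handles the KL term, with an $R(g) = O(\eps^2)$ bound constructed along the lines of the proof of Theorem \ref{Thm:JBoundWasserstein}. Both halves of $J(g)$ rely on the same approximation ingredient: a ReLU$^2$ network $g$ in the ansatz space $\mathcal{F}_{\rm NN}(L,W,S,B,r)$ approximating the straight-line velocity field $f^\Delta$ from Theorem \ref{thm:tri} in the $C^1$ norm, with the parameter bounds quoted in the statement. This joint choice of $g$ is what makes the final $J(g) \le C_{d,k,L_1,L_2}\eps^2$ bound possible with a single architecture.

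Concretely, I would first invoke Theorem \ref{thm:tri} to produce $f^\Delta \in C^k([0,1]^d\times[0,1])$ that pushes $\pi$ to $\rho$ along straight lines, so that $R(f^\Delta)=0$ and $\text{KL}(X_{f^\Delta}(\cdot,1)_\sharp\pi,\rho)=0$; its $C^k$ norm is controlled in terms of $L_1,L_2,k,d$ by Theorem \ref{thm:tri}. Next, I would apply the neural network approximation step used in the proof of Proposition \ref{Prop:KLerrorNN} (based on Theorem 20 of \cite{StatisticalNODE}) to obtain a network $g \in \mathcal{F}_{\rm NN}(L,W,S,B,r)$ with the stated architecture bounds such that $\|f^\Delta-g\|_{C^1([0,1]^d\times [0,1])} \le \eps$. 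Proposition \ref{Prop:KLerrorNN} immediately gives $\text{KL}(X_g(\cdot,1)_\sharp\pi,\rho) \le C_{d,k,L_1,L_2}\eps^2$.

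It remains to bound $R(g)$. Here I would follow the argument from the proof of Theorem \ref{Thm:JBoundWasserstein} verbatim: using the Lipschitz constants of $f^\Delta$, $\nabla_x f^\Delta$, and $\partial_t f^\Delta$ (which are finite and depend only on $L_1,L_2,k,d$ since $k\ge 2$), together with the bound $\|X_{f^\Delta}(x,t)-X_g(x,t)\| \le \eps e^L$ from Theorem \ref{thm:FlowMapBound}, one obtains $\|\nabla_X f^\Delta(X_{f^\Delta}) - \nabla_X g(X_g)\|$, $\|\partial_t f^\Delta(X_{f^\Delta}) - \partial_t g(X_g)\|$, and $\|f^\Delta(X_{f^\Delta})-g(X_g)\|$ all bounded by $(Le^L+1)\eps$. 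Since $R(f^\Delta)=0$, expanding the integrand of $R(g)$ and using these three uniform bounds yields $R(g) \le C_{d,k,L_1,L_2}\eps^2$. A crucial point is that Proposition \ref{prop:cube} ensures the $f^\Delta$-trajectories stay in $[0,1]^d$, so the trajectories of $g$ remain in a fixed compact enlargement $[-M,M]^d$ on which the $C^1$ approximation bound applies; the ansatz factor $\chi_d$ in $\mathcal{F}_{\rm NN}$ in fact keeps the $g$-trajectories within $[0,1]^d$ itself.

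The main obstacle is simply verifying that the ansatz-space network from Proposition \ref{Prop:KLerrorNN}, which was produced for the KL bound, also delivers the $C^1$ control needed for $R(g)$; but this is exactly what the approximation theorem used there gives (it is a $C^1$ approximation, not merely $C^0$), so the same $g$ handles both terms. Summing the two bounds and renaming constants gives $J(g) \le C_{d,k,L_1,L_2}\eps^2$, with architecture parameters $L,W,S,B,r$ inherited directly from Proposition \ref{Prop:KLerrorNN}.
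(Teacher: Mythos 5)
Your proposal is correct and follows essentially the same route as the paper: the paper's proof simply combines Proposition \ref{Prop:KLerrorNN} for the KL term with the $R(g)$ estimate already carried out in the proof of Theorem \ref{Thm:JBoundWasserstein}. Your additional verification that the single ansatz-space network supplying the $C^1$ approximation serves both terms (with trajectories confined to $[0,1]^d$ via the $\chi_d$ factor) is exactly the detail the paper leaves implicit.
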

\begin{proof}
The proof follows 
by the same arguments as the proof of Theorem \ref{Thm:JBoundWasserstein}. The error in the KL-divergence part of the objective follows from Proposition \ref{Prop:KLerrorNN} and the error in minimal-energy regularization has already been 
treated in Theorem~\ref{Thm:JBoundWasserstein}.    
\end{proof}

\begin{remark}
We comment here that the estimates we obtained above are $L^\infty$ in nature, thus we are able to derive distribution error estimates in other metrics as well, such as Hellinger, chi-square and total variation.  
\end{remark}

\section{Discussion and future work}
Our work is a crucial first step towards establishing a theoretical framework for sampling and distribution learning through ODE flow maps. In particular, the approximation results in this work can be viewed as quantifying the \textit{bias} term in the classical context of statistical learning theory. In our parallel work \cite{StatisticalNODE}, we explore the \textit{variance} term by analyzing the statistical complexity of the function class represented by ODE flow maps whose velocity fields come from a bounded neural network class. 

While our work establishes a theoretical framework for analyzing ODE-based models, several important questions still remain open. First, we only consider distributions supported on bounded domains because the Lipschitz constant of straight-line ansatz considered in this work and \cite{StatisticalNODE} can be uncontrollable when the distributions are not lower bounded. It will be interesting to see how our theories could be extended to the case of unbounded domains. Second, the approximation errors obtained in this work depend on $d+1$ and $k-1$, because we considered the velocity field as a generic function on $\mathbb{R}^{d+1}$ and the approximation error metric considered is in $C^1$ for distributional stability results. This approximation error leads to nonparametric convergence rate of $n^{-\frac{2(k-1)}{d+1 + 2(k-1)}}$ in \cite{StatisticalNODE}. We note this statistical rate obtained is suboptimal, compared to the minimax optimal rate of learning $k$-smooth densities on $d$-dimensional domains (which is $n^{-\frac{2k}{d + 2k}}$). Therefore, an important question to ask is whether neural-ODE based models can achieve the same minimax optimal statistical rates, as classical density estimators such as wavelets or kernel-based methods. We leave these open questions to future studies.

\section*{Acknowledgments and disclosure of funding}
ZR and YM acknowledge support from the US Air Force Office of Scientific Research (AFOSR) MURI, Analysis and Synthesis of Rare Events, award number FA9550-20-1-0397, and from the US Department of Energy (DOE), Office of Advanced Scientific Computing Research, under grants DE-SC0021226 and DE-SC0023187. ZR also acknowledges support from a US National Science Foundation Graduate Research Fellowship.

\appendix

\section{Comments on training}\label{app:training}
\subsection{Training algorithm}

Here we present pseudocode for a notional training algorithm, i.e., an algorithm to learn the velocity field $f$ of a neural ODE in the setting of problem P1 (see Section \ref{sec:setup}), when the information divergence used is the KL-divergence. 

Recall the training objective we consider consists of two parts: one is the KL divergence from the source distribution to the pushforward of the target by the time-one flow map of the ODE; the other is the integration of \eqref{eq:straightlinereg} along the trajectories of
the particles.

Given a distribution $\pi$ for the initial condition $x$, 
the KL divergence from the source distribution to $\pi_{f,t}$ at time $t=1$ can be written as
\begin{align*}
    \dkl( \pi_{f,1} \, || \, \rho ) = \dkl ( X_f(\cdot, 1)_\sharp \pi \, || \, \rho )
    & = \dkl ( \pi \, || \, X_f^{-1}(\cdot, 1)_\sharp \rho ) \\
    & = \mathbb{E}_{x \sim \pi} \left [ \log \pi(x) - \log \rho ( X_f(x,1) ) - \log \det \nabla_x X_f(x,1) \right ].
\end{align*}
where $X_f^{-1}(\cdot, 1)$ denotes the inverse of $x \mapsto X_f(x,1)$. The log determinant term above can be computed from the
instantaneous change of variables formula (e.g., \citet{NeuralODE}): $\log \det \nabla_x X(x,1) = - \int_0^1 \tr \left ( \nabla_X f(X(x,t),t) \right ) dt$. 

The combined training objective becomes:
\begin{equation}\label{eq:OBJ}
  \begin{aligned}
    J(f) &= 
    \mathbb{E}_{x\sim\pi} \left [ \log\pi(x) 
    - \log\rho(X(x,1)) +  \int_0^1 \tr \left( \nabla_X f(X(x,t),t) \right) dt + \lambda R(x,1) \right ]\\
  \end{aligned}
\end{equation}
where $\lambda$ controls the impact of penalization.

To evaluate the optimization objective, we need the ability to compute
the matrix $\nabla_X f(X(x,t),t)$, as its trace appears in the change-of-variables term and the entire matrix
appears in the regularization term. Also, we need the ability to compute
$\partial_t f(X(x,t),t)$. We can assemble these two terms
into a full Jacobian matrix, which we denote by
$\nabla_{X,t}f(X(x,t),t)$; in practice, this is the Jacobian of a neural network with respect to all of its inputs. With the
discretize-then-optimize approach of \cite{OTFlow}, we can compute
this matrix exactly via automatic differentiation. For details, see Appendix \ref{app:trainingJacobian}.

\begin{algorithm}
  \caption{Neural ODE training, problem P1}
  \label{algo1}
  \begin{algorithmic}[1]
    \STATE \textbf{Input}: sample $\mathcal{X} = \{ x_i\}_{i=1}^N$ from target
    measure $\pi$, parameterized neural network $f(x,t ; \theta)$,
    regularization parameter $\lambda$, source measure $\rho$.
    \STATE\textbf{Initialize} $\theta$, \WHILE{$\theta$ not converged}
    \STATE{Sample minibatch $\{ x_j\}$ of size $m$ from $\mathcal{X}$}
    \STATE{Set $ x_j(0) = x_j$, $l_j(0) = r_j(0)=0$ } 
    \STATE{Solve the
      following ODE system up to time $t=1$ 
      \begin{align*}
        \frac{dx_j}{dt} &= f(x_j,t;\theta)\\
        \frac{dl_j}{dt} &= -\tr(\nabla_xf(x_j,t;\theta))\\
        \frac{dr_j}{dt} &= |\nabla_xf(x_j,t;\theta)f(x_j,t;\theta) +
        \partial_tf(x_j,t)|^2
      \end{align*}
      } 
      \STATE{Compute the loss
      $L(\theta) =
      \frac{1}{m}\sum_{j=1}^m-\log\rho(x_j(1))-l_j(1)+\lambda r_j(1)$}
    \STATE{Use automatic differentiation to backpropagate and
      update $\theta$} \ENDWHILE
  \end{algorithmic}
\end{algorithm}




In practice, we only have access to  finite samples
from the target measure, so we replace the population risk in
objective \eqref{eq:OBJ} with an empirical risk based on this
sample. Moreover, since $\log \pi(x)$ is independent of the velocity field $f$, it can be ignored in the optimization procedure. Hence, we arrive at the following \textit{empirical risk minimization} problem:
\begin{equation}\label{ERM}
  \begin{aligned}
    J_\text{ERM}(f) 
    &= \frac{1}{N}\sum_{i=1}^N\left( -\log\rho(X(x_i,1)) +
      \int_0^1 \tr \left ( \nabla_X f(X(x_i,t),t) \right )dt + \lambda R(x_i,1)\right) \, .
  \end{aligned}\tag{ERM}
\end{equation}
Putting everything together, we have Algorithm~\ref{algo1} in Appendix~\ref{app:training}.

\subsection{Exact computation of the Jacobian}\label{app:trainingJacobian}
Training neural ODEs consists of minimizing the (regularized) loss over the network weights subject to the ODE constraint. The adjoint-based methods in \citet{NeuralODE}, \citet{ffjord}, and \citet{HowToTrain} can be viewed as an optimize-then-discretize approach: another continuous-time ODE (the \textit{adjoint} equation) provides exact gradients with respect to network weights. Both the forward and adjoint equations are then discretized, checkpointing is typically employed to reduce memory requirements, and some care is required to ensure consistency of gradients.
%
%
Alternatively, a discretize-then-optimize approach is proposed in \citet{DiscretizeOptimize} and \citet{OTFlow}, where one first discretizes the forward dynamics and computes gradients backwards in time via automatic differentiation.
%
%
Since the training objective proposed in our work involves the entire Jacobian matrix of the velocity field, it is natural to use the discretize-then-optimize approach, which allows exact computation of the Jacobian. As in \cite{OTFlow}, the velocity field can be implemented as a ResNet and we can compute the Jacobian recursively. 

Let $s = (x,t)$ be the new variable formed by appending the time variable to the space variable. We then have the following recursive relation in a $M$-layer ResNet, where the $\{u_i\}$ are the outputs from intermediate layers:
\begin{align*}
&u_0 = \sigma(K_0s + b_0)\\
&u_1 = u_0 + h\sigma(K_1u_0 + b_1)\\
& \vdots \\
& u_M = u_{M-1} + h\sigma(K_Mu_{M-1} + b_M)\\
\end{align*}
Taking the gradient with respect to variable $s$, we have $\nabla_s u_i^T = \nabla_s u_{i-1} + h\sigma'(K_iu_{i-1} + b_i)K^T_i\nabla_s u_{i-1}$. 
Therefore, we have the following update rule for the Jacobian: $$J \leftarrow J+h\sigma'(K_iu_{i-1} + b_i)K_i^TJ.$$	
The network parameters $K_i$ and $b_i$ are to be learned. Since we use a discretized version of the velocity field in the implementation, these parameters can be updated through automatic differentiation.

\section{Knothe--Rosenblatt construction of triangular transport maps}\label{app:KRMap}	
Given probability measures $\rho$ and $\pi$, the Knothe--Rosenblatt transport is, under appropriate conditions, the unique triangular monotone transport $T$ such that $T_\sharp\pi = \rho$. In this section, we describe the explicit Knothe--Rosenblatt construction of triangular transport maps, as presented in \cite{OTAppliedMathematician}. Let $d\in\mathbb{N}$ be the dimension. For simplicity of presentation, we assume that $\pi$ and $\rho$ are supported on the hypercube $[0,1]^d$. Let $\mu$ be a base measure (for example the Lebesgue measure) and assume that $\frac{d\pi}{d\mu} = \pi(x)\in C^0([0,1]^d, \mathbb{R}^+)$ and $\frac{d\rho}{d\mu} = \rho(x)\in C^0([0,1]^d, \mathbb{R}^+)$ are the corresponding densities. Assume also that the densities $\rho(x)$ and $\pi(x)$ are uniformly bounded from below by a positive constant. 

For a continuous density function $f \in \{\rho, \pi\}$, we define the following auxiliary functions for $x\in[0,1]^k$, $k\leq d$:
\begin{equation}
\begin{aligned}
\hat{f}_k(x) &= \int_{[0,1]^{d-k}}f(x, t_{k+1},\dots ,t_d)d\mu((t_j)_{j=k+1}^d)\\
f_k(x) &= \frac{\hat{f}_k(x)}{\hat{f}_{k-1}(x_{[k-1]})}.
\end{aligned}
\end{equation}
Hence $f_k(x_{[k-1]}, \cdot)$ is the marginal density of the variable $x_k$ conditioned on $x_{[k-1]} = (x_1,\dots ,x_{k-1})\in[0,1]^{k-1}$.

Then, we define the corresponding CDFs:
\begin{equation}\label{eq:Fpik}
\begin{aligned}
F_{\pi,k}(x_{[k-1]}, x_k) &= \int_{0}^{x_k}\pi_k(x_{[k-1]}, t_{k})d\mu(t_k)\\
F_{\rho,k}(x_{[k-1]}, x_k) &= \int_{0}^{x_k}\rho_k(x_{[k-1]}, t_{k})d\mu(t_k),
\end{aligned}
\end{equation}
which are well-defined for $x\in[0,1]^k$ and $k\in\{1,\dots ,d\}$. Note that these are interpreted as functions of the last variable $x_k$ with $x_{[k-1]}$ fixed. In particular, we let $F_{\rho,k}(x_{[k-1]}, \cdot)^{-1}$ be the inverse of the map $x_k\rightarrow F_{\rho,k}(x_{[k-1]}, x_k)$

For $x \in [0,1]^d$, the Knothe--Rosenblatt map is constructed recursively in the following way. First, define
$$T_1(x_1) = F_{\rho,1}^{-1}\circ F_{\pi,1}(x_1),$$
and for $k > 1$, define
\begin{equation}\label{eq:Tk}
  T_k(x_{[k-1]}, \cdot) = F_{\rho,k}(T_1(x_1),\dots ,T_{k-1}(x_{[k-1]}), \cdot)^{-1}\circ F_{\pi,k}(x_{[k-1]}, \cdot).
\end{equation}
Then the map $$T(x_1,\dots,x_d) =  \left ( T_1(x_1), T_2(x_{[2]}),\dots ,T_d(x_{[d]}) \right )$$
is the triangular Knothe--Rosenblatt transport $T:[0,1]^d\rightarrow[0,1]^d$, for which we have the following theorem:
\begin{theorem}\label{thm:TraingularPushForward}
The triangular Knothe--Rosenblatt map satisfies $T_\sharp\pi = \rho$ and  $\det\nabla T(x)\rho(T(x)) = \pi(x)$, $\forall x\in[0,1]^d$.
\end{theorem}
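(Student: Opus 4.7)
The plan is to prove the Jacobian identity first and then deduce the pushforward relation as a consequence via the change-of-variables formula. The argument rests on three observations: triangularity reduces the Jacobian to a product of diagonal entries; implicit differentiation of the defining equation expresses each diagonal entry as a ratio of conditional densities; and the identity $\pi_k = \hat\pi_k/\hat\pi_{k-1}$ (and likewise for $\rho$) makes the resulting product telescope.

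First I would verify that $T:[0,1]^d\to [0,1]^d$ is a well-defined $C^1$ bijection. Under the positivity and continuity assumptions on $\pi$ and $\rho$, each conditional CDF $x_k\mapsto F_{\pi,k}(x_{[k-1]},x_k)$ is strictly increasing and bijective from $[0,1]$ onto $[0,1]$, with $C^1$ inverse, and similarly for $F_{\rho,k}$. Proceeding inductively on $k$, the partial map $x_{[k-1]}\mapsto T_{[k-1]}(x_{[k-1]})\in [0,1]^{k-1}$ is a $C^1$ bijection by the inductive hypothesis, so the composition in \eqref{eq:Tk} defines $x_k\mapsto T_k(x_{[k-1]},x_k)$ as a strictly increasing $C^1$ bijection of $[0,1]$. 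In particular $T$ is triangular with $\partial_{x_k}T_k>0$.

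Next I would differentiate the defining identity
\begin{equation*}
  F_{\rho,k}\bigl(T_1(x_1),\dots,T_{k-1}(x_{[k-1]}),\, T_k(x_{[k]})\bigr)
  = F_{\pi,k}(x_{[k-1]}, x_k)
\end{equation*}
with respect to $x_k$. Since $T_{[k-1]}$ is independent of $x_k$, the chain rule yields
$\rho_k(T_{[k]}(x_{[k]}))\,\partial_{x_k}T_k(x_{[k]}) = \pi_k(x_{[k]})$,
hence $\partial_{x_k}T_k = \pi_k(x_{[k]})/\rho_k(T_{[k]}(x_{[k]}))$. Using triangularity,
\begin{equation*}
  \det\nabla T(x) = \prod_{k=1}^d \partial_{x_k}T_k(x_{[k]})
  = \prod_{k=1}^d \frac{\pi_k(x_{[k]})}{\rho_k(T_{[k]}(x_{[k]}))}.
\end{equation*}
Substituting $\pi_k=\hat\pi_k/\hat\pi_{k-1}$ and $\rho_k=\hat\rho_k/\hat\rho_{k-1}$, with the conventions $\hat\pi_0=\hat\rho_0=1$ and $\hat\pi_d=\pi$, $\hat\rho_d=\rho$, the two products telescope to $\pi(x)$ and $\rho(T(x))$ respectively, giving the Jacobian identity $\det\nabla T(x)\,\rho(T(x)) = \pi(x)$ on $[0,1]^d$.

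Finally, the pushforward claim follows from the usual change-of-variables formula: for any bounded Borel $\phi:[0,1]^d\to\R$,
\begin{equation*}
  \int_{[0,1]^d}\phi(y)\,\rho(y)\,dy
  = \int_{[0,1]^d}\phi(T(x))\,\rho(T(x))\,\det\nabla T(x)\,dx
  = \int_{[0,1]^d}\phi(T(x))\,\pi(x)\,dx,
\end{equation*}
where the first equality uses that $T$ is a $C^1$ diffeomorphism of $[0,1]^d$ with positive Jacobian, and the second uses the identity just established. This is exactly $T_\sharp\pi=\rho$. The main obstacle is purely bookkeeping: keeping track of which arguments the conditional densities are evaluated at (original $x_{[k]}$ versus transported $T_{[k]}(x_{[k]})$), and verifying the telescoping carefully so that the two product structures cleanly separate into $\pi(x)$ and $\rho(T(x))$.
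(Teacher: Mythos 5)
Your proof is correct, and it is the standard argument: implicit differentiation of the defining relation $F_{\rho,k}(T_{[k-1]},T_k)=F_{\pi,k}$ to get $\partial_{x_k}T_k=\pi_k(x_{[k]})/\rho_k(T_{[k]}(x_{[k]}))$, triangularity to reduce $\det\nabla T$ to the product of diagonal entries, telescoping of $\pi_k=\hat\pi_k/\hat\pi_{k-1}$ and $\rho_k=\hat\rho_k/\hat\rho_{k-1}$ (using that the first $k-1$ components of $T_{[k]}$ are $T_{[k-1]}$, so the $\rho$-product telescopes to $\rho(T(x))$), and then change of variables to read off $T_\sharp\pi=\rho$. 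The paper itself gives no proof of this theorem: it is quoted from the literature (\cite{OTAppliedMathematician}, with relaxed-regularity versions in \cite{TrangularTransportofMeasure}), so there is no in-paper argument to compare against; the closest analogue is the proof of Lemma~\ref{lemma:cT}, where exactly your diagonal-entry formula $\partial_{x_i}T_i=\pi_i/\rho_i(T_1,\dots,T_{i-1},G_{\rho,i}(\cdots))$ is derived by the same implicit differentiation, so your route is consistent with how the paper uses the construction. One caveat worth stating explicitly: under the appendix's standing hypotheses ($\pi,\rho$ merely continuous and bounded below), each $T_k$ is $C^1$ in $x_k$, but joint $C^1$ regularity of $T$ in all variables --- which you invoke both for $\det\nabla T$ and for the change-of-variables formula --- requires differentiability of the conditional CDFs in $x_{[k-1]}$ as well, i.e.\ smoothness of the densities in those variables. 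This holds under the paper's Assumption~\ref{ass:dens2} (with $k\ge 2$), which is the setting in which the theorem is actually used, so you should either assume that regularity or note that the general continuous case needs the measure-theoretic argument of the cited references rather than the classical change-of-variables formula.
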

We comment that regularity assumptions for Theorem \ref{thm:TraingularPushForward} can be relaxed. For a more detailed discussion, see \cite{TrangularTransportofMeasure}. 

\section{
  Auxiliary results
}\label{app:AuxResults}
In this appendix, we collect statements and proofs 
that are 
required for the proofs of the main theorems. 

First, we present two technical results about domains. 
\begin{lemma}\label{lemma:convexLip}
(uniform-cone
characterization of convex domains) 
  Let $\Omega\subset\mathbb{R}^d$ be a bounded, convex, and open
  domain. Then $\Omega$ is a Lipschitz domain.
\end{lemma}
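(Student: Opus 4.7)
The plan is to verify the two bullet points in the definition of a Lipschitz domain by constructing, at each boundary point, a local coordinate system in which $\partial\Omega$ is the graph of a convex and therefore (with uniform control obtained via an interior ball) uniformly Lipschitz function. The whole argument rests on two classical ingredients: the supporting hyperplane theorem, which is available because $\Omega$ is convex, and the ``uniform cone condition,'' which is available because $\Omega$ contains a fixed interior ball.

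First I would fix a ``viewing point.'' Since $\Omega$ is open and nonempty, there exist $p \in \Omega$ and $r>0$ with $\overline{B_{2r}(p)} \subset \Omega$. Since $\Omega$ is bounded, $D := \mathrm{diam}(\Omega)$ is finite. For every $x_0 \in \partial\Omega$, convexity of $\Omega$ together with Hahn--Banach yields a unit vector $\nu_{x_0}$ (an outward normal to a supporting hyperplane at $x_0$) such that $(y-x_0)\cdot \nu_{x_0} \le 0$ for every $y\in \Omega$. Since $p \in \Omega$, one also has $(p-x_0)\cdot\nu_{x_0} \le -2r \cdot 0$ style estimate; more importantly, for any $z$ with $|z-p|\le 2r$ one has $(z-x_0)\cdot\nu_{x_0}\le 0$, so the entire ball $B_{2r}(p)$ lies in the closed half-space $\{y:(y-x_0)\cdot\nu_{x_0}\le 0\}$, and in fact in $\Omega$.

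Next, for each $x_0$ I would pick orthonormal coordinates $\zeta_{x_0} = (\zeta_1,\ldots,\zeta_{d-1},\zeta_d)$ centred at $x_0$ so that $\zeta_d$ is the direction of $\nu_{x_0}$. Convexity of $\Omega$ together with the supporting half-space property shows that there is a convex function $f_{x_0}$ defined on an open subset of $\R^{d-1}$ (containing the origin) such that, in a neighbourhood $U_{x_0}$ of $x_0$, the set $\Omega\cap U_{x_0}$ coincides with $\{\zeta_d < f_{x_0}(\zeta_1,\ldots,\zeta_{d-1})\}$, and $\partial\Omega\cap U_{x_0}$ with its graph. This matches the second bullet of the definition. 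The crucial point is that the \emph{uniform} interior cone $\mathrm{Cone}(x_0,p,r) \subset \overline{\Omega}$ (the convex hull of $\{x_0\}\cup B_{2r}(p)$, which lies in $\overline{\Omega}$ by convexity) gives a one-sided slope bound: the boundary at $x_0$ must lie on one side of a cone of aperture $\arctan(r/D)$, so the subdifferential of $f_{x_0}$ at the origin is bounded by a constant $M = M(r,D)$ depending only on $r$ and $D$, not on $x_0$. By convexity of $f_{x_0}$ this bound extends to a neighbourhood of the origin of a size $\delta$ that can also be chosen independently of $x_0$.

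Finally, the first bullet of the definition follows from a compactness argument. The family $\{U_{x_0}\}_{x_0\in\partial\Omega}$ is an open cover of the compact set $\partial\Omega$; extract a finite subcover $\{U_j\}_{j=1}^J$ and let $\delta>0$ be a Lebesgue number for this cover, shrunk if necessary so that $\delta$ is also smaller than the uniform graph-radius above and smaller than $\min_j \mathrm{dist}(x_0^{(j)},\partial U_j)/2$. Then any two points $x_1,x_2 \in \Omega$ with $|x_1-x_2|<\delta$ and both within $\delta$ of $\partial\Omega$ lie in a common $U_j$ with $\mathrm{dist}(x_i,\partial U_j)>\delta$. The main obstacle I expect is keeping the dependence on $x_0$ uniform in all geometric quantities simultaneously: the size of $U_{x_0}$, the Lipschitz constant of $f_{x_0}$, and the cone aperture all need to be bounded in terms of $r$ and $D$ only, and justifying these uniformities rigorously (especially the fact that $f_{x_0}$ is defined on a ball of fixed radius in $H_{x_0}$) requires a careful quantitative version of the supporting-hyperplane argument combined with the interior-cone inclusion $\mathrm{Cone}(x_0,p,r)\subset\overline\Omega$.
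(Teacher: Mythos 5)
Your overall strategy---an interior ball, the cone over that ball with apex at a boundary point (contained in $\overline\Omega$ by convexity) to get a uniform slope bound, and compactness of $\partial\Omega$ for the finite cover---is the same mechanism the paper uses. The gap is in your choice of chart direction. You take $\zeta_d$ along an \emph{arbitrary} supporting-hyperplane normal $\nu_{x_0}$ produced by Hahn--Banach and then assert that, near $x_0$, $\Omega$ coincides with a subgraph $\{\zeta_d < f_{x_0}(\zeta')\}$ with $f_{x_0}$ defined on a full neighbourhood of the origin. This can fail outright at non-smooth boundary points, because at such points the normal cone is large and Hahn--Banach may hand you an extreme direction of it. Concretely, let $\Omega=(0,1)^2$, $x_0=(1,1)$, and $\nu_{x_0}=e_1$ (a legitimate supporting normal at the corner). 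In the corresponding coordinates the nearby boundary contains the segment $\{y=1,\ x<1\}$, which is \emph{parallel} to the $\zeta_d$-axis; no function $f_{x_0}$ on a two-sided neighbourhood of the origin can represent $\Omega\cap U_{x_0}$ as a subgraph with a finite Lipschitz constant, and $\partial\Omega\cap U_{x_0}$ is not the graph of any function of $\zeta'$. Your interior-cone inclusion does not exclude this choice: it only forces $(x_0-p)\cdot\nu_{x_0}\ge 2r$, i.e.\ the normal lies within angle $\arccos(2r/D)$ of the radial direction, which still permits $\nu_{x_0}=e_1$ in the example. So the step ``$f_{x_0}$ is defined on a ball of fixed radius and has subdifferential bounded by $M(r,D)$'' is not merely in need of more care, as you anticipate at the end---for this choice of direction it is false, and the one-sided cone bound you invoke controls the slope only on the side toward which the cone projects. (A cosmetic point: with $\zeta_d$ pointing outward, the local graph of a convex body is concave, not convex.)

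The fix is to change the chart direction rather than to sharpen the same estimate: take $\zeta_d$ along the radial direction $(x_0-p)/|x_0-p|$ (equivalently, work in cylinders whose axes point from the interior centre toward small caps of an enclosing sphere, which is what the paper does after placing $B(0,r)\subset\Omega\subset B(0,R)$). For that direction the cone with apex $x_0$ over $B(0,r)$ lies in $\overline\Omega$ and is symmetric about the axis, with half-aperture $\alpha/2$ satisfying $\sin(\alpha/2)\ge r/R$; this simultaneously guarantees that vertical lines near $x_0$ meet $\overline\Omega$ in segments whose upper endpoints define $f_{x_0}$ on a cross-section of uniform size, and that $f_{x_0}$ is Lipschitz with a constant depending only on $r$ and $R$ (your $r$ and $D$). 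With that modification, your compactness/Lebesgue-number argument for the first bullet of the definition goes through as you describe.
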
	
\begin{proof}[Proof of Lemma \ref{lemma:convexLip}]
  Without loss of generality, we may assume that $0\in\Omega$. Since
  $\Omega$ is bounded and open, there exist $r, R > 0$ such that
  $B(0,r)\subset \Omega\subset B(0,R)$, where $B(0,r)$ and $B(0,R)$
  denote balls of radius $r$ and $R$ respectively.

  Then, we can cover the surface of the ball of radius $R$ by overlapping
  $d-1$ dimensional balls of radius $\epsilon$ such that the boundary
  of each such ball, $B_{d-1}(0, \epsilon)$, is completely covered by
  the adjacent balls. If $\vec{n}$ denotes the unit vector emanating
  from the origin in the direction of the center of such a ball, then
  $U = \setl{t\vec{n} + y}{t\geq 0, y\in B_{d-1}(0, \epsilon)}$ is the
  cylinder of radius $\epsilon$ whose intersection with $B(0,R)$ is
  the boundary of this ball.

  Since the surface of $B(0, R)$ can be covered by finitely many such
  $d-1$ dimensional balls, we can find a finite collection of such
  cylinders $\{U_j\}_{j=1}^J$ so that their union cover $\Omega$. From
  this construction of $\{U_j\}_{j=1}^J$, the first property in the
  definition of Lipschitz domain is clearly satisfied.

  To verify the second property, note that for each $j$, the
  coordinate system is simply the map that transforms the cylinder
  $U_j$ to align with the direction of $e_d$, where $e_d$ is the last
  vector of the standard basis of $\mathbb{R}^d$. For any
  $x \in \partial\Omega\cap U_j$, the cone defined by the convex
  closure of $\{x\} \cup B(0,r)$ is contained in the closure of
  $\Omega$ and the head angle $\bsalpha$ of the cone satisfies
  $\sin(\frac{\bsalpha}{2}) \geq \frac{r}{R}$, and thus the boundary
  is a Lipschitz function.
\end{proof}	

The image of a Lipschitz domain under a sufficiently regular map
  remains a Lipschitz domain:
\begin{theorem}[{\cite[Theorem ~4.1]{LipTransformation}}]\label{thm:LipTransformation}
  Assume $\Omega \subset\mathbb{R}^d$ is a bounded Lipschitz
    domain and $\mathcal{O}$ is an open neighborhood of $\bar{\Omega}$ and
  $f:\mathcal{O}\rightarrow \mathbb{R}^n$ is a $C^1$-diffeomorphism
  onto its image. Then, $\tilde{\Omega} = f(\Omega)$ is also a
  Lipschitz domain.
\end{theorem}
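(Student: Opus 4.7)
\textbf{Proof proposal for Theorem~\ref{thm:LipTransformation}.}

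The plan is to transfer the local graph representation of $\partial\Omega$ to $\partial\tilde\Omega=f(\partial\Omega)$ via the diffeomorphism $f$, using the implicit function theorem to rectify the image of each local chart and the bi-Lipschitz nature of $f$ on compacta to control Lipschitz constants. I will assume $n=d$ (so that ``$C^1$-diffeomorphism onto its image'' makes sense); this matches the use of the statement in the proof of Theorem~\ref{thm:LipDomain}.

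First, I would set up the compactness machinery. Since $\bar\Omega$ is compact and contained in the open set $\mathcal{O}$, one can choose a compact neighborhood $K\subset\mathcal{O}$ of $\bar\Omega$. On $K$ the map $f$ is $C^1$ with $\det Df\neq 0$ everywhere (as $f$ is a diffeomorphism onto its image), so both
\begin{equation*}
  M_1:=\sup_{x\in K}\|Df(x)\|,\qquad M_2:=\sup_{x\in K}\|Df(x)^{-1}\|
\end{equation*}
are finite, and on $K$ (resp.\ $f(K)$) both $f$ and $f^{-1}$ are Lipschitz with constants $M_1$ and $M_2$ respectively. In particular $f:\bar\Omega\to f(\bar\Omega)$ is a bi-Lipschitz homeomorphism that maps $\partial\Omega$ onto $\partial\tilde\Omega$; this already gives boundedness of $\tilde\Omega$ and the correct topology of its boundary.

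Second, I would work locally on the boundary. Let $\{U_j\}_{j=1}^J$ be the Lipschitz cover of $\partial\Omega$ with associated coordinate systems $\{\zeta_{j,1},\dots,\zeta_{j,d}\}$ and graph functions $\varphi_j$ of Lipschitz constant $M$, as in the definition of Lipschitz domain. Fix a boundary point $x_0\in\partial\Omega\cap U_j$ and set $y_0:=f(x_0)$. In the $\zeta_j$-coordinates the ``upward'' normal direction is $e_d$; let $v:=Df(x_0)\,e_d\in\R^d$, which is nonzero because $Df(x_0)$ is invertible. I would then choose a new orthonormal coordinate system $\{\eta_{j,1},\dots,\eta_{j,d}\}$ centered at $y_0$ with $\eta_{j,d}$-axis aligned with $v$, and let $\pi_j:\R^d\to\R^{d-1}$ and $\pi_j^\perp:\R^d\to\R$ be the corresponding horizontal and vertical projections. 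Writing the boundary near $x_0$ as $\{\zeta_{j,d}=\varphi_j(\zeta_{j,[d-1]})\}$, the image under $f$ is a $(d-1)$-dimensional topological manifold passing through $y_0$, and at $y_0$ its tangent space has $\eta_{j,d}$-component zero by construction.

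Third, I would apply the implicit function theorem to the map
\begin{equation*}
  F_j(\zeta_{[d-1]},s) := \pi_j\!\bigl(f(\zeta_{[d-1]},\varphi_j(\zeta_{[d-1]})+s) - y_0\bigr)
\end{equation*}
near $(\zeta_{j,[d-1]}(x_0),0)$, or more cleanly, apply it directly to $\eta_{j,d}\circ f$ restricted to the graph of $\varphi_j$; because the tangent plane of $f(\partial\Omega)$ at $y_0$ is horizontal in $\eta$-coordinates, the relevant partial derivative is nondegenerate. This produces, on some ball $\tilde U\ni y_0$, a continuous function $\psi_j$ such that $f(\partial\Omega)\cap\tilde U=\{\eta_{j,d}=\psi_j(\eta_{j,[d-1]})\}$, and the sign of the normal direction is preserved (after possibly flipping $\eta_{j,d}$) by connectedness, so $\tilde\Omega\cap\tilde U=\{\eta_{j,d}\le\psi_j(\eta_{j,[d-1]})\}$. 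Lipschitz continuity of $\psi_j$ follows because $\psi_j$ is obtained by composing the Lipschitz graph $\zeta\mapsto(\zeta,\varphi_j(\zeta))$, the bi-Lipschitz map $f$, and the bi-Lipschitz change of orthonormal frames; explicitly, $|\psi_j(\eta)-\psi_j(\eta')|$ can be estimated by the $(d-1,d)$-block structure of $Df^{-1}$ at $y_0$ and the Lipschitz constant $M$, giving a new Lipschitz constant depending only on $M$, $M_1$, $M_2$.

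Fourth, I would finish by a compactness/finite-cover argument: cover $\partial\tilde\Omega$ by finitely many such $\tilde U$'s, shrink them to obtain the uniform $\delta$ required by the first bullet of the definition of Lipschitz domain (using uniform continuity of $f$ and $f^{-1}$ on compacta), and take the maximum of the local Lipschitz constants as the global $M$. The main obstacle is the third step: arranging the new coordinate system so that the image of the graph is again a graph (not merely a Lipschitz manifold), and tracking that the Lipschitz constant of $\psi_j$ depends only on $M,M_1,M_2$ uniformly in $j$ and in the base point $x_0\in\partial\Omega\cap U_j$. Once the choice of $\eta$-frame as a rotation sending $v=Df(x_0)e_d$ to $e_d$ is made and one verifies that this frame varies continuously (hence uniformly on compacta) with $x_0$, the rest is a bookkeeping exercise with the implicit function theorem and the bi-Lipschitz estimates from Step~1.
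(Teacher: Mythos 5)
First, note that the paper does not prove this statement at all: it is imported verbatim as \cite[Theorem~4.1]{LipTransformation} in Appendix~\ref{app:AuxResults}, so there is no in-paper argument to compare yours against; I can only assess your outline on its own terms. Your overall strategy (local graph charts for $\partial\Omega$, push them forward by $f$, rotate coordinates, then use compactness for uniform constants) is the standard one, and your Step~1 (bi-Lipschitz bounds $M_1,M_2$ on a compact neighborhood, $f(\partial\Omega)=\partial\tilde\Omega$) is fine.

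There is, however, a genuine gap in your Step~3, which is the heart of the matter. The graph function $\varphi_j$ is only Lipschitz, so the map $F_j(\zeta_{[d-1]},s)=\pi_j\bigl(f(\zeta_{[d-1]},\varphi_j(\zeta_{[d-1]})+s)-y_0\bigr)$ is not $C^1$ and the classical implicit function theorem does not apply to it; likewise, the assertion that ``at $y_0$ the tangent space of $f(\partial\Omega)$ has $\eta_{j,d}$-component zero by construction'' is unfounded: a Lipschitz boundary has a tangent plane only at $\mathcal{H}^{d-1}$-a.e.\ point (Rademacher), not at every $y_0$, and even where it exists, aligning the new vertical axis with $v=Df(x_0)e_d$ does not make the image tangent plane horizontal, because $Df(x_0)$ need not map $e_d^{\perp}$ into $v^{\perp}$. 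The step that actually works, and that also makes visible why $C^1$ (and not mere bi-Lipschitz) regularity of $f$ is essential -- bi-Lipschitz images of Lipschitz domains are in general \emph{not} Lipschitz -- is a cone argument: two boundary points $x_1,x_2$ in the chart differ by a vector in the ``flat'' cone $\{|\zeta_d|\le M|\zeta_{[d-1]}|\}$; by continuity of $Df$ on the compact neighborhood, $f(x_1)-f(x_2)=Df(x_0)(x_1-x_2)+o(|x_1-x_2|)$ uniformly, and since the closed cone $Df(x_0)\bigl(\{|\zeta_d|\le M|\zeta_{[d-1]}|\}\bigr)$ meets the line $\mathbb{R}v$ only at the origin, it is contained in a flat cone about $v^{\perp}$ of aperture controlled by $M,M_1,M_2$; shrinking the neighborhood to absorb the $o(\cdot)$ term yields a uniform two-sided cone condition for $f(\partial\Omega)$ near $y_0$, from which the graph representation $\eta_{j,d}=\psi_j(\eta_{j,[d-1]})$, the Lipschitz bound on $\psi_j$, and the sign of the ``below the graph'' side all follow directly, with no implicit function theorem needed. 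With that replacement (and your compactness bookkeeping in Step~4 for uniform $\delta$ and a common Lipschitz constant), the proof can be completed; as written, Step~3 does not go through.
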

	
Next, we we shall state the following regularity result about the regularity of optimal transport map from \cite{OptimalRegularity}, which is used in the proof of Theorem \ref{thm:ExSol}.
\begin{theorem}\label{Thm:OptimalRegularity}
Fix open sets $\Omega_0, \Omega_1\subset\mathbb{R}^d$ with $\Omega_1$ convex and absolutely continuous measure $\mu, \nu$ with finite second moment and bounded, strictly positive densities $f, g$ respectively such that $\mu(\Omega_0) = \nu(\Omega_1) = 1$. Let $\phi$ be such that $\nabla\phi_\sharp\mu = \nu$. If $\Omega_0$ and $\Omega_1$ are bounded and $f, g$ bounded from below, then $\phi$ is strictly convex and of class $C^{1, \alpha}(\Omega_0)$ for some $\alpha > 0$. In addition, if $f, g\in C^{k, \alpha}$, then $\phi\in C^{k+2, \alpha}(\Omega_0)$. 
\end{theorem}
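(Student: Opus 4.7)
The plan is to derive this result by combining Brenier's theorem with Caffarelli's regularity theory for the Monge--Amp\`ere equation. First, I would invoke Brenier's theorem: under the stated assumptions (absolute continuity, finite second moments, bounded densities), the unique optimal transport map from $\mu$ to $\nu$ with quadratic cost has the form $\nabla \phi$ for a convex $\phi:\Omega_0 \to \R$, unique up to an additive constant. The pushforward condition $\nabla \phi_\sharp \mu = \nu$, combined with the change-of-variables formula, yields the Monge--Amp\`ere equation
\begin{equation*}
  \det(D^2 \phi(x)) = \frac{f(x)}{g(\nabla \phi(x))}
\end{equation*}
interpreted in the Alexandrov (equivalently, Brenier) weak sense, since at this stage $\phi$ is only known to be convex. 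The hypotheses $0 < \lambda \le f,g \le \Lambda < \infty$ then put us in the framework of Monge--Amp\`ere equations with bounded measurable right-hand side bounded away from zero.

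The most delicate step is to establish \emph{strict convexity} of $\phi$ on $\Omega_0$. This is where convexity of the target set $\Omega_1$ is essential, and is the main obstacle. Following Caffarelli's geometric argument, I would assume for contradiction that $\phi$ is not strictly convex at some interior point $x_0$, so that the contact set $\Sigma \coloneqq \{x : \phi(x) = \ell(x)\}$ for some supporting affine function $\ell$ is a convex set of dimension $\ge 1$ containing $x_0$. The key structural lemma of Caffarelli asserts that any such $\Sigma$ must have no extremal point in the interior of $\Omega_0$; thus $\Sigma$ reaches $\partial \Omega_0$. One then shows that the subdifferential $\partial \phi(\Sigma)$ --- which must lie in $\overline{\Omega_1}$ since $\nabla \phi$ pushes $\mu$ to $\nu$ --- contains an entire exposed face of $\overline{\Omega_1}$ forced by the structure of $\Sigma$, and that the Monge--Amp\`ere measure $\det D^2\phi\, dx$ concentrates too much mass on this lower-dimensional image for it to be absolutely continuous with density bounded above by $\Lambda/\lambda$. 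Convexity of $\Omega_1$ is what prevents one from ``splitting'' the target around such a face and is precisely what rules out the counterexamples which exist when $\Omega_1$ is non-convex.

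Once strict convexity is in hand, I would apply Caffarelli's interior $C^{1,\alpha}$ estimate: a strictly convex Alexandrov solution of $\det D^2\phi \in [\lambda', \Lambda']$ with $0 < \lambda' \le \Lambda' < \infty$ lies in $C^{1,\alpha}_{\mathrm{loc}}$ for some $\alpha = \alpha(d,\lambda',\Lambda') > 0$. The proof proceeds through the affine invariance of the Monge--Amp\`ere operator, John's lemma on normalizing sections $S_\ell \coloneqq \{\phi \le \ell\}$ by affine maps of bounded eccentricity, and the Alexandrov maximum principle applied on renormalized sections to obtain geometric decay of the oscillation of $\nabla \phi$. Combined with the fact that $\nabla \phi$ maps $\Omega_0$ onto a subset of the bounded $\Omega_1$, this gives $\phi \in C^{1,\alpha}(\Omega_0)$.

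For the higher regularity statement when $f,g \in C^{k,\alpha}$, I would bootstrap through the linearized Monge--Amp\`ere equation. Once $\phi \in C^{1,\alpha}$ and is strictly convex, Caffarelli's $W^{2,p}$ and interior $C^{2,\alpha}$ estimates yield $D^2 \phi \in C^{0,\alpha}$ with two-sided bounds $0 < cI \le D^2\phi \le CI$. The equation can then be rewritten as $\log \det D^2\phi = \log f(x) - \log g(\nabla \phi(x))$, which is a fully nonlinear uniformly elliptic equation with $C^{0,\alpha}$ coefficients. Differentiating once and applying Schauder theory to the resulting linear uniformly elliptic equation for $\partial_i \phi$ (whose coefficients are the cofactor matrix of $D^2\phi$, thus $C^{0,\alpha}$) gives $\phi \in C^{3,\alpha}$; iterating this bootstrap $k$ times, using that $g \circ \nabla\phi$ gains one derivative at each step, yields $\phi \in C^{k+2,\alpha}(\Omega_0)$. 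The principal obstacle throughout remains Step 2: the strict convexity argument, which is the crux of Caffarelli's original breakthrough and the only place where convexity of $\Omega_1$ is genuinely used.
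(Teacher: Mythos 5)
The paper does not prove this statement at all: it is imported verbatim from the cited reference \cite{OptimalRegularity} (Caffarelli's regularity theory for the Monge--Amp\`ere equation arising in optimal transport) and used as a black box in the proof of Theorem~\ref{thm:ExSol}. Your outline---Brenier's theorem, the Monge--Amp\`ere equation in the weak sense, Caffarelli's strict-convexity argument using convexity of $\Omega_1$, interior $C^{1,\alpha}$ estimates, and a Schauder bootstrap for $C^{k+2,\alpha}$---is a faithful sketch of exactly the argument behind that cited result, so it matches the route the paper relies on.
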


Then, we shall need the following results about composition and the inverse of a function in Section \ref{sec:regularity}.

  \begin{theorem}[Fa\'{a} di Bruno]
    Let $k\in\N$.
  Let $X$, $Y$ and $Z$ be three Banach spaces, and let
  $F\in C^k(X,Y)$ and $G\in C^k(Y,Z)$.

  Then for all $0\le n\le k$ and with
  $T_n := \setl{\bsalpha \in\N^n}{\sum_{j=1}^n j\alpha_j =
    n}$, 
  for all $x$, $h\in X$ the $n$th derivative
  $[D^n(G\circ F)](x)(h^n)\in Z$ of $G\circ F$ at $x$ evaluated at
  $h^n\in X^n$ equals
  \begin{equation*}
     \sum_{\bsalpha\in T_n}
    \frac{n!}{\bsalpha!}
    [D^{|\bsalpha|}G](F(x)) \Bigg(\smash[b]{\underbrace{\frac{[DF(x)](h)}{1!},\ldots, \frac{[DF(x)](h)}{1!}}_{\text{$\alpha_1$ times}}},\ldots,\underbrace{\frac{[D^{n}F(x)](h^n)}{n!},\ldots, \frac{[D^{n}F(x)](h^n)}{n!}}_{\text{$\alpha_n$ times}}\Bigg).
  \end{equation*}

\end{theorem}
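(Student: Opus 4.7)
The plan is to compute the $n$-th Taylor coefficient of the function $t\mapsto G(F(x+th))$ in two different ways and match them. Recall that since $G\circ F\in C^k(X,Z)$, for $h\in X$ we have the single-variable Taylor expansion
\begin{equation*}
G(F(x+th)) = \sum_{n=0}^{k}\frac{t^n}{n!}[D^n(G\circ F)](x)(h^n) + o(t^k)\qquad\text{as }t\to 0,
\end{equation*}
so that $[D^n(G\circ F)](x)(h^n)$ is $n!$ times the coefficient of $t^n$ in this expansion. The goal is to compute this coefficient directly by composing the Taylor expansions of $F$ and $G$.

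First, I would write the expansion
\begin{equation*}
v(t) \coloneqq F(x+th)-F(x) = \sum_{j=1}^{n}\frac{t^j}{j!}[D^jF(x)](h^j)+o(t^n),
\end{equation*}
and expand $G$ about $y\coloneqq F(x)$:
\begin{equation*}
G(y+v(t)) = G(y) + \sum_{m=1}^{n}\frac{1}{m!}[D^mG(y)](v(t)^m)+o(\|v(t)\|^n).
\end{equation*}
Since $\|v(t)\|=O(t)$, the remainder $o(\|v(t)\|^n)$ is $o(t^n)$ and may be dropped when tracking the coefficient of $t^n$. I would then use the multilinearity of $D^mG(y)\in\cL^m(Y;Z)$ to expand each $[D^mG(y)](v(t)^m)$: substituting the expansion of $v(t)$ yields a sum of terms indexed by tuples $(j_1,\dots,j_m)\in\{1,\dots,n\}^m$, each of the form
\begin{equation*}
\prod_{i=1}^{m}\frac{t^{j_i}}{j_i!}\cdot [D^mG(y)]\Big([D^{j_1}F(x)](h^{j_1}),\dots,[D^{j_m}F(x)](h^{j_m})\Big),
\end{equation*}
modulo $o(t^n)$ terms coming from the Taylor remainder of $F$.

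The coefficient of $t^n$ is then obtained by restricting to tuples with $j_1+\cdots+j_m=n$. Grouping tuples by the multiindex $\bsalpha\in T_n$ where $\alpha_j\coloneqq\#\{i\,:\,j_i=j\}$ (so that $|\bsalpha|=m$ and $\sum_j j\alpha_j=n$), and invoking the symmetry of $D^mG(y)\in\cL^m_{\sym}(Y;Z)$, every tuple giving rise to the same $\bsalpha$ contributes the same value; the number of such tuples is the multinomial coefficient $\frac{m!}{\bsalpha!}=\frac{m!}{\prod_j\alpha_j!}$. Collecting the $\frac{1}{m!}$ from the Taylor expansion of $G$ with this count cancels to $\frac{1}{\bsalpha!}$, and multiplying by $n!$ (from matching coefficients of $t^n/n!$) produces the claimed formula, since $\prod_{j=1}^n(j!)^{\alpha_j}$ appears in the denominator from the factors $1/j_i!$.

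The main obstacle is the combinatorial bookkeeping in Step 3, together with carefully justifying that all Peano remainders can be discarded uniformly. A subtlety is to show that $o(\|v(t)\|^n)=o(t^n)$, which follows from $\|v(t)\|\le Ct$ on a neighborhood of $0$ by $C^1$ regularity of $F$; and to check that when multilinearly expanding $[D^mG(y)](v(t)^m)$, the contributions from the remainder in the expansion of $F$ can be absorbed into $o(t^n)$, which uses continuity of $D^jF$ at $x$ and the operator-norm bound $\|D^mG(y)\|_{\cL^m(Y;Z)}<\infty$. Once these estimates are in place, the combinatorial identification via $T_n$ proceeds exactly as in the scalar case, and no further structure beyond multilinearity and symmetry of Fr\'echet derivatives is needed---justifying the remark in the text that the Banach-space proof is identical to the real-valued one.
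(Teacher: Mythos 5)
Your proposal is correct and follows essentially the same route as the paper's proof: compose the Taylor expansions of $F$ and $G$, expand multilinearly, and regroup the resulting tuples by the multiindex $\bsalpha\in T_n$ with the count $\frac{m!}{\bsalpha!}$. The only (cosmetic) difference is that you match coefficients along the line $t\mapsto x+th$, which makes the identification of the degree-$n$ terms slightly more explicit than the paper's direct comparison of ``powers of $x$'', but the underlying argument is identical.
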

\begin{proof}
  
  Without loss of generality assume
  $F(0) = 0\in Y$.
    Using Taylor's theorem (in Banach spaces) for $G$
  \begin{equation*}
    G(F(x)) =  \sum_{r=1}^k\frac{[D^rG](0)}{r!}({\underbrace{F(x),\ldots, F(x)}_\text{$r$ times}}) + o(\|F(x)\|_Y^k)\qquad\text{as }\|F(x)\|_Y\to 0.
  \end{equation*}
Taylor expanding $F(x)$ around $0\in X$ then implies
\begin{equation*}
  G(F(x)) =  \sum_{r=1}^k\frac{[D^rG](0)}{r!}(S_r(x))+ o(\|F(x)\|_Y^k)
\end{equation*}
where, using the notation $x^{\alpha_j}$ to denote $(x,\dots,x)\in X^{\alpha_j}$,
\begin{equation*}
  S_r(x)=
  \left(\sum_{\alpha_1=1}^k\frac{[D^{\alpha_1}F](0)}{\alpha_1!}(x^{\alpha_1}) + o(\|x\|_X^k),\ldots, \sum_{\alpha_r= 1}^k\frac{[D^{\alpha_r}F](0)}{n!}(x^{\alpha_r})+
  o(\|x\|_X^k)\right)\in Y^r.
\end{equation*}
Note that $F\in C^1$ and $F(0)=0\in Y$ implies
$o(\|F(x)\|^k) = o(\|x\|^k)$ as $x\to 0$.  Using multilinearity of the
differential operators we thus find
\begin{equation*}
  G(F(x)) = \sum_{r=1}^k\sum_{\setl{\bsalpha\in \N^r}{|\bsalpha|\le k}}
  \frac{[D^rG](0)}{r!}
  \left(\frac{[D^{\alpha_1}F](0)(x^{\alpha_1})}{\alpha_1!},\dots,\frac{[D^{\alpha_r}F](0)(x^{\alpha_r})}{\alpha_r!}\right) + o(\|x\|^k)
\end{equation*}
as $x\to 0$.

On the other hand, we can Taylor expand $G\circ F$. That is,
\begin{equation*}
  G(F(x)) = \sum_{r = 1}^k\frac{[D^r(G\circ F)(0)]}{r!}(x^r) + o(\|x\|^k)\qquad\text{as }x\to 0.
\end{equation*}

Comparing the powers of $x$, we get for $n\le k$
\begin{equation*}\label{eq:FdB2}
  D^n(G\circ F)(0)(x^n) = n!\sum_{r=1}^k
  \sum_{\setl{\bsalpha\in\N^r}{|\bsalpha|=n}}
  \frac{[D^rG](0)}{r!}
  \left(\frac{[D^{\alpha_1}F](0)(x^{\alpha_1})}{\alpha_1!},\dots,\frac{[D^{\alpha_r}F](0)(x^{\alpha_r})}{\alpha_r!}\right).
\end{equation*}

To show that the expression for $D^n(G\circ F)(0)(x^n)$ is equivalent to the one given by Fa\'{a} di Bruno's formula in Theorem \ref{thm:FdB}, we make the following observation: the summation $\sum_{\setl{\bsalpha\in\N^r}{|\bsalpha|=n}}$ is over the partition of a set of $n$ elements into $r$ subsets each of $\alpha_i$ elements such that $r \geq i \geq 1, \alpha_i\geq 1$. If we let set $T_n = \{\bsalpha' = (\alpha_1',\ldots,\alpha'_n) : \sum_{j=1}^n j\alpha_j' = n\}$ as in Theorem \ref{thm:FdB}, each $\alpha_j'$ can be interpreted as the number of subsets with $j$ elements and the total number of subsets is given by $|\bsalpha'|$. Another observation we make is that the summation in Theorem \ref{thm:FdB} takes ordered tuples while the summation in the above expression takes unordered tuple $(\alpha_1,...,\alpha_r)$. If $|\bsalpha'| = r$, the number of ways to arrange a tuple of $r$ elements such that $\alpha_1'$ of the elements are same (of value $1$), $\alpha_2'$ of the elements are the same (of value $2$),..., and $\alpha_n'$ of the elements are same (of value $n$), is given by $\frac{r!}{\alpha_1'!\dots\alpha_n'!} = \frac{r!}{\bsalpha'!}$. 

Therefore, by regrouping the summation, we obtain: 
\begin{align*}
&D^n(G\circ F)(0)(x^n) = n!\sum_{r=1}^k
  \sum_{\setl{\bsalpha\in\N^r}{|\bsalpha|=n}}
  \frac{[D^rG](0)}{r!}
  \left(\frac{[D^{\alpha_1}F](0)(x^{\alpha_1})}{\alpha_1!},\dots,\frac{[D^{\alpha_r}F](0)(x^{\alpha_r})}{\alpha_r!}\right)\\
&= \sum_{\bsalpha'\in T_n}\frac{n!}{r!}\frac{r!}{\bsalpha'!}[D^{|\bsalpha'|}G](0)\Bigg(\smash[b]{\underbrace{\frac{[DF](0)(x)}{1!},\ldots, \frac{[DF](0)(x)}{1!}}_{\text{$\alpha'_1$ times}}},\ldots,\underbrace{\frac{[D^{n}F](0)(x^n)}{n!},\ldots, \frac{[D^{n}F](0)(x^n)}{n!}}_{\text{$\alpha'_n$ times}}\Bigg)\\
&=\sum_{\bsalpha'\in T_n}\frac{n!}{\bsalpha'!}[D^{|\bsalpha'|}G](0)\Bigg(\smash[b]{\underbrace{\frac{[DF](0)(x)}{1!},\ldots, \frac{[DF](0)(x)}{1!}}_{\text{$\alpha'_1$ times}}},\ldots,\underbrace{\frac{[D^{n}F](0)(x^n)}{n!},\ldots, \frac{[D^{n}F](0)(x^n)}{n!}}_{\text{$\alpha'_n$ times}}\Bigg)\\ 
\end{align*}
\end{proof}

\begin{theorem}[Inverse function theorem]
  Let $k\ge 1$, let $X$, $Y$ be two Banach spaces, and let $F\in C^k(X,Y)$.
  At every $x\in X$ for which $DF(x)\in L^1(X,Y)$ is an isomorphism,
  there exists an open neighbourhood $O\subseteq Y$ of $F(x)$ and a function
  $G\in C^k(O,X)$ such that $F(G(y))=y$ for all $y\in O$.

  Moreover, for every $n\le k$ there exists a continuous function
  $C_n:\R_+^{n+1}\to \R_+$ (independent of $F$, $G$, $O$) such
  that for $y=F(x)$ with $x$ as above
  \begin{equation}\label{eq:Cn}
    \|D^n G(y)\|_{\cL^n_{\sym}(Y;X)} \le C_n(\|[DF(x)]^{-1}\|_{\cL^1_{\sym}(Y;X)}, \|DF(x)\|_{\cL^1_{\sym}(X;Y)}, \ldots, \|D^nF(x)\|_{\cL^n_{\sym}(X;Y)}).
  \end{equation}
\end{theorem}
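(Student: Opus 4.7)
My plan is to separate the proof into two logically distinct pieces: (i) the qualitative statement (existence of a local $C^k$ inverse $G$ on an open neighbourhood $O$ of $F(x)$), and (ii) the quantitative derivative bound \eqref{eq:Cn}. Part (i) is the classical Banach space inverse function theorem, while part (ii) is an explicit induction that extracts $D^n G(y)$ from Fa\`a di Bruno's formula applied to the identity $F\circ G = \mathrm{id}_O$.

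For part (i), after translating so that $x=0$, $F(0)=0$ and $A:=DF(0)$ is an isomorphism, I will apply the Banach fixed point theorem to the map $T_y(u) := u - A^{-1}(F(u) - y)$ on a sufficiently small closed ball $\overline{B_r(0)} \subseteq X$. Standard estimates using continuity of $DF$ near $0$ show that $T_y$ is a contraction with constant $\le \tfrac12$ for all $y$ in a small neighbourhood $O$ of $0\in Y$, and that $T_y$ maps $\overline{B_r(0)}$ to itself. This produces a unique fixed point $G(y)\in X$ with $F(G(y))=y$. Continuity of $G$ follows from the uniform contraction estimate. Differentiability is obtained by showing that for $y_1,y_2\in O$ and $u_i=G(y_i)$, the identity $F(u_2)-F(u_1)=y_2-y_1$ together with the expansion of $F$ gives $DG(y)=[DF(G(y))]^{-1}$; here one uses that the inversion map $\mathrm{Inv}\colon\mathrm{GL}(X,Y)\to\mathrm{GL}(Y,X)$ is $C^\infty$ (its derivative is $A\mapsto -A^{-1}(\cdot)A^{-1}$). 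Bootstrapping: since $DG=\mathrm{Inv}\circ DF\circ G$, if $G\in C^{j}$ for some $1\le j<k$ then $DG$ is $C^{j}$ (as a composition of $C^\infty$, $C^{k-1}$ and $C^{j}$ maps), hence $G\in C^{j+1}$; inducting up to $j=k-1$ gives $G\in C^k(O,X)$.

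For part (ii) I will prove the bound \eqref{eq:Cn} by induction on $n\in\{1,\dots,k\}$. For $n=1$ the identity $DF(G(y))\circ DG(y)=\mathrm{Id}_Y$ immediately gives
\begin{equation*}
\|DG(y)\|_{\cL^1_{\sym}(Y;X)} \le \|[DF(x)]^{-1}\|_{\cL^1_{\sym}(Y;X)},
\end{equation*}
so one may take $C_1(a_0,a_1):=a_0$. For $n\ge 2$, I apply Theorem~\ref{thm:FdB} (Fa\`a di Bruno) to $F\circ G=\mathrm{id}_O$: since the $n$-th derivative of the identity vanishes, this yields, after separating the term $\bsalpha=(0,\dots,0,1)$ (corresponding to $D^nG$),
\begin{equation*}
DF(G(y))\,\big(D^n G(y)(h^n)\big)
= -\sum_{\bsalpha\in\bar T_n}\frac{n!}{\bsalpha!}
\, [D^{|\bsalpha|}F](G(y))\!\left(\underbrace{\tfrac{DG(y)(h)}{1!},\dots}_{\alpha_1},\dots,\underbrace{\tfrac{D^{n-1}G(y)(h^{n-1})}{(n-1)!},\dots}_{\alpha_{n-1}}\right),
\end{equation*}
where $\bar T_n := T_n\setminus\{(0,\dots,0,1)\}$ consists of those $\bsalpha$ with all entries of index $\ge 2$ except the last, so $|\bsalpha|\ge 2$ and only derivatives $D^j G$ with $j\le n-1$ appear. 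Composing with $[DF(G(y))]^{-1}$ and using \eqref{eq:symnorm} to pass between symmetric and full multilinear norms, I obtain an estimate of the form
\begin{equation*}
\|D^n G(y)\|_{\cL^n_{\sym}(Y;X)} \le \e^n\,\|[DF(x)]^{-1}\|\cdot\!\!\sum_{\bsalpha\in\bar T_n}\!\frac{n!}{\bsalpha!}\,\|D^{|\bsalpha|}F(x)\|\prod_{j=1}^{n-1}\Big(\tfrac{\|D^jG(y)\|_{\cL^j_{\sym}}}{j!}\Big)^{\alpha_j}.
\end{equation*}
By the induction hypothesis each $\|D^jG(y)\|_{\cL^j_{\sym}}$ for $j<n$ is a continuous function of $(\|[DF(x)]^{-1}\|,\|DF(x)\|,\dots,\|D^jF(x)\|)$; substituting these bounds in and collecting terms produces a continuous function $C_n$ of $(a_0,\dots,a_n)$ bounding $\|D^nG(y)\|$, as required. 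This construction is entirely explicit and depends only on the combinatorial data $(n,T_n)$, so $C_n$ is independent of $F$, $G$, $O$.

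The main obstacle I anticipate is not conceptual but notational: carefully isolating the $D^nG$ term on the left-hand side of Fa\`a di Bruno (since it appears only in the summand $|\bsalpha|=1$, $\alpha_n=1$) and then tracking the polynomial-in-norms structure through the induction so as to produce one continuous $C_n$ per $n$. Everything else, namely the Banach fixed point argument, the smoothness bootstrap via $\mathrm{Inv}$, and the algebraic manipulation of Fa\`a di Bruno, is standard; the key transfer of Fa\`a di Bruno from $\R$-valued to Banach-space valued maps is exactly Theorem~\ref{thm:FdB}, whose proof is independent of this theorem.
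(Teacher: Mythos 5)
Your proposal is correct and follows essentially the same route as the paper: the quantitative bound \eqref{eq:Cn} is obtained exactly as in the paper's proof, namely by applying Theorem~\ref{thm:FdB} to $F\circ G=\mathrm{id}$, isolating the single summand $\bsalpha=(0,\dots,0,1)$ containing $D^nG(y)$, composing with $[DF(G(y))]^{-1}$, and inducting on $n$ with $C_1(a_0,a_1)=a_0$. The only difference is in the qualitative part, where the paper simply cites the Banach-space inverse function theorem (Deimling, Cor.~15.1) while you reprove it via the Banach fixed point theorem and a $C^k$ bootstrap through $DG=\mathrm{Inv}\circ DF\circ G$ --- a standard argument that is correct but not needed beyond the citation.
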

\begin{proof}
  The stated local invertibility of $F$ holds by the inverse function
  theorem in Banach spaces, see for instance
  \cite[Cor.~15.1]{deimling}.

  Next, if $F(G(y))=y$ in a neighbourhood of $y=F(x)$, then by
  Theorem ~\ref{thm:FdB}, for $2\le n\le k$
  \begin{equation}\label{eq:solveDngy}
    0 = D^n(F\circ G)(y) = \sum_{\bsalpha\in T_n}\frac{n!}{\bsalpha!}[D^{|\bsalpha|}F](G(y))\prod_{m=1}^n\left(\frac{[D^m G](y)}{m!}\right)^{\alpha_m},
  \end{equation}
  where $T_n = \setl{\bsalpha \in\N^n}{\sum_{j=1}^n j\alpha_j = n}$.
  The only multiindex with $\alpha_n\neq 0$ is
  $\bsalpha = (0,\ldots,0,1)$. Solving \eqref{eq:solveDngy} for
  $D^nG(y)$, we obtain with
  $\bar{T}_n  \coloneqq  \setl{\bsalpha \in\N^{n-1}}{\sum_{j=1}^{n-1}j\alpha_j=n}$ and
  $y=F(x)$,
  \begin{equation}\label{eq:Dng}
    D^nG(y) = -([DF](G(y)))^{-1} \left ( \sum_{\bsalpha\in \bar{T}_n}\frac{n!}{\bsalpha!}[D^{|\bsalpha|}F](G(y)) \left( \prod_{m=1}^{n-1} \left (\frac{[D^m G](y)}{m!} \right )^{\alpha_m}\right)\right).
  \end{equation}
  Since $G(y)=x$, for $n=1$, we have
  \begin{equation*}
    \|DG(y)\|_{\cL^1(Y;X)} = \|[DF(x)]^{-1}\|_{\cL^1(Y;X)}
  \end{equation*}
  and thus \eqref{eq:Cn} holds with
  \begin{equation*}
    C_1(\|[DF(x)]^{-1}\|_{\cL^1(Y;X)},\|DF(x)\|_{\cL^1(X;Y)}) \coloneqq 
    \|[DF(x)]^{-1}\|_{\cL^1(Y;X)}.
  \end{equation*}

  Since the right-hand side of \eqref{eq:Dng} only depends on $G$
  through $D^mG$ with $m\le n-1$ and on $F$ through $D^{|\bsalpha|}F$
  with $|\bsalpha|\le n$, an induction argument implies the existence
  of $C_n:\R_+^{n+1}\to\R_+$ as in \eqref{eq:Cn} for every $n\ge 2$.
\end{proof}

A technical lemma that upper bounds terms in Fa\'{a} di Bruno's formula:

\begin{lemma}\label{lemma:stirling}
For every $n\in\N$ holds $\sum_{\bsalpha\in T_n}
  \frac{n!}{\bsalpha!}\prod_{j=1}^n\frac{1}{(j!)^{\alpha_j}}\le n^n. $   
\end{lemma}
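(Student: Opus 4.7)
The plan is to recognize the left-hand side as a classical combinatorial quantity, namely the Bell number $B_n$, and then to bound $B_n$ by a simple counting argument.

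First, I would observe that for each $\bsalpha \in T_n$, the factor $\frac{n!}{\bsalpha! \prod_{j=1}^n (j!)^{\alpha_j}}$ is exactly the number of set partitions of $\{1,\dots,n\}$ into unordered blocks of prescribed sizes: $\alpha_1$ blocks of size $1$, $\alpha_2$ blocks of size $2$, and so on, up to $\alpha_n$ blocks of size $n$. Indeed, the multinomial coefficient $\frac{n!}{\prod_j (j!)^{\alpha_j}}$ counts ordered sequences of blocks of those sizes, and dividing by $\bsalpha! = \prod_j \alpha_j!$ accounts for the indistinguishability of blocks of the same size. Since the constraint $\sum_j j\alpha_j = n$ in $T_n$ is precisely the condition that the block sizes sum to $n$, summing over $\bsalpha \in T_n$ yields the total number of set partitions of $\{1,\dots,n\}$, i.e., the Bell number $B_n$.

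Next I would show $B_n \le n^n$ by exhibiting an injection from the set of partitions of $\{1,\dots,n\}$ into the set of functions $\{1,\dots,n\} \to \{1,\dots,n\}$. Given a partition $P$, label its blocks $B_1, B_2, \dots, B_k$ in order of their minimal elements (so $\min B_1 < \min B_2 < \dots < \min B_k$), and define $\varphi_P: \{1,\dots,n\} \to \{1,\dots,n\}$ by $\varphi_P(x) = i$ whenever $x \in B_i$. Distinct partitions give distinct functions, since one can recover $P$ from $\varphi_P$ via the preimages of the values in $\{1,\dots,k\}$. The total number of functions $\{1,\dots,n\} \to \{1,\dots,n\}$ is $n^n$, which gives the bound $B_n \le n^n$ and completes the proof.

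There is no real obstacle here; the only thing to be careful about is the identification of the multinomial factor with the number of set partitions of a given block-size profile, and the verification that the canonical labeling by smallest element produces a genuine injection into $\{1,\dots,n\}^n$.
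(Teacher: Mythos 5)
Your proof is correct and follows essentially the same route as the paper: both identify the sum as the number of set partitions of $\{1,\dots,n\}$ (the paper groups terms with $|\bsalpha|=k$ into Stirling numbers $S_n^k$, you read off the Bell number directly from the block-size profiles) and then bound that count by the number of functions $\{1,\dots,n\}\to\{1,\dots,n\}$. The only cosmetic difference is the last step: the paper invokes the identity $\sum_{k}\binom{n}{k}k!\,S_n^k=n^n$, whereas you build an explicit injection from partitions into $\{1,\dots,n\}^n$ via the canonical labeling by smallest block elements, which is a slightly more self-contained way of getting the same $n^n$ bound.
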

\begin{proof}
The sum
  $\sum_{\setl{\bsalpha\in T_n}{|\bsalpha|=k}}
  \frac{n!}{\bsalpha!}\prod_{j=1}^n\frac{1}{(j!)^{\alpha_j}}$ is equal
  to the so-called Stirling number of the second kind, $S_n^k$ (\cite{FaadiBruno}). Therefore, all we need to do is to upper bound $\sum_{k=1}^nS_n^k$. 

Note $S_n^k$ denote the number of ways to distribute $n$ distinct items into $k$ non-distinct boxes such that each box contains at least one item. Then, $k!S_n^k$ is the number of ways to distribute $n$ distinct items into $k$ distinct boxes such that none of the boxes are empty. The total number of ways to distribute $n$ distinct items into $n$ distinct boxes is given by $n^n$ (without the restriction that the boxes are nonempty). Therefore, we have 
$$\sum_{k=1}^n{n\choose k}k! S_n^k = n^n,$$
and it follows that $\sum_{k=1}^nS_n^k \leq n^n$. 
\end{proof}

We also collect several auxiliary results about norm of matrices and their inverses. 
\begin{lemma}\label{lemma:blockTriangle}
Suppose $M$ is a block triangular matrix: \[
   M =
   \begin{pmatrix}
   A & B \\
   0 & D 
   \end{pmatrix}
\]
where $A, D$ are invertible. Then,
$$\|M^{-1}\|_2 
\leq \|D^{-1}\|_2 + \|A^{-1}BD^{-1}\|_2 + \|A^{-1}\|_2.$$
\end{lemma}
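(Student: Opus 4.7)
The plan is to compute $M^{-1}$ explicitly via the standard block-triangular inversion formula and then bound its spectral norm by the sum of the norms of its three nonzero blocks. First, I would verify directly (by multiplying out $MM^{-1}$ and $M^{-1}M$) that under the invertibility assumptions on $A$ and $D$,
\begin{equation*}
  M^{-1} = \begin{pmatrix} A^{-1} & -A^{-1}BD^{-1} \\ 0 & D^{-1} \end{pmatrix}.
\end{equation*}

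Next, I would decompose $M^{-1}$ as a sum of three matrices, each containing a single nonzero block at the appropriate position and zeros elsewhere. By the triangle inequality for the operator norm,
\begin{equation*}
  \|M^{-1}\|_2 \leq \left\|\begin{pmatrix} A^{-1} & 0 \\ 0 & 0 \end{pmatrix}\right\|_2
  + \left\|\begin{pmatrix} 0 & -A^{-1}BD^{-1} \\ 0 & 0 \end{pmatrix}\right\|_2
  + \left\|\begin{pmatrix} 0 & 0 \\ 0 & D^{-1} \end{pmatrix}\right\|_2.
\end{equation*}

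Finally, I would use the elementary fact that embedding a matrix $P$ into a larger block matrix whose other entries are zero leaves the spectral norm unchanged: such a block acts nontrivially only on a coordinate subspace, and restricting to that subspace recovers $\|P\|_2$. Applying this observation to each of the three summands above yields
\begin{equation*}
  \|M^{-1}\|_2 \leq \|A^{-1}\|_2 + \|A^{-1}BD^{-1}\|_2 + \|D^{-1}\|_2,
\end{equation*}
which is the claimed inequality. There is no substantive obstacle here; the main task is simply bookkeeping of the block decomposition and justifying the padding-invariance of the spectral norm.
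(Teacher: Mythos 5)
Your proposal is correct and follows essentially the same route as the paper: write down the block-triangular inversion formula for $M^{-1}$ and apply the triangle inequality to the three nonzero blocks. The only difference is that you spell out the zero-padding invariance of the spectral norm, which the paper leaves implicit.
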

\begin{proof}
  Since $A, D$ are invertible, the inverse of $M$ is
\begin{equation*}
  M^{-1} = 
                       \begin{pmatrix}
                        A^{-1} & -A^{-1}BD^{-1} \\
                        0 & D^{-1}
                        \end{pmatrix}.
\end{equation*}
The claim follows from the triangle inequality.
\end{proof}

\begin{lemma}\label{lemma:MatrixInverseNorm}
  Let $A\in\R^{d\times d}$ be regular. Then
  $\|A^{-1}\|_2\le \frac{\|A\|_2^{d-1}}{|\det(A)|}$.
\end{lemma}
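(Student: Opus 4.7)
The plan is to establish the bound via the singular value decomposition. Since $A\in\R^{d\times d}$ is regular, let $\sigma_1\ge \sigma_2\ge \cdots\ge \sigma_d>0$ denote its singular values. Then three standard identifications apply: the spectral norm equals the largest singular value, $\|A\|_2=\sigma_1$; the spectral norm of the inverse equals the reciprocal of the smallest singular value, $\|A^{-1}\|_2=1/\sigma_d$; and the modulus of the determinant equals the product of all singular values, $|\det(A)|=\prod_{j=1}^d \sigma_j$.

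With these in hand, the desired inequality becomes
\begin{equation*}
\frac{1}{\sigma_d}\le \frac{\sigma_1^{d-1}}{\sigma_1\sigma_2\cdots\sigma_d},
\end{equation*}
which after multiplying both sides by $\sigma_d$ and by the denominator reduces to
\begin{equation*}
\sigma_1\sigma_2\cdots\sigma_{d-1}\le \sigma_1^{d-1}.
\end{equation*}
This last inequality holds termwise, since $\sigma_j\le \sigma_1$ for every $j=1,\dots,d-1$ by the ordering of the singular values.

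There is no real obstacle here; the only minor point is to justify the three SVD identifications, each of which is a textbook fact. Alternatively, one could give a determinant-free proof using the adjugate formula $A^{-1}=\adj(A)/\det(A)$ together with an entrywise bound on cofactors via Hadamard's inequality, but the SVD argument above is more transparent and gives the sharp exponent $d-1$ directly.
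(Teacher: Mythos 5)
Your proof is correct and follows essentially the same route as the paper, which also reduces the claim to the singular value identities $\|A\|_2=\sigma_1$, $\|A^{-1}\|_2=1/\sigma_d$, $|\det(A)|=\prod_i\sigma_i$ and the termwise bound $\sigma_i\le\sigma_1$. Your write-up is in fact slightly cleaner, as the paper's displayed chain contains an index slip (it ends with $1/\sigma_1$ where $1/\sigma_d=\|A^{-1}\|_2$ is meant), which your reduction to $\sigma_1\cdots\sigma_{d-1}\le\sigma_1^{d-1}$ avoids.
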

\begin{proof}
  Denote the singular values of $A$ by
  $\sigma_1\ge \sigma_2\ge\cdots\ge\sigma_d>0$. Then
  \begin{equation*}
    \frac{\|A\|_2^{d-1}}{|\det(A)|} =
    \frac{\sigma_1^{d-1}}{
      \prod_{i=1}^d\sigma_i} =
    \left(\prod_{i=1}^{d-1}\frac{\sigma_1}{\sigma_i}\right)\frac{1}{\sigma_1}
    \ge \frac{1}{\sigma_1} = \|A^{-1}\|_2. 
  \end{equation*}
\end{proof}

\begin{lemma}\label{lemma:NormTtinverse}
  Let $T\in C^1(\Omega_0,\Omega_1)$ be a monotonic triangular map. Moreover, assume $\nabla T(x)$ has positive eigenvalues
  $\lambda_j(x)$ for $j = 1,\cdots, d$ and all $x\in\Omega_0$.
  Then for all $t\in [0,1]$ and all $x\in\Omega_0$
  \begin{equation*}
    \|(\nabla_x T_t(x))^{-1}\|_{2} = \|((1-t)I + t\nabla_x
    T(x))^{-1}\|_{\jz{2}} \le
    \frac{
      \max\{1,\|T\|_{C^1(\Omega_0)}\}^{d-1}
    }{\min\{1, \inf_{x\in\Omega_0}\det dT(x)}\}.
  \end{equation*}
\end{lemma}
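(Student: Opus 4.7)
The plan is to apply Lemma \ref{lemma:MatrixInverseNorm} to the matrix $A \coloneqq \nabla_x T_t(x) = (1-t)I + t\nabla T(x)$, which reduces the task to separately bounding $\|A\|_2$ from above and $|\det A|$ from below by the quantities appearing in the right-hand side. The operator-norm bound is easy: by the triangle inequality and since $\|I\|_2 = 1$,
\begin{equation*}
\|A\|_2 \leq (1-t) + t\|\nabla T(x)\|_2 \leq (1-t) + t\|T\|_{C^1(\Omega_0)} \leq \max\{1, \|T\|_{C^1(\Omega_0)}\},
\end{equation*}
for every $t\in[0,1]$, which after raising to the power $d-1$ produces the numerator.

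For the determinant, I would use that $T$ is monotone triangular to conclude that $\nabla T(x)$ is lower triangular with diagonal entries equal to its eigenvalues $\lambda_j(x)>0$; consequently $A$ is lower triangular with diagonal $(1-t)+t\lambda_j(x)$, so $\det A = \prod_{j=1}^d((1-t)+t\lambda_j(x))$. I would then define $q(t) \coloneqq \log\det A = \sum_{j=1}^d \log(1+t(\lambda_j(x)-1))$, which is well-defined on $[0,1]$ because $\lambda_j(x)>0$ keeps each argument strictly positive, and concave in $t$ as a sum of concave functions. Since $q(0)=0$ and $q(1) = \log\det\nabla T(x)$, concavity of $q$ gives the chord lower bound $q(t)\geq t\log\det\nabla T(x)$ for all $t\in[0,1]$. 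A short case split on whether $\det\nabla T(x)\geq 1$ or $<1$ then yields $\det A \geq \min\{1,\det\nabla T(x)\}$, and taking infimum over $x\in\Omega_0$ establishes $\det A \geq \min\{1,\inf_{x\in\Omega_0}\det\nabla T(x)\}$.

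Combining the two bounds via Lemma \ref{lemma:MatrixInverseNorm} (applied with dimension $d$) immediately delivers the claimed inequality. The main conceptual obstacle to watch for is that a purely factor-by-factor lower bound $(1-t)+t\lambda_j \geq \min\{1,\lambda_j\}$ would only yield $\det A \geq \prod_j\min\{1,\lambda_j\}$, which is strictly weaker than $\min\{1,\prod_j\lambda_j\}$ whenever some eigenvalues lie above and others below $1$ (e.g.\ $\lambda_1=1/2$, $\lambda_2=3$ gives $1/2$ versus $1$). The concavity of $q$ is precisely what prevents this loss and produces the correct combined bound. Everything else is routine: the triangular structure, the formula for $\det A$, and the final invocation of Lemma \ref{lemma:MatrixInverseNorm}.
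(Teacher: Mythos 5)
Your proposal is correct and shares the paper's overall skeleton: both invoke Lemma \ref{lemma:MatrixInverseNorm}, bound $\|\nabla_x T_t(x)\|_2$ by $\max\{1,\|T\|_{C^1(\Omega_0)}\}$ via the triangle inequality, and use the triangular structure to write $\det\nabla_x T_t(x)=\prod_{j=1}^d\bigl((1-t)+t\lambda_j(x)\bigr)$. The only genuine difference is how the determinant lower bound $\det\nabla_x T_t(x)\ge\min\{1,\det\nabla T(x)\}$ is obtained. The paper applies Jensen's inequality to the convex function $s\mapsto\log(1-t+\e^{s})$ across the eigenvalue index, which yields the intermediate estimate $\det\nabla_x T_t(x)\ge\bigl(1-t+t\,\det(\nabla T(x))^{1/d}\bigr)^d$ before passing to the minimum. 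You instead exploit concavity in the \emph{time} variable of $q(t)=\log\det\nabla_x T_t(x)$ (each summand being $\log$ of an affine function of $t$), giving the chord bound $\det\nabla_x T_t(x)\ge(\det\nabla T(x))^t$, from which the same conclusion follows by the case split on $\det\nabla T(x)\gtrless 1$. Both routes are valid and of comparable length; yours is arguably slightly more direct, since the one-dimensional concavity in $t$ avoids the geometric-mean intermediate step, while the paper's version makes the AM--GM-type structure across eigenvalues explicit. Your remark that a factor-by-factor bound $(1-t)+t\lambda_j\ge\min\{1,\lambda_j\}$ would only give the weaker $\prod_j\min\{1,\lambda_j\}$ is a correct and relevant observation, and it is precisely the loss both convexity arguments are designed to avoid.
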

\begin{proof}
  By Lemma \ref{lemma:MatrixInverseNorm}, for every
  $(x,t)\in\Omega_0\times [0,1]$
  \begin{equation*}
    \|(\nabla_x T_t(x))^{-1}\|_2\leq \frac{\|\nabla_xT_t(x)\|_{2}^{d-1}}{|\det(T_t(x))|}.
  \end{equation*}
  We have
  $\|\nabla T_t(x)\|_{2} \leq (1-t) + t\|\nabla
  T(x)\|_{2} \le \max\{1,\|T\|_{C^1(\Omega_0)}\}$. For
  $s\in [0,1]$, $t\in (0,1)$ set $g_t(s) := \log(1-t+\e^{s})$. This
  function is convex in $s\in[0,1]$. Thus
  \begin{equation*}
    g_t\left(\frac{1}{d} \sum_{i=1}^d\log (t\lambda_i(x))\right)\le
    \frac{1}{d} \sum_{i=1}^d g_t(\log (t\lambda_i(x))),
  \end{equation*}
  and therefore
  \begin{align*}
    \sum_{i=1}^d \log(1-t+t\lambda_i(x)) &\ge d \log\Bigg(1-t +
                                           t\Bigg(\prod_{i=1}^d \lambda_i(x)\Bigg)^{1/d}\Bigg)\nonumber\\
                                         &= d\log(1-t+t\det(\nabla
                                           T(x))^{1/d}).
  \end{align*}
  Taking the exponential on both sides, we conclude that
  \begin{equation*}
    \det((1-t)I + t\nabla T(x)) \geq (1-t+t\det(\nabla T(x))^{1/d})^d \geq \min\{1, \det dT\jz{(x)}\}.
  \end{equation*}
  Finally, Lemma \ref{lemma:MatrixInverseNorm} gives the result.
\end{proof}

Finally, a technical lemma about the upper bounds on the
$C^k$ norm of the reciprocal of a $C^k$ function bounded away from
zero.  
\begin{lemma}\label{lemma:NormReciprocal}
  Let $k \in \mathbb{N}, k \geq 1$ and $f \in C^k(D)$ for domain $D\subset\mathbb{R}^d$ such that
  $\inf_{x\in D} f(x) > C_2$ for some constant $C_2 > 0$. Assume
  further $\|f\|_{C^k(D)} \leq C_1$ for another constant $C_1$ > 0. Then, it
  holds that
$$\|\frac{1}{f}\|_{C^k(D)} \leq C\frac{C_1^k}{C_2^{k+1}}$$
for some constant $C$ that depends on $k$ but independent of $f$,
$C_1$, $C_2$.
\end{lemma}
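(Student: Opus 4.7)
The plan is to apply Faà di Bruno's formula (Theorem \ref{thm:FdB}) with $G:(0,\infty)\to\R$ defined by $G(y)=1/y$, so that $G\circ f=1/f$, and then exploit the explicit form $G^{(m)}(y)=(-1)^m m!/y^{m+1}$ together with the bounds on $f$. Since $G$ is scalar-valued, each multilinear factor $D^{|\bsalpha|}G(f(x))$ acts as multiplication by the scalar $G^{(|\bsalpha|)}(f(x))$, and the lower bound on $f$ gives $|G^{(m)}(f(x))|\le m!/C_2^{m+1}\le k!/C_2^{m+1}$ for every $m\le k$ and $x\in D$. Combined with the bounds $\|D^j f(x)\|_{\cL^j_{\sym}}\le\|f\|_{C^k(D)}\le C_1$ for $1\le j\le k$, Faà di Bruno's formula then yields, for every $1\le n\le k$ and $x\in D$,
\begin{equation*}
\|D^n(1/f)(x)\|_{\cL^n_{\sym}(\R^d;\R)}
\le \sum_{\bsalpha\in T_n}\frac{n!\,|\bsalpha|!}{\bsalpha!}\,\frac{C_1^{|\bsalpha|}}{C_2^{|\bsalpha|+1}}\,\prod_{j=1}^n\frac{1}{(j!)^{\alpha_j}}.
\end{equation*}

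The key observation to consolidate the exponents is that $C_1\ge C_2$. Indeed, since $f$ is positive and $C_1\ge\|f\|_{C^0(D)}\ge\inf_{x\in D}f(x)>C_2$, we have $C_1/C_2\ge 1$. Since $|\bsalpha|\le n\le k$ for $\bsalpha\in T_n$, it follows that $(C_1/C_2)^{|\bsalpha|}\le(C_1/C_2)^k$, and hence $C_1^{|\bsalpha|}/C_2^{|\bsalpha|+1}\le C_1^k/C_2^{k+1}$. Plugging this in and using $|\bsalpha|!\le k!$ together with Lemma \ref{lemma:stirling} (which gives $\sum_{\bsalpha\in T_n}(n!/\bsalpha!)\prod_j(j!)^{-\alpha_j}\le n^n\le k^k$), I obtain $\|D^n(1/f)(x)\|_{\cL^n_{\sym}}\le k!\,k^k\,C_1^k/C_2^{k+1}$. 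The case $n=0$ is immediate from $|1/f(x)|\le 1/C_2\le C_1^k/C_2^{k+1}$ (again using $C_1\ge C_2$). Converting from the symmetric norm $\cL^n_{\sym}$ to the operator norm $\cL^n$ used in the definition \eqref{eq:Ck} of $\|\cdot\|_{C^k}$ via \eqref{eq:symnorm} costs at most a factor of $\exp(k)$, giving the claim with $C=\exp(k)\,k!\,k^k$.

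I do not anticipate any serious obstacle, since all the analytic ingredients, namely the Faà di Bruno formula, the combinatorial bound of Lemma \ref{lemma:stirling}, and the equivalence of symmetric and full multilinear norms, are already assembled in Section \ref{sec:regularity}. The only subtlety worth flagging is the observation that $C_1\ge C_2$ is automatic under the hypotheses; this is what allows the variable exponent $|\bsalpha|$ appearing in the Faà di Bruno sum to be replaced uniformly by the single exponent $k$, yielding the clean final bound $C_1^k/C_2^{k+1}$ rather than a more cumbersome expression depending on $|\bsalpha|$.
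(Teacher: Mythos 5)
Your proof is correct, but it follows a genuinely different route from the paper. The paper argues by induction on the order of the derivative, as in the quotient-rule argument of \cite[Lemma C.4(iii)]{ZM1}: one shows $D^n(1/f)=p_n/f^{n+1}$ where $p_n$ satisfies $\|p_n\|_{C^{k-n}(D)}\le C\,\|f\|_{C^k(D)}^n$, the induction step being $p_{n+1}=fDp_n+(n+1)p_nDf$, which immediately yields $\|D^n(1/f)\|\le C\,C_1^n/C_2^{n+1}$ and hence the claim. You instead apply the Banach-space Fa\`a di Bruno formula (Theorem \ref{thm:FdB}) to $G(y)=1/y$, use the explicit derivatives $G^{(m)}(y)=(-1)^m m!/y^{-(m+1)}$ bounded via $\inf f>C_2$, and control the combinatorial sum with Lemma \ref{lemma:stirling}; this is sound, and your consolidation of the variable exponent $|\bsalpha|$ into the single exponent $k$ via the (correct, and easily overlooked) observation $C_1\ge\|f\|_{C^0(D)}\ge\inf_D f>C_2$ is exactly the step that the paper's route also uses implicitly when passing from $C_1^n/C_2^{n+1}$ to $C_1^k/C_2^{k+1}$. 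Your approach buys an explicit constant $C=\e^k\,k!\,k^k$ and reuses machinery already assembled in Section \ref{sec:genreg} (Fa\`a di Bruno, Lemma \ref{lemma:stirling}, and the norm equivalence \eqref{eq:symnorm}), while the paper's induction is more elementary and self-contained, requiring only the product rule for $C^m$ norms. The only point worth flagging is that $G(y)=1/y$ is defined only on $(0,\infty)$ rather than all of $\R$, so you are invoking the local (open-set) version of Fa\`a di Bruno; since the formula is a pointwise identity and the paper itself applies it on open subsets (cf.\ Corollary \ref{cor:composition}), this is not a gap.
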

\begin{proof}
  We proceed as in \cite[Lemma C.4 (iii)]{ZM1}.  Let
  $n\in\mathbb{N}$ be an integer such that $0 \leq n \leq k$. We shall
  show by induction that $D^n(\frac{1}{f}) = \frac{p_n}{f^{n+1}}$ for
  some function $p_n$ such that
  $\|p_n\|_{C^{k-n}(D)} \leq C(\|f\|_{C^k(D)})^n$.

  When $n = 1$, it holds that $D(\frac{1}{f}) = \frac{-Df}{f^2}$ and
  thus $\|D(\frac{1}{f})\|_{C^{k-1}(D)} \leq \frac{C_1}{C_2^2}$.

  Assume the induction hypothesis holds, for $n+1$, we have
  $$D^{n+1}(\frac{1}{f}) = D(\frac{p_n}{f^{n+1}}) = \frac{f^{n+1}Dp_n
    + p_n(n+1)f^nDf}{f^{2n+2}} = \frac{fDp_n + (n+1)p_nDf}{f^{n+2}} :=
  \frac{p_{n+1}}{f^{n+2}}.$$

  Then it holds that
  \begin{align*}
    \|p_{n+1}\|_{C^{k-n-1}(D)} &= \|fDp_n + (n+1)p_nDf\|_{C^{k-n-1}(D)}\\
                                    &\leq C(n+2)\|f\|_{C^k(D)}\|p_n\|_{C^{k-n}}\\
                                    &\leq C(n+2)(\|f\|_{C^k(D)})^{n+1}.
  \end{align*}

  Therefore, we have
  $\|\frac{1}{f}\|_{C^k(D)} \leq C\frac{C_1^k}{C_2^{k+1}}$ for
  some constant $C$ that depends on $k$ but independent of $f$, $C_1$,
  $C_2$.
\end{proof}

\bibliography{ref.bib}

\end{document}